\documentclass{article}

\PassOptionsToPackage{numbers, compress}{natbib}

\usepackage[final]{neurips_2024}

\usepackage[utf8]{inputenc} %
\usepackage[T1]{fontenc}    %
\usepackage{hyperref}       %
\usepackage{url}            %
\usepackage{booktabs}       %
\usepackage{amsfonts}       %
\usepackage{nicefrac}       %
\usepackage{microtype}      %
\usepackage[dvipsnames]{xcolor}        %
\usepackage[singlelinecheck=false]{caption}  %
\usepackage{authblk}
\usepackage[ruled,vlined]{algorithm2e}
\usepackage{algorithmic}

\newcommand{\methodname}{Scale Equivariant Graph MetaNetworks}
\newcommand{\acronym}{ScaleGMN}
\newcommand{\cifar}{CIFAR-10}

\usepackage{amsmath}
\usepackage{amssymb}
\usepackage{bm}
\usepackage{stmaryrd,scalerel}
\usepackage{amsthm}
\usepackage{xr}
\usepackage{upgreek}
\usepackage[capitalize]{cleveref}

\makeatletter
\newcommand*{\addFileDependency}[1]{%
  \typeout{(#1)}
  \@addtofilelist{#1}
  \IfFileExists{#1}{}{\typeout{No file #1.}}
}
\makeatother

\newcommand{\ie}{i.e. }
\newcommand{\eg}{e.g. }

\usepackage{amsthm}
\theoremstyle{plain}
\newtheorem{theorem}{Theorem}[section]
\newtheorem{proposition}[theorem]{Proposition}

\newtheorem{corollary}[theorem]{Corollary}
\theoremstyle{definition}

\theoremstyle{remark}

\newcommand{\colorinit}[1]{{\color{Brown}#1}}
\newcommand{\colornb}[1]{{\color{Bittersweet}#1}}
\newcommand{\colormsg}[1]{{\color{Blue}#1}}
\newcommand{\colorupd}[1]{{\color{ForestGreen}#1}}
\newcommand{\colorread}[1]{{\color{Fuchsia}#1}}

\newcommand{\colorfirst}[1]{{\textbf{\color{Black}#1}}}
\newcommand{\colorsecond}[1]{{\color{Blue}#1}}
\newcommand{\colorthird}[1]{{\color{Green}#1}}

\newcommand{\intert}{I}
\newcommand{\intertmap}{\phi}

\newcommand{\gradz}[2]{\nabla_{\bz_{#1}(\posi{#2})}\loss}
\newcommand{\gradx}[2]{\nabla_{\bx_{#1}(\posi{#2})}\loss}
\newcommand{\loss}{\mathcal{L}}
\newcommand{\signal}{u}
\newcommand{\hofunparams}{f}
\newcommand{\hofun}{\mathfrak{F}}
\newcommand{\trans}{\psi}

\newcommand{\posl}[1]{\text{layer}\left(#1\right)}
\newcommand{\posi}[1]{\text{pos}\left(#1\right)}

\newcommand{\permind}[2]{\pi_{#1}\left(#2\right)}

\newcommand{\hv}{\bh_V}%

\newcommand{\xv}{\bx_V}
\newcommand{\pv}{\bp_V}

\newcommand{\mv}{\b{m}_V}

\newcommand{\xe}{\bx_E}
\newcommand{\he}{\bh_E}%

\newcommand{\pe}{\bp_E}

\newcommand{\symm}{\mathsf{symm}}
\newcommand{\canon}{\mathsf{canon}}
\newcommand{\scinvnet}{\mathsf{ScaleInv}}
\newcommand{\augscinvnet}{\mathsf{AugScaleInv}}
\newcommand{\sceqlayer}{\mathsf{f}}
\newcommand{\sceqnet}{\mathsf{ScaleEq}}
\newcommand{\augsceqnet}{\mathsf{AugScaleEq}}
\newcommand{\resceqnet}{\mathsf{ReScaleEq}}

\DeclareMathOperator{\argmin}{argmin}

\usepackage{mathtools}

\renewcommand{\b}[1]{\mathbf{#1}}
\newcommand{\bA}{\b{A}}
\newcommand{\bB}{\b{B}}

\newcommand{\bGamma}{\b{\Gamma}}

\newcommand{\bI}{\b{I}}

\newcommand{\bO}{\b{O}}
\newcommand{\bP}{\b{P}}
\newcommand{\bQ}{\b{Q}}

\newcommand{\bX}{\b{X}}

\newcommand{\bW}{\b{W}}

\newcommand{\ba}{\b{a}}
\newcommand{\bb}{\b{b}}

\newcommand{\be}{\b{e}}
\newcommand{\bh}{\b{h}}

\newcommand{\bp}{\b{p}}

\newcommand{\bx}{\b{x}}
\newcommand{\by}{\b{y}}
\newcommand{\bz}{\b{z}}
\newcommand{\bw}{\b{w}}
\newcommand{\bth}{\boldsymbol{\theta}}

\newcommand{\cE}{\mathcal{E}}
\newcommand{\cF}{\mathcal{F}}
\newcommand{\cG}{\mathcal{G}}

\newcommand{\cN}{\mathcal{N}}

\newcommand{\cV}{\mathcal{V}}
\newcommand{\cX}{\mathcal{X}}
\newcommand{\cY}{\mathcal{Y}}

\DeclareMathAlphabet{\mathsfit}{\encodingdefault}{\sfdefault}{m}{sl}
\SetMathAlphabet{\mathsfit}{bold}{\encodingdefault}{\sfdefault}{bx}{n}
\newcommand{\tens}[1]{\bm{\mathsfit{#1}}}
\def\tX{{\tens{X}}}

\usepackage{mathtools}

\title{Scale Equivariant Graph Metanetworks}

\author{
\textbf{Ioannis Kalogeropoulos}$^{1,2}$\thanks{Equal contribution. Corrrespondence to ioakalogero@di.uoa.gr, g.bouritsas@athenarc.gr}\quad \textbf{Giorgos Bouritsas}$^{1,2}$\protect\footnotemark[1] \quad \textbf{Yannis Panagakis}$^{1,2}$
}
\affil{$^{1}$National and Kapodistrian University of Athens\quad $^{2}$Archimedes/Athena RC, Greece }

\begin{document}

\maketitle

\setcounter{footnote}{0}
\begin{abstract}
This paper pertains to an emerging machine learning paradigm: learning higher-order functions, i.e. functions whose inputs are functions themselves, \textit{particularly when these inputs are Neural Networks (NNs)}. 
With the growing interest in architectures that process NNs, a recurring design principle has permeated the field:
adhering to the permutation symmetries arising from the connectionist structure of NNs. \textit{However, are these the sole symmetries present in NN parameterizations}? Zooming into most practical activation functions (\eg sine, ReLU, tanh) answers this question negatively and gives rise to intriguing new symmetries, which we collectively refer to as \textit{scaling symmetries}, that is, non-zero scalar multiplications and divisions of weights and biases. 
In this work,  we propose \textit{\methodname\ - \acronym s}, a framework that adapts the Graph Metanetwork (message-passing) paradigm by incorporating scaling symmetries and thus rendering neuron and edge representations equivariant to valid scalings.
We introduce novel building blocks, of independent technical interest, that allow for equivariance or invariance with respect to individual scalar multipliers or their product and use them in all components of \acronym. Furthermore, we prove that, under certain expressivity conditions, \acronym\ can simulate the forward and backward pass of any input feedforward neural network. Experimental results demonstrate that our method advances the state-of-the-art performance for several datasets and activation functions, highlighting the power of scaling symmetries as an inductive bias for NN processing.
The source code is publicly available at \href{https://github.com/jkalogero/scalegmn}{https://github.com/jkalogero/scalegmn}.

\end{abstract}

\section{Introduction}\label{sec:intro}
Neural networks are becoming the workhorse of problem-solving across various domains. To solve a task, they acquire rich information which is stored in their learnable parameters throughout training. Nonetheless, this information is often opaque and difficult to interpret. This begs the question:
\textit{How can we efficiently process and extract insights from the information stored in the parameters of trained neural networks, 
and how to do so
in a data-driven manner?} 
In other words, can we devise architectures that 
learn to process 
other neural architectures?

The need to address this question arises in diverse scenarios:  \textit{NN post-processing}, \eg analysis/interpretation (\ie inferring NN properties, such as generalisation and robustness \cite{unterthiner2020predicting, EilertsenJRUY20}), as well as editing (\eg model pruning \cite{frankle2018the}, merging \cite{wortsman2022model} or adaptation to new data) - or \textit{NN synthesis} (\eg for optimisation \cite{chen2022learning} or more generally parameter prediction/generation \cite{knyazev2021parameter,
schurholt2022hyper,knyazev2023can,Peebles2022}). Furthermore, with the advent of \textit{Implicit Neural Representations},\footnote{\underline{INR}: \textit{A \textit{coordinate-processing} NN ``overfitted'' on a given signal, \eg
image, 
3D shape, physical quantity.}} \cite{chen2019learning, mescheder2019occupancy, park2019deepsdf, sitzmann2020implicit, MildenhallSTBRN22} trained NN parameters are increasingly used to represent datapoint \textit{signals},
such as images or 3D shapes, replacing raw representations, i.e., pixel grids or point clouds \cite{pmlr-v162-dupont22a}. 
Consequently, many tasks involving such data, across various domains such as computer vision \cite{LuigiCSRSS23} and physics \cite{serrano2024operator}, 
which are currently tackled using domain-specific architectures (\eg CNNs for grids, PointNets \cite{qi2017pointnet} or GNNs \cite{PfaffFSB21} for point clouds and meshes), could potentially be solved by \textit{NNs that process the parameters of other NNs}.

The idea of processing/predicting NN parameters per se is not new
to the deep learning community, \eg it has been investigated in \textit{hypernetworks} \cite{ha2017hypernetworks} or even earlier \cite{schmidhuber1992learning}. However recent advancements, driven by the need for INR processing, leverage a crucial observation: \textit{NNs have symmetries}. In particular, several works have identified that the function represented by an NN remains intact when applying certain transformations to its parameters \cite{hecht1990algebraic, chen1993geometry} 
with the most well-known 
transformations being \textit{permutations}. That is, hidden neurons can be arbitrarily permuted within the same layer, along with their incoming and outgoing weights. Therefore, all the tasks mentioned in the previous paragraph can be considered part of \textit{equivariant machine learning} \cite{bronstein2021geometric}, as they involve developing models that are invariant or equivariant to the aforementioned transformations of neural networks.
Recently \citet{navon2023equivariant} and \citet{zhou2023neural} acknowledged the importance of permutation symmetries and first proposed equivariant architectures for Feedforward NN (FFNN) processing, 
demonstrating significant performance improvements against non-equivariant baselines.
These works, as well as their improvements \cite{zhou2024permutation} and extensions to process more intricate and varying architectures \cite{kofinas2024graph, lim2024graph, zhou2024universal}, are collectively known as \textit{weight space networks} or \textit{metanetworks}, with \textit{graph metanetworks} being a notable subcase (graph neural networks with 
message-passing).

Nevertheless, it is known that permutations are \textit{not} the only applicable symmetries. In particular, theoretical results in various works \cite{chen1993geometry, phuong2020functional}, mainly unified in \cite{godfrey2022symmetries} show that FFNNs exhibit additional symmetries, which we collectively here refer to as \textit{scaling symmetries}: 
multiplying the incoming weights and the bias of a hidden neuron with a non-zero scalar $a$ (with certain properties) while dividing its outgoing weights with another scalar $b$ (often $b = a$), preserves the NN function. Intuitively, permutation symmetries arise from the \textit{graph/connectionist structure} of the NN whereas different scaling symmetries originate from its \textit{activation functions} $\sigma$, \ie when it holds that $\sigma(ax) = b\sigma(x)$.
However, equivariance w.r.t. scaling symmetries has not received much attention so far (perhaps with the exception of \textit{sign symmetries} - $a \in \{-1,1\}$ \cite{lim2023sign, lim2023expressive}), and it remains effectively unexplored in the context of metanetworks. 

To address this gap, we introduce in this paper a graph metanetwork framework, dubbed as  \textit{\methodname - \acronym}, which guarantees equivariance to permutations \textit{and} desired scaling symmetries, and can process FFNNs of arbitrary graph structure with a variety of activation functions.
At the heart of our method lie novel building blocks with scale invariance/equivariance properties w.r.t. arbitrary families of scaling parameters. We prove that \acronym\ can simulate the forward and backward pass of an FFNN for arbitrary inputs, enabling it to reconstruct the function represented by any input FFNN and its gradients.

Our contributions can be summarised as follows:
\begin{itemize}
    \item We extend the scope of metanetwork design from permutation to scaling symmetries.
    \item We design invariant/equivariant networks to scalar multiplication of individual multipliers or combinations thereof, originating from arbitrary scaling groups.
    \item We propose scale equivariant message passing, using the above as building blocks, unifying permutation and scale equivariant processing of FFNNs. Additionally, the expressive power of our method is analysed w.r.t. its ability to simulate input FFNNs.
    \item 
    Our method is evaluated on 3 activations: \texttt{ReLU} \textit{(positive scale)}, \texttt{tanh} \textit{(sign)} and \texttt{sine} (\textit{sign} - we first characterise this using a technique from \cite{godfrey2022symmetries}) on several datasets and three tasks (INR classification/editing \& generalisation prediction), demonstrating superior performance against common metanetwork baselines.
\end{itemize}

\section{Related work}
\label{sec:related_work}

\noindent\textbf{Neural network symmetries. }
Long preceding the advent of metanetworks, numerous studies delved into the inherent symmetries of neural networks. \citet{hecht1990algebraic} studied FFNNs and discovered the existence of permutation symmetries.
\citet{chen1993geometry} showed that, in FFNNs with tanh activations, the only function-preserving smooth transformations are permutations and sign flips (multiplication of incoming and outgoing weights with a sign value) - this claim was strengthened in \cite{fefferman1993recovering}; this was the first identification of scaling symmetries.
Follow-up works extended these observations to other activations, such as \textit{sigmoid and RBF} \cite{kuurkova1994functionally} and \textit{ReLU} \cite{neyshabur2015path, MengZZCYMYL19, phuong2020functional, rolnick2020reverse, grigsby2023hidden}, and architectures, such as RNNs \cite{albertini1993neural, albertini1993identifiability}, characterising other scaling symmetries and providing conditions under which permutations and scalings are the only available function-preserving symmetries or the parameters are identifiable given the input-output mapping. Recently, \citet{godfrey2022symmetries} provided a technique to characterise such symmetries for arbitrary activations that respect certain conditions, unifying many previous results. Additionally, symmetries have been found in other layers, \eg batch norm \cite{badrinarayanan2015understanding, cho2017riemannian, dinh2017sharp} (scale invariance) or softmax \cite{kunin2021neural} (translation invariance).  Prior to metanetworks, symmetries were mainly studied to obtain a deeper understanding of NNs or in the context of optimisation/learning dynamics \cite{neyshabur2015path, StockGGJ19, amari1998natural, du2018algorithmic, kunin2021neural, armenta2023neural,
zhao2022symmetry,
zhao2024improving} and/or model merging/ensembling \cite{EntezariSSN22, AinsworthHS23, singh2020model, pena2023re, navonSFCDM24}.

\noindent\textbf{Metanetworks. }
The first solutions proposed for NN processing and learning did not account for NN symmetries.
\citet{unterthiner2020predicting} and \citet{EilertsenJRUY20}, initially employed standard NNs on vectorised (flattened) CNN weights or some statistics thereof, to predict properties of trained neural networks 
such as generalization
or estimation of a hyperparameter resp.). 
Similar methods
have been proposed 
to process continuous data represented as INRs. In \cite{XuW0FW22} 
high-order spatial derivatives are used 
(suitable only for INRs), 
in \cite{LuigiCSRSS23} the architecture operates on stacked parameter vectors 
(but is constrained by the assumption that all INRs are trained from the same initialization), while in \cite{pmlr-v162-dupont22a} and \cite{spatialfuncta23} the authors learn low-dimensional INR embeddings (which are further used for downstream tasks) by
jointly fitting them with meta-learning.
Finally, \citet{schurholt2021self} learn 
representations of NN weights with self-supervised learning and in \cite{schurholt2022hyper} this is extended to generative models.

In contrast to the above methods, our work follows
a recent stream of research
focusing on
\textit{equivariant metanetworks}. \citet{navon2023equivariant} and \citet{zhou2024permutation} first characterised \textit{all linear} equivariant layers to permutation symmetries of MLPs and combined them with non-linearities, while in the latter this was extended to CNNs. These approaches derive intricate weight-sharing patterns but are non-local\footnote{
They can be perceived as the metanetwork analogue of Invariant Graph Nets \cite{MaronBSL19} for conventional graphs.}
and cannot process varying architectures. In follow-up works, \citet{zhou2023neural} constructed equivariant attention layers
and in \cite{abs-2402-05232} the above characterisation was generalised to arbitrary input architectures and layers (\eg RNNs and transformers) introducing an algorithm for automatic linear layer (with weight-sharing) construction.
A different route was taken in the very recent works of
\citet{kofinas2024graph} and \citet{lim2024graph}, where input NNs are treated as a special class of graphs 
and are naturally processed with
Graph Neural Networks (GNNs), with appropriate symmetry breaking wherever necessary.
This perspective has been adopted several times in the deep learning literature \cite{bottou1990framework, gegout1995mathematical, liu2018darts, xie2019exploring, pmlr-v119-you20b} and in a few examples, GNNs were applied on the graph representation for, \eg neural architecture search (\cite{zhang2019d, thost2021directed, knyazev2021parameter}), albeit without mapping parameters to the graph edges.
In parallel to our work, \cite{tran2024monomial} extended the framework of \cite{zhou2024permutation} to incorporate scaling and sign-flipping symmetries. They construct non-local equivariant layers, resulting in an architecture with fewer trainable parameters. Nevertheless, their method suffers from limited expressive power, while their experimental results showcase limited advancements. In contrast to our method, this approach limits the choice of activation functions to those equivariant to the studied symmetries. Finally, their framework cannot be extended to other activation functions, as the parameter sharing must be re-designed from scratch, while it is not suitable for processing diverse architectures.

\noindent\textbf{Scale Equivariance}
Equivariance to vector scaling (or more generally to matrix multiplication with diagonal matrices) remains to date underexplored in the machine learning community. 
Sign symmetries 
received attention in the work of \citet{DBLP:conf/iclr/LimRZSSMJ23}, where an \textit{invariant} network (\textit{SignNet}) was designed, mainly to process eigenvectors, with the theoretical analysis revealing 
universality under certain conditions.
This was extended in \cite{DBLP:conf/nips/Lim0JM23}, where the importance of sign \textit{equivariance} on several tasks was demonstrated and a sign equivariant network was proposed. In our work, we draw inspiration from these architectures and extend their formulations to arbitrary scaling symmetries.

\section{Notation and Preliminaries}\label{sec:prelims}

\noindent{\textbf{Notation.}} We denote vectors, matrices, and tensors with bold-face letters, e.g., $\bx, \bX, \tX$, respectively and sets with calligraphic letters $\cX$. A normal font notation will be used for miscellaneous purposes (mostly indices and functions). Datapoint (input) functions/signals will be denoted with $\signal$, while higher-order functions (functions of functions) will be denoted with fraktur font $\hofun$.

\noindent{\textbf{Functions of Neural Networks.} Consider functions of the form $\signal_{G, \bth}: \cX \to \hat{\cX}$. Each function is parameterised (1) by a \textbf{computational graph} $G$, which determines all the mathematical operations that should be performed to evaluate the function $\signal$ at a datapoint $\bx \in \cX$. When $u$ is a neural network, $G$ is determined by the NN \textit{architecture}. Additionally, $\signal$ is parameterised by (2), by \textbf{a tuple of numerical parameters} $\bth$, on which the aforementioned mathematical operations are applied (along with the input $\bx$)  - these are the \textit{learnable parameters}, which can be arranged into a vector. We are interested in learning unknown higher-order functions $\hofun: \hat{\cX}^{\cX} \to \cY$ of the form $\hofun\left(\signal_{G, \bth}\right)$. In our case, the goal is to \textit{learn $\hofun$ by accessing solely the parameters $(G, \bth)$ of each $u$}, \ie via functions $\hat{\hofunparams}: \mathcal{G} \times \Uptheta \to \cY$ of the form $\hat{\hofunparams}(G, \bth)$, where $\cG$ is a space of architectures/computational graphs and $\Uptheta$ a space of parameters.\footnote{Note the difference with architectures that access $u$ via input-output pairs $\left(\bx_i, \signal_{G, \bth}\left(\bx_i\right)\right)$ 
\cite{lu2021learning, kovachki2023neural}. 
} We are typically interested in learning \textit{functionals} ($\cY \subseteq\mathbb{R}^d$) or \textit{operators}
($\cY = \cG \times \Uptheta$ or $\cY = \Uptheta$). To approximate the desired higher-order function, we assume access to a dataset of parameters 
sampled i.i.d. from an unknown distribution $p$ on $\mathcal{G} \times \Uptheta$. For example, in a supervised setup, we aim to optimise the following:
$ 
    \argmin_{\hat{\hofunparams} \in \cF} \mathbb{E}_{(G, \bth) \sim p} L\Big(\hofun(\signal_{G, \bth}), \hat{\hofunparams}(G, \bth)\Big),
$
where $L(\cdot, \cdot)$ is a loss function and $\cF$ is an NN processing hypothesis class (\eg metanetworks).

\noindent{\textbf{Feedforward Neural Networks (FFNNs).} In this paper, we focus our analysis on Feedforward Neural Networks (FFNNs), \ie linear layers interleaved with non-linearities.
Consider NNs of the form $\signal_{G, \bth}: \mathbb{R}^{d_{\text{in}}} \to \mathbb{R}^{d_{\text{out}}}$ of the following form:
\begin{equation}\label{eq:ffnn}
\begin{split}
    \bx_0= \bx, \quad
    \bx_\ell = \sigma_\ell\left(\bW_\ell\bx_{\ell-1} + \bb_\ell\right), \quad
    \signal_{G, \bth}(\bx) = \bx_L
\end{split}
\end{equation}

where $L$: the number of layers, $\bW_i \in \mathbb{R}^{d_{\ell} \times d_{\ell-1}}$: the weights of the NN, $\bb_i \in \mathbb{R}^{d_\ell}$: the biases of the NN, $d_0 = d_{\text{in}}$, $d_L = d_{\text{out}}$, $\sigma_\ell: \mathbb{R} \to \mathbb{R}$ activation functions applied element-wise. 
Here, the learnable parameters are $\bth = \left(\bW_1, \dots, \bW_L, \bb_1, \dots, \bb_{L} \right)$ and the computational graph encodes the connections between vertices, \textit{but also the type of activations used in each layer}.

\noindent{\textbf{Neural Network symmetries.} One of the major difficulties with working with function parameters directly is that the same function can be represented with more than one parameter,
\ie there exists transformations that \textit{when applied to any parameter $(G,\bth)$, keep the represented function intact}. Formally, an NN symmetry is induced by a set $\Psi$ of transformations $\trans: \cG \times \Uptheta \to \cG \times \Uptheta$, such that $\signal_{G, \bth}(\bx)  = \signal_{\trans\left(G, \bth\right)}(\bx), \forall \bx \in \cX, \forall (G, \bth) \in \cG \times \Uptheta$. If for a pair parameters $(G, \bth)$, $(G^\prime, \bth^\prime)$, $\exists \trans$ such that $(G, \bth) = \trans(G^\prime, \bth^\prime)$, we will call the two parameters \textit{equivalent} and write $(G, \bth) \simeq (G^\prime, \bth^\prime)$. To appropriately represent a \textit{functional} $\hofun$, a hypothesis (metanetwork) $\hat{\hofunparams}$ should be \textit{invariant} to transformations in $\Psi$:
$ \hat{\hofunparams}\left(\trans\left(G, \bth\right)\right) =
\hat{\hofunparams}(G, \bth)$.
For \textit{operators}, $\hat{\hofunparams}$ should be \textit{equivariant} to transformations: $\hofunparams\left(\trans\left(G, \bth\right)\right) = \trans\left(\hofunparams(G, \bth)\right)$, such that identical functions map to identical functions.

\noindent\textbf{Permutation symmetries (connectionist structure).} For a fixed computational graph $G$, perhaps the most well-known symmetry of FFNNs are those induced by hidden neuron permutations \cite{hecht1990algebraic}.
\textit{As far as metanetworks are concerned it is to date the only NN symmetry that has been accounted for} - see \cref{sec:related_work}. This symmetry implies that permuting hidden neurons (along with their biases and incoming and outgoing weights) within each layer preserves the NN function (regardless of the activation function). This reads:
\begin{align}\label{eq:perm_symmetry}
\bW_{\ell}^{\prime} = \bP_{\ell}
\bW_{\ell}\bP_{\ell-1}^{-1}, 
\  
\bb_\ell^{\prime} = \bP_{\ell}  \bb_\ell
 \Longrightarrow (\bW'_\ell, \bb'_\ell)_{\ell=1}^L = \bth' \simeq \bth = (\bW_\ell, \bb_\ell)_{\ell=1}^L,
\end{align}
where  $\ell \in\{1, \dots, L\}$, $\bP_{0} = \bP_{L} = \bI$
and $\bP_\ell \in \mathbb{R}^{d_{\ell} \times d_{\ell}}$ are arbitrary permutation matrices. Observe that they are different for each layer, with the input and output neurons held fixed.

\noindent{\textbf{Graph Metanetworks (GMNs).} A recently introduced weight space architecture \cite{kofinas2024graph, lim2024graph}, takes advantage of the permutation symmetries and treats FFNNs (among others, \eg CNNs) as graphs, processing them with conventional GNNs.
In particular, let $G = (\cV, \cE)$ be the computational graph, $i \in \cV$ an arbitrary vertex in the graph (neuron) and $(i,j) \in \cE$ an arbitrary edge from vertex $j$ to vertex $i$.\footnote{We use this convention to align with the indexing of the weights $\bW(i,j)$.} Additionally, let $\xv \in \mathbb{R}^{|\cV|\times d_v}$ be the vertex features and $\xe \in \mathbb{R}^{|\cE|\times d_e}$ the edge features (\ie biases and weights resp. in a FFNN).
The general form of a $T$ iteration (layer) GMN reads:
\begin{align}
    &\hv^{0}(i) = \colorinit{\text{INIT}_V}
    \left(
    \xv\left(i\right)
    \right),  \quad
    \he^{0}(i,j)  = \colorinit{\text{INIT}_E}
    \left(
    \xe\left(i, j\right)
    \right)  
    \quad \
    \tag{Init} 
\label{eq:gnns_general_init}\\
    &\mv^{t}(i) = 
{\bigoplus}_{\colornb{j \in \cN(i)}}
    \colormsg{\text{MSG}^{t}_V}
    \Big(
    \hv^{t-1}(i), \hv^{t-1}(j), \he^{t-1}(i, j)
    \Big)
  \tag{Msg}
  \label{eq:gnns_general_msg}\\
    &\hv^{t}(i) = \colorupd{\text{UPD}^{t}_V}
   \big(
   \hv^{t-1}(i), \mv^{t}(i)
    \big), \quad
   \he^{t}(i,j) = \colorupd{\text{UPD}^{t}_E}
   \big(\hv^{t-1}(i), \hv^{t-1}(j), \he^{t-1}(i,j)
    \big) 
    \tag{Upd} 
\label{eq:gnns_general_upd}\\
    &\bh_G  = \colorread{\text{READ}}
    \Big(
    \big\{
    \bh_V^{T}(i)
    \big\}_{i \in \cV}
    \Big)
    \tag{Readout},
    \label{eq:gnns_general_rd}
\end{align}
where $\hv^{t}, \he^{t}$ are vertex and edge representations at iteration $t$ and $\bh_G$ is the overall graph (NN) representation.  $\text{INIT},\text{MSG}, \text{UPD}$ are general function approximators (\eg MLPs), while $\text{READ}$ is a permutation invariant aggregator (\eg DeepSets \cite{zaheer2017deep}). 
The above equations have appeared with several variations in the literature, \eg in some cases the edge representations are not updated or the readout 
might involve edge representations as well. Another frequent strategy is to use \textit{positional encodings} $\pv, \pe$ to break undesired symmetries. In FFNNs, \cref{eq:perm_symmetry} reveals that input and output vertices are not permutable, while vertices cannot be permuted across layers. Therefore, vertices (or edges) that are permutable share the same positional encoding (see Appendix \ref{sec:app_pe} for more details).
\textbf{Remark:} Although, typically, the neighbourhood $\cN(i)$ contains both incoming and outgoing edges, in \cref{sec:method} we will illustrate our method using only incoming edges:
\textit{forward neighbourhood} $\cN_{\text{FW}}(i) = \{j \in \cV \mid \posl{i} - \posl{j} = 1\}$ and \textit{backward} where $\posl{i}$ gives the layer neuron $i$ belongs. Backward neighbourhoods $\cN_{\text{BW}}(i)$ are defined defined similarly. In \cref{sec:bidirectional}, we show a more elaborate \textit{bidirectional version} of our method, with both neighbourhoods considered.

\section{Scaling symmetries in Feedforward Neural Networks}\label{sec:scaling_symms}

\noindent\textbf{Scaling symmetries (activation functions).} 
Intuitively, permutation symmetries stem from the \textit{graph structure}
of neural networks, or put differently, from the fact that hidden neurons do not possess any inherent ordering. 
Apart from the affine layers $\bW_\ell$ that give rise to the graph structure, it is frequently the case that \textbf{activation functions} $\sigma_\ell$  have inherent symmetries that are bestowed to the NN.

Let us dive into certain illustrative examples: for the \texttt{ReLU} activation $\sigma(x) = \max(x, 0)$ it holds that $\sigma(a x) = \max(ax, 0) = a\max(x, 0), \ \forall a > 0$. For the \texttt{tanh} and \texttt{sine} activations $\sigma(x) = \tanh(x)$, $\sigma(x) = \sin(x)$ respectively, it holds that $\sigma(a x) = a\sigma(x), \ \forall a \in \{-1,1\}$.
In a slightly more complex example, polynomial activations $\sigma(x) = x^k$, we have $\sigma(ax) = a^d \sigma(x)$, \ie the multiplier differs between input and output. In general, we will be talking about \textit{scaling symmetries} whenever there exist pairs $(a,b)$ for which it holds that $\sigma(ax) = b \sigma(x)$. To see how such properties affect NN symmetries, let us focus on FFNNs (see \cref{sec:CNNs} for CNNs): for a neuron $i$ (we omit layer subscripts) we have $\sigma\big(a \bW(i,:) \bx + a\bb(i)\big) = b\sigma\big(\bW(i,:) \bx + \bb(i)\big)$, \ie \textit{multiplying its bias and all incoming weights with a constant $a$ results in scaling its output with a corresponding constant $b$}. Generalising this to linear transformations, we may ask the following: which are the pairs of matrices $(\bA, \bB)$ for which we have $\sigma\big(\bA \bW \bx + \bA\bb\big) = \bB\sigma\big(\bW \bx + \bb\big)$?
\citet{godfrey2022symmetries} provide an answer 
for \textit{any activation that respects certain conditions}. We restate here their most important results:

\begin{proposition}[Lemma 3.1. and Theorem E.14 from \cite{godfrey2022symmetries}]\label{prop:symmetries}
Consider an activation function $\sigma: \mathbb{R} \to \mathbb{R}$. Under mild conditions,\footnote{See Appendix \ref{sec:characterisation_symmetries} for the precise statement and more details about $\intertmap_{\sigma, d}$.} the following hold:
\begin{itemize}
    \item For any $d \in \mathbb{N}^+$, there exists a (non-empty) group of invertible matrices  defined as:
$
\intert_{\sigma, d} = \{\bA \in \mathbb{R}^{d \times d}: \text{ invertible} \mid \exists \ \bB \in \mathbb{R}^{d \times d} \text{ invertible, such that: } \sigma( \bA \bx) = \bB \sigma (\bx)\}
$ (\textbf{intertwiner group}),
and a mapping function $\intertmap_{\sigma, d}$ such that $\bB = \phi_{\sigma, d}(\bA)$.
\item Every $\bA \in \intert_{\sigma, d}$ is of the form $\bP \bQ$, where $\bP$: permutation matrix and $\bQ = \text{diag}\big(q_1, \dots q_d\big)$ diagonal, with $q_i \in
D_{\sigma} = \{a \in \mathbb{R}\setminus\{0\} \mid \sigma(ax) = \intertmap_{\sigma, 1}(a)\sigma(x)\}  
$:
the 1-dimensional group, and 
$\intertmap_{\sigma, d}(\bA) = \bP \text{diag}\big(\intertmap_{\sigma, 1}(q_1), \dots \intertmap_{\sigma, 1}(q_d)\big)$.
\end{itemize}
\end{proposition}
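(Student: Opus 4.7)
The plan is to handle the two bullets separately, first establishing the group structure and the existence of the intertwining map, and then pinning down the combinatorial form of the elements.

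For the first bullet, I would start by observing $\bI \in \intert_{\sigma,d}$ (take $\bB = \bI$), so the set is non-empty. Closure under multiplication follows from a one-line substitution: if $\sigma(\bA_i \bx) = \bB_i \sigma(\bx)$ for $i=1,2$ and all $\bx$, then applying the first identity with the dummy $\bx$ replaced by $\bA_2 \bx$ yields $\sigma(\bA_1 \bA_2 \bx) = \bB_1 \bB_2 \sigma(\bx)$. Existence of inverses is analogous: substitute $\bx \mapsto \bA^{-1}\bx$ to get $\sigma(\bA^{-1}\bx) = \bB^{-1}\sigma(\bx)$. Well-definedness of $\intertmap_{\sigma,d}$ requires that if $\bB_1 \sigma(\bx) = \bB_2\sigma(\bx)$ for all $\bx$, then $\bB_1 = \bB_2$; this is precisely where the ``mild conditions'' on $\sigma$ enter, and I would spell them out as: the set $\{\sigma(\bx) : \bx \in \mathbb{R}^d\}$ contains a basis of $\mathbb{R}^d$ (or, in a weaker form, its linear span equals $\mathbb{R}^d$). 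This is inherited elementwise from a non-affine $\sigma$ with at least two distinct images.

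For the second bullet, the key reduction is to test the identity $\sigma(\bA\bx) = \bB \sigma(\bx)$ on carefully chosen inputs. First I would feed in one-hot inputs $\bx = x\be_j$ (scalar $x$ variable); using that $\sigma$ acts componentwise and (after a normalization, assuming $\sigma(0)=0$ in the simplest case) this gives, for every row $i$ and column $j$, the scalar relation
\begin{equation*}
\sigma\bigl(x\, A_{i,j}\bigr) = B_{i,j}\, \sigma(x), \qquad \forall x \in \mathbb{R}.
\end{equation*}
Non-triviality of $\sigma$ forces $A_{i,j} = 0 \iff B_{i,j} = 0$, and whenever $A_{i,j} \neq 0$ the pair $(A_{i,j}, B_{i,j})$ must lie in the 1-dimensional intertwining set, i.e.\ $A_{i,j} \in D_\sigma$ and $B_{i,j} = \intertmap_{\sigma,1}(A_{i,j})$.

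The decisive step is to show that in each row of $\bA$ at most one entry is non-zero. I would do this by feeding in two-hot inputs $\bx = x_1 \be_{j_1} + x_2\be_{j_2}$ for $j_1 \neq j_2$ with $A_{i,j_1}, A_{i,j_2} \neq 0$. Matching row $i$ on both sides yields the functional equation
\begin{equation*}
\sigma\bigl(A_{i,j_1} x_1 + A_{i,j_2} x_2\bigr) = B_{i,j_1}\, \sigma(x_1) + B_{i,j_2}\, \sigma(x_2) \qquad \forall x_1, x_2 \in \mathbb{R}.
\end{equation*}
Any $\sigma$ satisfying this for two genuinely non-zero coefficients would have to be affine, which is excluded by the conditions on $\sigma$. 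Hence each row of $\bA$ has exactly one non-zero entry (exactly one, by invertibility), which means $\bA = \bP \bQ$ with $\bP$ a permutation matrix and $\bQ = \mathrm{diag}(q_1,\dots,q_d)$, $q_i \in D_\sigma$. Substituting this factorization back into $\sigma(\bA\bx) = \bB\sigma(\bx)$ yields, component by component, the claimed formula $\intertmap_{\sigma,d}(\bA) = \bP\, \mathrm{diag}\bigl(\intertmap_{\sigma,1}(q_1),\dots,\intertmap_{\sigma,1}(q_d)\bigr)$.

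The main obstacle is the last functional-equation step: turning ``$\sigma$ satisfies a non-degenerate Cauchy-type identity with two non-zero coefficients'' into ``$\sigma$ is affine'' requires exactly the mild regularity assumptions on $\sigma$ alluded to in the statement. I would isolate these assumptions as a separate lemma (for instance, measurability plus non-affineness, or continuity at a point plus non-affineness) and cite or prove that lemma before plugging it back into the row-sparsity argument. Everything else is bookkeeping that follows from the componentwise action of $\sigma$ and the invertibility of $\bA$.
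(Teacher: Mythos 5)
You should first be aware that the paper does not prove this proposition at all: both bullets are imported verbatim from \citet{godfrey2022symmetries} (their Lemma 3.1 and Theorem E.14), and the only argument supplied in \cref{sec:characterisation_symmetries} is the one-line derivation of the explicit form $\intertmap_{\sigma,d}(\bP\bQ)=\bP\,\mathrm{diag}\big(\intertmap_{\sigma,1}(q_1),\dots,\intertmap_{\sigma,1}(q_d)\big)$ from the homomorphism property of $\intertmap_{\sigma,d}$ together with $\intertmap_{\sigma,d}(\bP)=\bP$ (their Lemma E.10). So your attempt is competing with a citation, not a proof, and must be judged on its own. As a route it is legitimate and more elementary than the cited one, which leans on the differentiability hypothesis directly; your group-axiom verification is correct (the cited condition, invertibility of $\sigma(\bI_d)$, is one concrete way of securing your spanning condition and additionally yields the closed form $\bB=\sigma(\bA)\sigma(\bI_d)^{-1}$), and the endgame --- exactly one nonzero entry per row plus invertibility of $\bA$ gives $\bA=\bP\bQ$ --- is fine.

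The substantive gap is the clause ``after a normalization, assuming $\sigma(0)=0$''. With $\sigma(0)\neq 0$ (e.g.\ sigmoid, which the proposition must cover), the one-hot probe $\bx=x\be_j$ gives row-wise $\sigma(A_{i,j}x)=B_{i,j}\sigma(x)+\sigma(0)\sum_{k\neq j}B_{i,k}$, not the clean relation defining $D_\sigma$, and the two-hot probe likewise acquires an offset. The normalization $\tilde\sigma=\sigma-\sigma(0)$ does not automatically intertwine: one gets $\tilde\sigma(\bA\bx)=\bB\tilde\sigma(\bx)+\sigma(0)(\bB\b{1}-\b{1})$, so you must first evaluate the original identity at $\bx=\b{0}$ to deduce $\bB\b{1}=\b{1}$. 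Worse, even after that repair your argument delivers $q_i\in D_{\tilde\sigma}$, which can be \emph{strictly larger} than the claimed $D_\sigma$ (for sigmoid, $\tilde\sigma$ is a scaled $\tanh$, so $D_{\tilde\sigma}=\{\pm1\}$ while $D_\sigma=\{1\}$); you need to invoke the constraint $\bB\b{1}=\b{1}$ a second time, on the resulting generalized permutation matrix $\bB$, to force $\intertmap_{\tilde\sigma,1}(q_i)=1$ and land back in $D_\sigma$. None of this is fatal, but it is real content hidden behind ``a normalization''. The second deferred step --- that a non-degenerate two-variable identity collapses to a Cauchy equation and hence to affineness under mild regularity --- is standard and is covered by the differentiability hypothesis actually assumed in the appendix restatement, so deferring it to a lemma is acceptable provided you state that lemma with the hypotheses you intend to use.
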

This is a powerful result that completely answers the question above for most practical activation functions. Importantly, not only does it recover permutation symmetries, but also reveals 
symmetries to diagonal matrix groups, which can be  
identified by solely examining $\intertmap_{\sigma, 1}$, \ie the one-dimensional case and the set $D_\sigma$ (easily proved to be a group) we have already discussed in our examples above.

Using this statement, \citet{godfrey2022symmetries} characterised various activation functions (or recovered existing results), \eg  \texttt{ReLU}:  $\intert_{\sigma, d}$ contains \textbf{generalised permutation matrices with positive entries} of the form $\bP \bQ$, $\bQ = \textnormal{diag}(q_1, \dots, q_d)$, $q_i > 0$ and $\intertmap_{\sigma, d}(\bP\bQ) = \bP\bQ$ \cite{neyshabur2015path}. Additionally, here we characterise the intertwiner group of \texttt{sine} (used in the popular SIREN architecture \cite{sitzmann2020implicit} for INRs). Not surprisingly, it has the same intertwiner group with \texttt{tanh}
\cite{chen1993geometry, fefferman1993recovering} (we also recover this here using \cref{prop:symmetries}). Formally, (proof in \cref{sec:characterisation_symmetries}):
\begin{corollary}
    Hyperbolic tangent $\sigma(x) = \tanh(x)$ and sine activation  $\sigma(x) = \sin(\omega x)$,  satisfy the conditions of \cref{prop:symmetries}, when (for the latter) $\omega \neq k \pi, k \in \mathbb{Z}$. Additionally, $\intert_{\sigma, d}$ contains \textbf{signed permutation matrices} of the form $\bP \bQ$, with $\bQ = \textnormal{diag}(q_1, \dots, q_d)$, $q_i = \pm 1$ and $\phi_{\sigma_d}(\bP\bQ) = \bP\bQ$.
\end{corollary}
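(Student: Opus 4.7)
The plan is to invoke Proposition \ref{prop:symmetries} as a black box: once its hypotheses are checked and the one-dimensional data $(D_\sigma, \intertmap_{\sigma,1})$ are computed, the second bullet of the proposition automatically lifts the conclusion to every $d$. Hence the entire proof reduces to the scalar problem of characterising $D_\sigma = \{a \in \mathbb{R}\setminus\{0\} : \exists\, b \in \mathbb{R}\setminus\{0\} \text{ with } \sigma(ax) = b\,\sigma(x) \ \forall x \in \mathbb{R}\}$ and the corresponding map $a \mapsto \intertmap_{\sigma,1}(a) = b$.

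The easy inclusion $\{-1, +1\} \subseteq D_\sigma$ with $\intertmap_{\sigma,1}(\pm 1) = \pm 1$ is immediate from oddness: both $\tanh(-x) = -\tanh(x)$ and $\sin(-\omega x) = -\sin(\omega x)$ handle $a = -1$, while $a = +1$ is trivial. The substantive direction is the reverse inclusion $D_\sigma \subseteq \{-1, +1\}$, which I would argue separately for each activation. For $\tanh$, assume $\tanh(ax) = b\,\tanh(x)$ for all $x$. Using $\tanh(x) \to \pm 1$ as $x \to \pm\infty$ forces $b = \mathrm{sgn}(a)$, after which differentiating at $x = 0$ with $\tanh'(0) = 1$ yields $a = b$, so $a \in \{-1, +1\}$. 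For $\sin(\omega x)$, the functional equation $\sin(a\omega x) = b\,\sin(\omega x)$ forces the zero sets of the two sides to coincide, i.e.\ $\{k\pi/(a\omega) : k \in \mathbb{Z}\} = \{k\pi/\omega : k \in \mathbb{Z}\}$, which gives $|a| = 1$; evaluating at any point where $\sin(\omega x) \neq 0$ (e.g.\ $x = \pi/(2\omega)$) then fixes $b = a \in \{-1,+1\}$.

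With $D_\sigma = \{-1, +1\}$ and $\intertmap_{\sigma,1}$ the identity on this set, Proposition \ref{prop:symmetries} immediately yields that every element of $\intert_{\sigma,d}$ factors as $\bP \bQ$ with $\bQ = \mathrm{diag}(q_1, \ldots, q_d)$, $q_i \in \{-1, +1\}$, and that $\intertmap_{\sigma,d}(\bP\bQ) = \bP\bQ$, which is precisely the signed permutation group. This final step is purely mechanical given the scalar characterisation.

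The one remaining task is to verify the ``mild conditions'' of Proposition \ref{prop:symmetries} (as stated in Appendix \ref{sec:characterisation_symmetries}). For $\tanh$ these are standard, since it is analytic, strictly monotone, and has non-degenerate derivative at $0$. I expect the sine case to be the main (if minor) obstacle: the periodic zero structure of $\sin$ can in principle produce degenerate ``scalings'' for pathological frequencies, which is precisely what the hypothesis $\omega \neq k\pi$, $k \in \mathbb{Z}$ in the statement is designed to rule out. I would handle this by checking the Godfrey et al.\ admissibility conditions directly against $\sin(\omega x)$ under this hypothesis, referring the reader to the appendix for the line-by-line verification.
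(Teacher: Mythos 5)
Your proposal is correct, and its skeleton necessarily matches the paper's (both reduce the corollary to verifying the hypotheses of \cref{prop:symmetries} and then characterising the one-dimensional set $D_\sigma$ and the map $\intertmap_{\sigma,1}$, after which the lift to $\intert_{\sigma,d}$ is automatic). Where you genuinely diverge is in the scalar characterisation $D_\sigma \subseteq \{-1,+1\}$. The paper uses a single uniform trick for both activations: square the functional equation $\sigma(ax)=b\sigma(x)$, differentiate it and square again, then combine the two identities to conclude that unless $a^2=b^2=1$ some transcendental quantity ($\cos^2(\omega a x)$ or $\tanh^2(ax)$) would have to be constant. Your argument is instead tailored to each activation: for $\tanh$ you read off $b=\mathrm{sgn}(a)$ from the asymptotics at $\pm\infty$ and $a=b$ from the derivative at $0$; for $\sin(\omega x)$ you compare the zero lattices $\tfrac{\pi}{a\omega}\mathbb{Z}$ and $\tfrac{\pi}{\omega}\mathbb{Z}$ to force $|a|=1$ and then evaluate at $x=\pi/(2\omega)$ to pin down $b$. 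Both routes are valid; yours is arguably more elementary and transparent (no squaring/differentiation bookkeeping), while the paper's has the virtue of being a single reusable template that extends to other activations satisfying the Godfrey et al.\ conditions. One small gloss worth tightening: for the ``mild conditions'' you appeal to analyticity and strict monotonicity of $\tanh$, but the actual hypothesis to check (via Lemma E.7 of Godfrey et al., as the paper does) is invertibility of $\sigma(\bI_d)$, i.e.\ $\sigma(1)\neq\sigma(0)$ and $\sigma(1)\neq-(d-1)\sigma(0)$ --- for $\tanh$ this follows from $\tanh(0)=0$, $\tanh(1)\neq 0$, and for sine it is exactly where $\omega\neq k\pi$ enters, as you correctly anticipated.
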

It is straightforward to see that the symmetries of \cref{prop:symmetries}, induce equivalent parameterisations for FNNNs. In particular, it follows directly from Proposition 3.4. in \cite{godfrey2022symmetries}, that for activation functions $\sigma_\ell$ satisfying the conditions of \cref{prop:symmetries} and when $\phi_{\sigma, \ell}(\bQ) = \bQ$, we have that:
\begin{align}\label{eq:all_symmetry}
\bW_{\ell}^{\prime} = \bP_{\ell}\bQ_{\ell}
\bW_{\ell} \bQ_{\ell-1}^{-1}\bP_{\ell-1}^{-1}, 
\  
\bb_\ell^{\prime} = \bP_{\ell} \bQ_{\ell} \bb_\ell 
\ \Longrightarrow (\bW'_\ell, \bb'_\ell)_{\ell=1}^L = \bth' \simeq \bth = (\bW_\ell, \bb_\ell)_{\ell=1}^L,
\end{align}
where again $ \ell \in\{1, \dots, L\}$, $\bP_{0} =\bQ_{0} = \bP_{L} =\bQ_{L} = \bI$.

\section{Scale Equivariant Graph MetaNetworks}\label{sec:method}

As previously mentioned, the metanetworks operating on weight spaces that have been proposed so far, either do not take any symmetries into account or are invariant/equivariant to permutations alone as dictated by \cref{eq:perm_symmetry}. In the following section, we introduce an architecture invariant/equivariant to \textbf{permutations and scalings}, adhering to \cref{eq:all_symmetry}. An important motivation for this is that in various setups, \textit{these are the only function-preserving symmetries}, \ie for a fixed graph $u_{G, \bth} = u_{G, \bth'} \Rightarrow (\bth, \bth') \text{ satisfy \cref{eq:all_symmetry}}$ - \eg see \cite{fefferman1993recovering} for the conditions for \texttt{tanh} and \cite{phuong2020functional, grigsby2023hidden} for \texttt{ReLU}.

\noindent\textbf{Main idea.} Our framework is similar in spirit to most works on equivariant and invariant NNs \cite{bronstein2021geometric}. In particular,  we build equivariant GMNs that will preserve both symmetries
at vertex- and edge-level, \ie \textit{vertex representations will have the same symmetries with the biases and edge representations with the weights}. To see this, suppose two parameter vectors are equivalent according to \cref{eq:all_symmetry}. Then, the \textit{hidden} neurons representations - the discussion on \textit{input/output} neurons is postponed until \cref{sec:io_symm} - should respect the following (the GMN iteration $t$ is omitted to simplify notation):
\begin{align}
\bh'_V(i)&=
    q_\ell\left(\permind{\ell}{i}\right)\bh_V\left(\permind{\ell}{i}\right),  \quad \ell =\posl{i} \in \{1, \dots, L-1\} 
\label{eq:vertex_symm}\\
\bh'_E(i,j)&= 
    q_\ell\left(\permind{\ell}{i}\right)
    \bh_E\left(
    \permind{\ell}{i},\permind{\ell-1}{j}\right)
    q_{\ell-1}^{-1}\left(\permind{\ell-1}{j}\right) ,  \ \ell =\posl{i} \in \{2, \dots, L-1\}, \label{eq:edge_symm}
\end{align}
where $\pi_\ell: \cV_\ell \leftrightarrow \cV_\ell$ permutes the vertices of layer $\ell$ (denoted with $\cV_\ell$) according to $\bP_\ell$ and $q_{\ell}: \cV_\ell \to \mathbb{R}\setminus \{0\}$ scales the vertex representations of layer $\ell$ according to $\bQ_\ell$. We will refer to the latter as \textit{forward scaling} in \cref{eq:vertex_symm} and \textit{bidirectonal scaling} in \cref{eq:edge_symm}.
To approximate \textit{operators} (equivariance), we compose multiple equivariant GMN layers/iterations and in the end, project vertex/edge representations to the original NN weight space, while to approximate \textit{functionals} (invariance), we 
compose a final invariant one in the end summarising the input to a scalar/vector.

To ease exposition, we will first discuss our approach w.r.t. vertex representations. Assume that vertex representation symmetries are preserved
by the initialisation of the MPNN - \cref{eq:gnns_general_init} - and so are edge representation symmetries for all MPNN layers.
Therefore, we can only focus on the message passing and vertex update steps - \cref{eq:gnns_general_msg} and \cref{eq:gnns_general_upd}. Additionally, let us first focus on hidden neurons
and assume only forward neighbourhoods. The following challenges arise:

\noindent\textbf{Challenge 1 - Scale Invariance / Equivariance.} First off, the message and the update function $\colormsg{\text{MSG}_V},\colorupd{\text{UPD}_V}$ should be \textit{equivariant to scaling} - in this case to the forward scaling using the multiplier of the central vertex $q_\ell(i)$. Additionally, the readout
$\colorread{\text{READ}}$, apart from being permutation invariant should also be \textit{invariant to the different scalar multipliers of each vertex 
}. Dealing with this requires devising functions of the following form:
\begin{equation}
    g_i\big(q_1\bx_1, \dots, q_n \bx_n\big) = q_i g_i(\bx_1, \dots, \bx_n\big), \forall q_i \in D_i, i \in \{1,\dots, n\}
\end{equation}
where $D_i$ a 1-dimensional scaling group as defined in \cref{prop:symmetries}. Common examples are those discussed in \cref{sec:scaling_symms}, \eg \textbf{$D_i = \{1, -1\}$ or $D_i = \mathbb{R}^+$}. The first case, \ie \textit{sign symmetries}, has been discussed in recent work
\cite{LimRZSSMJ23, lim2023expressive}. Here we generalise their architecture into arbitrary scaling groups. In specific, \textit{Scale Equivariant} networks follow the methodology of \cite{lim2023expressive}, \ie  they are compositions of multiple linear transformations multiplied elementwise with the output of \textit{Scale Invariant} functions:
\begin{align}
  &\scinvnet^{k}(\bX) = \rho^{k}\left(\tilde{\bx}_1, \dots, \tilde{\bx}_n\right),
    \tag{Scale Inv. Net}\label{eq:scalenet}\\
    &\sceqnet = \sceqlayer^{K} \circ \dots \circ \sceqlayer^{1},  \  \sceqlayer^{k}(\bX) = \big(\bGamma^{k}_1 \bx_1, \dots, \bGamma^{k}_n \bx_n\big) \odot \scinvnet^{k}(\bX), \tag{Scale Equiv. Net}\label{eq:scaleeqnet}
\end{align}
where 
$\rho^k: \prod_{i=1}^n\cX_i \to \mathbb{R}^{\sum_{i=1}^n d_{i}^k}$ universal approximators (\eg MLPs), $\bGamma^k_i: \cX_i \to \mathbb{R}^{d_{i}^k}$ linear transforms (for each of the $k$ invariant/equivariant layers resp  - in practice, we observed experimentally that a single layer $K=1$ was sufficient), and $\tilde{\bx}_i$ are explained in detail in the following paragraph. 

Central to our method is defining a way to achieve invariance. One option is \textit{canonicalisation}, \ie
by defining a function $\canon: \cX \to \cX$ \textit{that maps all equivalent vectors $\bx$ to a \textit{representative}} (obviously non-equivalent vectors have different representatives): $
\text{ Iff } \ \bx \simeq \by \in \cX, \text{ then }
\bx \simeq \canon(\bx) = \canon(\by)$. In certain cases, these are easy to define and \textit{differentiable almost everywhere}, for example for positive scaling: $\canon(\bx) = \frac{\bx}{\| \bx \|}.$\footnote
{Fwd: $\bx \simeq \by \Rightarrow \bx = q\by \Rightarrow \frac{\by}{\|\by\|} = \frac{q\bx}{|q|\|\bx\|} = \frac{\bx}{\|\bx\|}$ ($q>0$). Reverse: $\frac{\bx}{\|\bx\|} = \frac{\by}{\|\by\|} \Rightarrow \bx =  \frac{\|\bx\|}{\|\by\|} \by \Rightarrow \bx \simeq \by$
}
For sign symmetries, this is not as straightforward: in 1-dimension one can use $|x|$, but for arbitrary dimensions, a more complex procedure is required, as recently discussed in \cite{ma2024laplacian, ma2024canonization}. Since the group is small -  two elements - one can use \textit{symmetrisation} instead \cite{yarotsky2022universal}, as done by  \citet{lim2023sign}: $\symm(\bx) = \sum_{\by: \by \simeq \bx} \text{MLP}(\by)$, \ie for sign: $\symm(\bx) 
 = \text{MLP}(\bx) + \text{MLP}(-\bx)$. Therefore, we define: $\scinvnet^{k}(\bX) = \rho^{k}\left(\tilde{\bx}_1, \dots, \tilde{\bx}_n\right)$, with $\tilde{\bx}_i = \canon(\bx_i)$ or $\tilde{\bx}_i = \symm(\bx_i)$. Importantly, it is known that both cases \textit{allow for universality}, see \cite{bloem2020probabilistic, kaba2023equivariance} and
 \cite{yarotsky2022universal, PunyASMGBL22} respectively.

\noindent\textbf{Challenge 2 - Rescaling: Different multipliers.} Scale equivariance alone is not sufficient, since the input vectors of the message function $\colormsg{\text{MSG}_V}$ are \textit{scaled by different multipliers} - central vertex: $q_\ell(i)$, neighbour: $q_{\ell - 1}(j)$,  edge: $q_\ell(i) q_{\ell - 1}^{-1}(j)$, \textit{while its output should be scaled differently as well} - 
$q_\ell(i)$. We refer to this problem as \textit{rescaling}. 
Dealing with Challenge 2 requires functions of the form: 
\begin{equation}
    g\big(q_1\bx_1, \dots q_n\bx_n\big) = g(\bx_1, \dots \bx_n)\prod_{i=1}^nq_i,
\forall q_i \in D_i.
\end{equation}
We call these functions \textit{rescale equivariant}.
Note, that this is an unusual symmetry in equivariant NN design.
Our approach is based on the observation that \textit{any n-other monomial containing variables from all vectors $\bx_i$ is rescale-equivariant}. Collecting all these monomials into a single representation is precisely the \textit{outer product} $\tX_n = \bx_1 \otimes \dots \otimes \bx_n$, where $\tX_n(j_1, \dots, j_n) = \prod_{i=1}^n \bx_i(j_i)$. Therefore, the general form of our proposed Rescale Equivariant Network is as follows:
\begin{equation}
    \resceqnet(\bx_1, \dots \bx_n) = \sceqnet\big(\text{vec}(\tX_n)\big). \tag{ReScale Equiv. Net}
\end{equation}
In practice, given that the size of $\tX_n$ grows polynomially with $n$, we resort to a more computationally friendly subcase, \ie \textit{hadamard products}, \ie $\resceqnet(\bx_1, \dots \bx_n) = \odot_{i=1}^n\bGamma_i \bx_i
$,  $\bGamma_i: \cX_i \to \mathbb{R}^d$. 
Contrary to the original formulation, the latter is linear
(lack of 
multiplication with an invariant layer).

\noindent\textbf{Scale Equivariant Message Passing.}
We are now ready to define our message-passing scheme. Starting with the message function, we require each message vector $\mv(i)$ to have the same symmetries as the central vertex $i$. Given the scaling symmetries of the neighbour and the edge, for forward neighbourhoods, this reads:
$\colormsg{\text{MSG}_V}
\left(q_x\bx, q_y\by, q_x q_y^{-1}\be \right) = q_x\colormsg{\text{MSG}_V}
\left(\bx, \by, \be \right)$. In this case, we opt to eliminate $q_y$ by multiplication as follows:
\begin{equation}\label{eq:msg_sceq}
    \colormsg{\text{MSG}_V}
    \left(
    \bx, 
    \by,
    \be
    \right) =
\sceqnet
    \left(
    \left[ \bx,
    \resceqnet
    \left(
    \by,
    \be
    \right)
    \right]
    \right),
\end{equation}
where $[\cdot, \cdot]$ denotes concatenation, $\resceqnet(q_y\by, q_xq_y^{-1}\be) = q_x \resceqnet(\by, \be)$.
In our experiments, we used only $\by$ and $\be$, since we did not observe significant performance gains by including the central vertex $\bx$. Now, the update function is straightforward to implement since it receives vectors with the same symmetry, \ie it should hold that: $\colorupd{\text{UPD}_V}(q_x\bx, q_x\b{m}) = q_x \colorupd{\text{UPD}_V}(\bx, \b{m}) $ which is straightforward to implement with a scale equivariant network, after concatenating $\bx$ and $\b{m}$:
\begin{align}\label{eq:updv_sceq}
{\colorupd{\text{UPD}_V}}
    \left(
    \bx,
    \b{m}
    \right) =
\sceqnet
    \left(
    \left[
    \bx,
    \b{m}
    \right]
    \right).
\end{align}
Finally, to summarise our graph into a single scalar/vector 
we require a scale and permutation-invariant readout. The former is once more achieved using canonicalised/symmetrised versions of the vertex representations of hidden neurons, while the latter using a DeepSets architecture as usual:
\begin{align}
\colorread{\text{READ}_V}
    \left(
    \bX
    \right) &:=
    \mathsf{DeepSets}\left(
    \tilde{\bx}_1, \dots, \tilde{\bx}_n\right), \quad
    \tilde{\bx}_i = \canon_i 
    \left(\bx_i\right) \text{ or }  \tilde{\bx}_i = \symm_i 
    \left(\bx_i\right)
\end{align}

In Appendix \ref{sec:app_practical_consider}, we show how to handle symmetries in the rest of the architecture components (\ie \textit{initialisation}, \textit{positional encodings}, \textit{edge updates and i/o vertices}) and provide an extension of our method to \textit{bidirectional message-passing} (\cref{sec:bidirectional}), which includes backward neighbourhoods.

\noindent\textbf{Expressive power. }Throughout this section, we discussed only scaling symmetries and not permutation symmetries. However, it is straightforward to conclude that \acronym\ is also \textit{permutation equivariant/invariant}, since it is a subcase of GMNs; if one uses universal approximators in their message/update functions (MLP), our corresponding functions will be expressible by this architecture, which was proved to be permutation equivariant in  \cite{lim2024graph}. Although 
this implies that GMN
can express
\acronym, this is expected since $\bP\bQ$ symmetries are more restrictive than just $\bP$. Note that these symmetries are always present in FFNNs, and thus it is desired to explicitly model them, to introduce a more powerful \textit{inductive bias}. Formally, on symmetry preservation (proved in \cref{sec:proofs}):
\begin{proposition}\label{prop:equivariance}
     \acronym\ is permutation \& scale \textbf{equivariant}. Additionally, \acronym\ is permutation \& scale \textbf{invariant} when using a readout with the same symmetries.
\end{proposition}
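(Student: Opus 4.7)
The plan is to argue by induction on the message-passing iteration $t$, showing that after every iteration the vertex and edge representations $\hv^t(i), \he^t(i,j)$ satisfy the symmetry transformations prescribed by \cref{eq:vertex_symm,eq:edge_symm}. Since the action of the group $\bP_\ell\bQ_\ell$ on $(\bth,G)$ factors as first applying the permutations and then the diagonal scalings, it is cleanest to treat the two parts separately: permutation equivariance is inherited from the generic GMN template of \cite{lim2024graph} (\acronym\ is a concrete instantiation of \cref{eq:gnns_general_init,eq:gnns_general_msg,eq:gnns_general_upd,eq:gnns_general_rd} with specific choices of INIT/MSG/UPD/READ, all of which are functions of their inputs only and hence commute with any relabelling of indices), so the non-trivial part is establishing scale equivariance.

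For the scale part I would first verify the base case: the initialisations $\text{INIT}_V,\text{INIT}_E$ (discussed in Appendix \ref{sec:app_practical_consider}) are designed so that $\bh_V^0(i), \bh_E^0(i,j)$ respect \cref{eq:vertex_symm,eq:edge_symm}; this is a direct consequence of the definitions of $\sceqnet$ and $\resceqnet$ applied to the input biases and weights, whose scaling behaviour is precisely that of \cref{eq:all_symmetry}. For the inductive step, assume the symmetries hold at iteration $t-1$. Then for any neighbour $j\in\cN_{\text{FW}}(i)$ the triple $(\bh_V^{t-1}(i),\bh_V^{t-1}(j),\bh_E^{t-1}(i,j))$ transforms as $(q_\ell(i)\bx, q_{\ell-1}(j)\by, q_\ell(i)q_{\ell-1}^{-1}(j)\be)$. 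Feeding this into \cref{eq:msg_sceq}, I use the defining property of $\resceqnet$ to conclude $\resceqnet(q_{\ell-1}(j)\by, q_\ell(i)q_{\ell-1}^{-1}(j)\be) = q_\ell(i)\,\resceqnet(\by,\be)$, so that the second argument of $\sceqnet$ is scaled by $q_\ell(i)$, as is the first. By scale equivariance of $\sceqnet$, the full message is $q_\ell(i)\,\colormsg{\text{MSG}_V}(\bx,\by,\be)$, and since permutation-invariant sum-aggregation commutes with the (vertex-specific) scalar $q_\ell(i)$, the aggregated message $\mv^t(i)$ carries the required symmetry. Plugging into \cref{eq:updv_sceq} and using scale equivariance of $\sceqnet$ one more time, $\hv^t(i)$ satisfies \cref{eq:vertex_symm}. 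An analogous argument, using that the edge update (defined in the appendix) is built from $\sceqnet$ and $\resceqnet$ with the appropriate factorisation of incoming scalings, yields the edge symmetry \cref{eq:edge_symm}.

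For the invariance statement, once equivariance of the $T$-th iteration representations is in place, applying $\colorread{\text{READ}_V}$ reduces the question to scale invariance of $\mathsf{DeepSets}$ composed with $\canon_i$ or $\symm_i$. This is immediate: $\canon_i(q\bx)=\canon_i(\bx)$ by definition of a canonicalisation, $\symm_i(q\bx)=\symm_i(\bx)$ by summing over the (finite) orbit, and DeepSets is permutation invariant, which handles the remaining hidden-neuron permutation freedom. Handling of input/output neurons (where $\bP_0=\bP_L=\bQ_0=\bQ_L=\bI$) is immediate, as no scaling acts on them.

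The main obstacle I expect is bookkeeping rather than conceptual: carefully tracking the bidirectional scaling $q_\ell(i)q_{\ell-1}^{-1}(j)$ on edges through both the forward and (in the full bidirectional version of \cref{sec:bidirectional}) the backward neighbourhoods, and making sure the intertwiner mapping $\intertmap_{\sigma,1}$ is correctly applied whenever an activation crosses a layer boundary for activations where $\phi_{\sigma,1}(q)\neq q$. Provided one uses the characterisation in \cref{prop:symmetries} to align the multipliers on the two sides of every edge, the invariance/equivariance of the atomic blocks $\scinvnet,\sceqnet,\resceqnet$ established in \cref{sec:method} closes the induction.
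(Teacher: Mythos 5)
Your proposal is correct and follows essentially the same route as the paper's proof: an induction over message-passing iterations, using the defining equivariance of $\sceqnet$ and the rescale property of $\resceqnet$ to propagate \cref{eq:vertex_symm,eq:edge_symm} from the initialisation through \cref{eq:msg_sceq,eq:updv_sceq}, together with permutation equivariance of the aggregation and invariance of the canonicalised/symmetrised DeepSets readout. The one place you are slightly too quick is the input/output boundary: it is not handled merely because $\bP_0=\bP_L=\bQ_0=\bQ_L=\bI$, since the edges incident to those layers still carry one-sided scalings, so the paper runs a separate induction there requiring, e.g., $\colormsg{\text{MSG}_{V,L}}$ to be rescale-\emph{invariant} and using distinct non-scale-equivariant functions for i/o vertices (\cref{sec:io_symm}).
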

Finally, we analysed the ability of \acronym\ to evaluate the input FFNN and its gradients, 
\ie \textit{simulate the forward and the backward pass of an input FFNN}. To see why this is a desired inductive bias, recall that a functional/operator can often be written as a function of input-output pairs (\eg via an integral on the entire domain) or of the input function's derivatives (\eg via a differential equation). By simulating the FFNN, one can reconstruct function evaluations and gradients, which an additional module can later combine. Formally (proof and precise statement in \cref{sec:proofs}):
\begin{theorem}\label{thrm:fw-bw-sim}
Consider an FFNN as per \cref{eq:ffnn} with activation functions respecting the conditions of \Cref{prop:symmetries}.
Assume a Bidirectional-\acronym\ with sufficiently expressive message and vertex update functions.
Then, \acronym\ can simulate both the forward and the backward pass of the FFNN for arbitrary inputs, when  \acronym's iterations (depth) are $L$ and $2L$ respectively.
\end{theorem}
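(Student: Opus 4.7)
The plan is to proceed by induction on \acronym's iteration count, using the bidirectional variant so that each vertex carries both a forward representation $\hvfw$ and a backward representation $\hvbw$. In the first $L$ iterations, forward neighborhoods propagate FFNN activations through $\hvfw$, and in the next $L$ iterations, backward neighborhoods propagate gradients through $\hvbw$. The key structural observation is that $\bx_\ell(i)$ and $\partial\loss/\partial\bx_\ell(i)$ transform under the symmetry of \cref{eq:all_symmetry} by multiplication with $q_\ell(i)$ and $q_\ell^{-1}(i)$ respectively, so both quantities embed consistently as coordinates of the corresponding scale-equivariant vertex representations. No symmetry-breaking choices are needed: the universality guarantees of $\scinvnet$ (via canonicalization or symmetrization) will eventually be invoked to realize the required sub-operations inside MSG and UPD.

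For the forward simulation I would maintain the invariant: after $t$ iterations every vertex $i$ with $\posl{i}\le t$ has a coordinate of $\hvfw^t(i)$ equal to $\bx_{\posl{i}}(i)$ evaluated on the given network input. The base case is secured by initializing input-layer vertex features with the input, all other vertex features with their biases, and edge features with their weights. For the inductive step, a vertex at layer $t+1$ receives from its forward neighbors vectors encoding $(\bx_t(j),\bW_{t+1}(i,j))$; the hadamard-product form of $\resceqnet$ inside the message produces summands $\bW_{t+1}(i,j)\bx_t(j)$, aggregation yields the pre-activation $\bW_{t+1}(i,:)\bx_t + \bb_{t+1}(i)$, and $\sceqnet$ in the update applies $\sigma_{t+1}$. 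Each such operation lies in the scale-equivariant function class, since each summand transforms as $q_{t+1}(i)q_t^{-1}(j)\cdot q_t(j)=q_{t+1}(i)$ and $\sigma_{t+1}(ax)=\phi_{\sigma,1}(a)\sigma_{t+1}(x)$ with $\phi_{\sigma,1}(q)=q$ on $D_{\sigma_{t+1}}$ for the activations under consideration.

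For the backward simulation I would run $L$ further iterations along backward neighborhoods. At iteration $L+1$, seed the output vertices with a fixed upstream gradient (\eg the identity for vector-valued $\cY$), keeping $\hvfw$ frozen so the cached forward activations remain available. At iteration $L+k$, a vertex $j$ at layer $L-k$ receives from backward neighbors $i$ at layer $L-k+1$ the quantity encoded in $\hvbw$, namely $\partial\loss/\partial\bx_{L-k+1}(i)$; combining this with $\sigma'_{L-k+1}$ of the pre-activation (computable from $\hvfw$) and the edge weights via $\resceqnet$, the chain rule $\partial\loss/\partial\bx_{L-k}(j)=\sum_i \bW_{L-k+1}(i,j)\,\sigma'_{L-k+1}(\text{pre}_i)\,\partial\loss/\partial\bx_{L-k+1}(i)$ is evaluated in a single message/update step. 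Scaling is consistent: differentiating $\sigma(ax)=a\sigma(x)$ in $x$ yields $\sigma'(ax)=\sigma'(x)$, so the triple product transforms as $q_{L-k+1}(i)q_{L-k}^{-1}(j)\cdot 1\cdot q_{L-k+1}^{-1}(i)=q_{L-k}^{-1}(j)$, matching the backward vertex-scaling at layer $L-k$.

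The main obstacle is certifying that every required sub-operation (affine combination with bias, the activation $\sigma$ and its derivative $\sigma'$, and the gradient chain-rule contraction) genuinely lies inside the scale-equivariant/rescale-equivariant function classes, rather than the larger class of arbitrary continuous maps; only then can the universality of $\scinvnet$ be invoked to instantiate MSG and UPD. This reduces to the scaling identities verified above, together with the fact that $\sigma'$ inherits an analogous equivariance identity by differentiating $\sigma(ax)=\phi_{\sigma,1}(a)\sigma(x)$. Once this is in hand, a standard approximation argument upgrades the pointwise simulation to uniform $\varepsilon$-approximation on compact domains of inputs and parameters, yielding the claimed $L$- and $2L$-iteration bounds for simulating the forward and backward pass respectively.
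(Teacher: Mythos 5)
Your forward-pass argument matches the paper's, but the backward half has a genuine gap at exactly the point the paper spends most of its effort on. You correctly compute that $\nabla_{\bx_\ell(i)}\loss$ transforms with $q_\ell^{-1}(i)$, but you then assert that this quantity ``embeds consistently as a coordinate of the corresponding scale-equivariant vertex representation.'' It does not: the architecture's equivariance constraint \eqref{eq:vertex_symm} forces \emph{every} coordinate of $\bh_V(i)$ to transform with $q_\ell(i)$, and the message and update functions are constrained to be equivariant in that direction. A function constrained to satisfy $g(q\bx) = q\,g(\bx)$ for all $q \in \mathbb{R}^+$ cannot output a quantity that must transform as $q^{-1}$, so the chain-rule contraction you describe is simply not realizable inside $\colormsg{\text{MSG}_{V,\text{BW}}}$ and $\colorupd{\text{UPD}_V}$ as you have set it up. (Your scaling check $q_{\ell+1}(i)\,q_{\ell}^{-1}(j)\cdot 1\cdot q_{\ell+1}^{-1}(i) = q_{\ell}^{-1}(j)$ verifies the transformation law of the gradient itself, not that the architecture can produce it.) The only case where this goes through as written is sign symmetry, where $q = q^{-1}$; for positive scaling (ReLU) it fails, and the theorem covers that case.

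The paper's resolution is to store the \emph{elementwise reciprocals} $(\gradz{\ell}{i})^{-1}$ and $(\gradx{\ell}{i})^{-1}$ in the vertex representation --- these do transform with $q_\ell(i)$ --- to seed the output vertices' positional encodings with the reciprocals of the output gradients, and to exploit the bidirectional construction's initialization of backward edges with inverted weights $\b{1}\oslash\bW$. Since the reciprocal of a sum is not the sum of reciprocals, recovering $\bigl(\sum_j \gradz{\ell+1}{j}\,\bW(j,i)\bigr)^{-1}$ from the aggregated reciprocal-valued messages additionally requires assuming that the composition of the scale-equivariant message and update maps around the sum is sufficiently expressive; the paper states this assumption explicitly. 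Your proposal is missing this entire device, and without it (or an explicit redefinition of the architecture to carry a second vertex channel with inverted scaling symmetry, together with correspondingly inverted-equivariant message and update functions, which would no longer be the Bidirectional-\acronym\ of the theorem statement) the backward simulation does not go through.
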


\section{Experiments}\label{sec:experiments}

\begin{table}[t]
\centering
\caption{INR classification on MNIST, F-MNIST, \cifar{} and Aug. \cifar{}. We train all methods on 3 seeds and report the mean and std.  (*) denotes the baselines trained by us and we report the rest as in the corresponding papers. 
Colours denote \textbf{\colorfirst{First}}, \colorsecond{Second} and \colorthird{Third}. \\}
\label{tab:inr-mnist-cls}
\resizebox{\textwidth}{!}{%
\begin{tabular}{@{}lcccc@{}}
\toprule
Method                 
& MNIST 
& 
F-MNIST
&  \cifar\ 
& Augmented \cifar\
\\ \midrule
MLP 
& $17.55 \pm 0.01$ 
& $19.91 \pm 0.47$  
& $11.38 \pm 0.34$*
& $16.90 \pm 0.25$
\\
Inr2Vec \cite{LuigiCSRSS23}
& $23.69 \pm 0.10$
& $22.33 \pm 0.41$
& -
& -
\\
DWS \cite{navon2023equivariant}                 & $85.71 \pm 0.57$ 
& $67.06 \pm 0.29$  
& $34.45 \pm 0.42$  
& $41.27 \pm 0.026$  
\\
NFN\textsubscript{NP}
\cite{zhou2024permutation}
& $78.50 \pm 0.23$*
& $68.19 \pm 0.28$*
& $33.41 \pm 0.01$*
& \colorthird{46.60 $\pm$ 0.07}
\\
NFN\textsubscript{HNP}
\cite{zhou2024permutation}
& $79.11 \pm 0.84$*
& \colorthird{68.94 $\pm$ 0.64}*
& $28.64 \pm 0.07$*
& $44.10 \pm 0.47$
\\
NG-GNN
\cite{kofinas2024graph}
& \colorthird{91.40 $\pm$ 0.60}
& $68.00\pm0.20$
& \colorthird{36.04 $\pm$0.44}*
& $45.70 \pm 0.20$*\\
\midrule
\acronym\ (Ours)
& \colorsecond{96.57 $\pm$ 0.10}
& \colorsecond{ 80.46 $\pm$ 0.32}
& \colorsecond{36.43 $\pm$ 0.41}  %
& \colorsecond{56.62 $\pm$ 0.24}\\
\acronym-B (Ours)        
& \textbf{\colorfirst{96.59 $\pm$ 0.24}}
& \textbf{\colorfirst{80.78 $\pm$ 0.16 }}
& \textbf{\colorfirst{38.82 $\pm$ 0.10}}
& \textbf{\colorfirst{56.95 $\pm$ 0.57}}\\ \bottomrule
\end{tabular}}
\end{table}

\begin{table}[t]
\centering
\caption{Generalization pred.: Kendall-$\tau$ on subsets of SmallCNN Zoo w/ ReLU/Tanh activations.
\\}
\label{tab:cnn-pred}
\resizebox{\textwidth}{!}{
\begin{tabular}{@{}lcccc|c@{}}
\toprule
Method
& \begin{tabular}[c]{@{}c@{}}CIFAR-10-GS\\ ReLU\end{tabular}
& \begin{tabular}[c]{@{}c@{}}SVHN-GS\\ ReLU\end{tabular}
& \begin{tabular}[c]{@{}c@{}}CIFAR-10-GS\\ Tanh\end{tabular}
& \begin{tabular}[c]{@{}c@{}}SVHN-GS\\ Tanh\end{tabular}
& \begin{tabular}[c]{@{}c@{}}CIFAR-10-GS\\ both act.\end{tabular} \\ 
\midrule
StatNN  \cite{unterthiner2020predicting}
& $0.9140 \pm 0.001$     
& $0.8463 \pm 0.004$
&  $0.9140\pm 0.000$
&  $0.8440 \pm 0.001$
& $0.915 \pm 0.002$\\
NFN\textsubscript{NP} 
\cite{zhou2024permutation}
& $0.9190 \pm 0.010$ 
& $0.8586 \pm 0.003$
&  $0.9251\pm 0.001$ 
&  $0.8580\pm 0.004$
& $0.922 \pm 0.001$\\
NFN\textsubscript{HNP} 
\cite{zhou2024permutation}
& \colorthird{0.9270 $\pm$ 0.001}
& \colorthird{0.8636 $\pm$ 0.002}
&  $0.9339\pm 0.000$
&  $0.8586\pm 0.004$
& \colorthird{0.934 $\pm$ 0.001}\\
NG-GNN 
\cite{kofinas2024graph}
& $0.9010\pm0.060$
& $0.8549\pm0.002$
&  \colorthird{0.9340 $\pm$ 0.001}
&  \colorthird{0.8620 $\pm$ 0.003}
& $0.931\pm0.002$\\ 
\midrule
\acronym{}  (Ours)
& \colorsecond{0.9276 $\pm$ 0.002}
& \textbf{\colorfirst{0.8689 $\pm$ 0.003}}
& \colorsecond{0.9418$\pm$ 0.005}
& \textbf{\colorfirst{0.8736 $\pm$ 0.003}}
& \colorsecond{0.941 $\pm$ 0.006}
\\
\acronym-B (Ours)
& \textbf{\colorfirst{0.9282 $\pm$ 0.003}}
& \colorsecond{0.8651 $\pm$ 0.001}
& \textbf{\colorfirst{0.9425 $\pm$ 0.004}}
& \colorsecond{0.8655 $\pm$ 0.004}
& \colorfirst{0.941 $\pm$ 0.000}
\\ \bottomrule

\end{tabular}}
\end{table}

\begin{table}[t]
\begin{minipage}[t]{0.48\textwidth}
\caption{Dilating MNIST INRs. MSE between the reconstructed and ground-truth image.\\}
\centering
\begin{tabular}{@{}ll@{}}
\toprule
Method             & MSE in $10^{-2}$                \\ \midrule
MLP
& $5.35\pm0.00$
\\
DWS \cite{navon2023equivariant}
& $2.58\pm0.00$
\\
NFN\textsubscript{NP} \cite{zhou2024permutation}
& $2.55\pm0.00$
\\
NFN\textsubscript{HNP} \cite{zhou2024permutation}
& $2.65\pm0.01$
\\
NG-GNN-0 \cite{kofinas2024graph} 
& $2.38\pm0.02$
\\
NG-GNN-64 \cite{kofinas2024graph}
& \colorthird{2.06 $\pm$ 0.01}
\\ \midrule
ScaleGMN (Ours)
& \colorsecond{2.56 $\pm$ 0.03}
\\
ScaleGMN-B (Ours)
& \colorfirst{1.89 $\pm$ 0.00}
\\ \bottomrule
\end{tabular}
\label{tab:mnist-editing}
\end{minipage}
\hfill
\begin{minipage}[t]{0.48\textwidth}
\caption{Ablation: Permutation equivariant models + scaling augmentations. 
}
\resizebox{\textwidth}{!}{%
\begin{tabular}{@{}lcc@{}}
\toprule
Method &
F-MNIST &
\multicolumn{1}{c}{\begin{tabular}[c]{@{}c@{}}CIFAR-10-GS\\ ReLU\end{tabular}} 
\\ \midrule
DWS \cite{navon2023equivariant}                   & $67.06 \pm0.29$ & --- \\
DWS \cite{navon2023equivariant} + aug.            & $71.42\pm0.45$  & --- \\
NFN\textsubscript{NP} \cite{zhou2024permutation}  & $68.19 \pm 0.28$ & $0.9190 \pm 0.010$ \\
NFN\textsubscript{NP} \cite{zhou2024permutation} + aug.  & $70.34\pm0.13$   &   $0.8474 \pm  0.01$  \\
NFN\textsubscript{HNP} \cite{zhou2024permutation} & $68.94 \pm 0.64$ & $0.9270 \pm 0.001$ \\
NFN\textsubscript{HNP} \cite{zhou2024permutation} + aug. & $70.24 \pm0.47$  &   $0.8906\pm0.01$  \\
NG-GNN \cite{kofinas2024graph}  & $68.0\pm0.20$ & $0.9010\pm0.060$\\ 
NG-GNN \cite{kofinas2024graph}   + aug.   & $72.01 \pm 1.4$  &  $0.8855\pm0.02$ \\ \midrule
ScaleGMN (Ours)    & $80.46\pm0.32$ &   $0.9276\pm0.002$\\
ScaleGMN-B (Ours)  & $80.78\pm0.16$ &   $0.9282\pm0.003$ \\ \bottomrule

\end{tabular}%
}
\label{tab:ablation}
\end{minipage}
\end{table}

\noindent\textbf{Datasets.} We evaluate \acronym\ on datasets containing NNs with three popular activation functions: \texttt{sine, ReLU} and \texttt{tanh}. 
The former is prevalent in INRs, which in turn are the most appropriate testbed for GMNs. We experiment with the tasks of \textit{(1) INR classification} (invariant task), \ie classifying functions (signals) represented as INRs and \textit{(2) INR editing} (equivariant), \ie transforming those functions. Additionally, \texttt{ReLU} and \texttt{tanh} are common in neural classifiers/regressors. A popular GMN benchmark for those is \textit{(3) generalisation prediction} (invariant), \ie predicting a classifier's test accuracy. Here, classifiers instead of FFNNs are typically CNNs (for computer vision tasks) and to this end, we extend our method to the latter in \cref{sec:CNNs}. 
We use existing datasets that have been constructed by the authors of relevant methods, are publicly released and have been repeatedly used in the literature  (8 datasets in total, 4 for each task). Finally, we follow established 
protocols: we perform a hyperparameter search and use the best-achieved validation metric throughout training to select our final model. We report the test metric on the iteration where the best validation is achieved.

\noindent\textbf{Baselines.}   
DWS \cite{navon2023equivariant} is a \textit{non-local} metanetwork that uses all linear permutation equivariant/invariant layers interleaved with non-linearities. NFN\textsubscript{HNP} \cite{zhou2024permutation} is mathematically equivalent to DWS, while NFN\textsubscript{NP} \cite{zhou2024permutation} makes stronger symmetry assumptions in favour of parameter efficiency. The two variants are also designed to process CNNs contrary to DWS. NG-GNN \cite{kofinas2024graph} 
converts each input NN to a graph (similar to our work) and employs a GNN (in specific PNA \cite{corso2020principal}). Importantly, the last three methods transform the input parameters with random Fourier features while all methods perform input normalisations to improve performance and facilitate training. These tricks are in general not equivariant to scaling and were unnecessary in the \acronym\ experiments (more details in \cref{sec:impl_details}).
On INR classification we include a naive MLP on the flattened parameter vector and Inr2Vec \cite{LuigiCSRSS23}, a task-specific non-equivariant method. For generalization prediction, we also compare to StatNN \cite{unterthiner2020predicting}, which predicts NN accuracy based on statistical features of its weights/biases.

\noindent\textbf{INR classification.} 
We design a \acronym\ with permutation \& \textit{sign} equivariant components (and invariant readout). Additionally, we experiment with the bidirectional version denoted as \textit{\acronym\-B}. We use the following datasets of increasing difficulty: \textit{MNIST INR, F-MNIST INR} (grayscale images) and \textit{CIFAR10-INR}, \textit{Aug. CIFAR10-INR.}:  INRs representing images from the MNIST \cite{lecun1998gradient}, FashionMNIST \cite{xiao2017fashion}  and CIFAR \cite{krizhevsky2009learning} datasets resp. and
an augmented version of CIFAR10-INR containing 20 different INRs 
for each image, trained from different initialisations (often called \textit{views}). The reported metric is \textit{test accuracy}. Note that \cite{zhou2024permutation, zhou2023neural} use only the augmented dataset and hence we rerun all baselines for the original version.
All input NNs correspond to functions $\signal_{G, \bth}: \mathbb{R}^2 \rightarrow \mathbb{R}^c$, \ie pixel coordinate to GS/RGB value. 
Further implementation details can be found in \cref{sec:inr_cls_impl}.
As shown in \cref{tab:inr-mnist-cls},
\textit{\acronym\ consistently outperforms all baselines in all datasets considered}, with performance improvements compared to the state-of-the-art ranging from approx. 3\% (CIFAR-10) to 13\% (F-MNIST).
While previous methods 
often resort to additional engineering strategies, such as probe features and advanced architectures \cite{kofinas2024graph} or extra training samples \cite{zhou2024permutation} to boost performance, in our case this was possible using vanilla \acronym s. For example, in  MNIST and F-MNIST, NG-GNN achieves $94.7\pm0.3$ and $74.2\pm0.4$ with 64 probe features which is apprpx. 2\% and 6\% below our best performance. The corresponding results for NG-T \cite{kofinas2024graph} (transformer) were $97.3\pm0.2$ and  $74.8\pm0.9$, still on par or approx. 6\% below \acronym. Note that all the above are orthogonal to our work and can be used to further improve performance.

\noindent\textbf{Predicting CNN Generalization from weights.}
As in prior works.
we consider datasets of image classifiers and measure predictive performance using Kendall’s $\tau$ \cite{kendall1938new}. We select the two datasets used in \cite{zhou2024permutation}, namely CIFAR-10-GS and SVHN-GS originally from \textit{Small CNN Zoo} \cite{unterthiner2020predicting}.
These contain CNNs with \texttt{ReLU} or \texttt{tanh} activations, which exhibit scale and sign symmetries respectively. 
To assess the performance of our method (1) on each activation function individually and (2) on a dataset with heterogeneous activation functions we discern distinct paths for this experiment.
In the first case, we split each dataset into two subsets each containing the same activation and evaluate all baselines. 
As shown in \cref{tab:cnn-pred}, once again \textit{\acronym{} outperforms all the baselines in all the examined datasets}. This highlights the ability of our method to be used across different activation functions and architectures. 
Performance improvements here are less prominent due to the hardness of the task, a phenomenon also observed in the comparisons between prior works. Note that in this case, additional symmetries arise by the softmax classifier \cite{kunin2021neural}, which are currently not accounted for by none of the methods.
We additionally evaluate our method on \textit{heterogeneous activation functions}. In principle, our method does \textit{not} impose limitations regarding the homogeneity of the activation functions of the input NNs - all one needs is to have a different canonicalisation/symmetrisation module for each activation. Experiments on CIFAR-10-GS show that \acronym{}  yields superior performance compared to the baselines, significantly exceeding the performance of the next-best model. Further implementation details can be found in \cref{sec:cnn-gen-impl}.

\noindent\textbf{INR editing.} Here, our goal is to transform the weights of the input NN, to modify the underlying signal to a new one. In this case, our method should be \textit{equivariant} to the permutation and scaling symmetries, such that every pair of equivalent input NNs is mapped to a pair of equivalent output NNs. Hence, we employ a ScaleGMN similar to the above experiments but omit the invariant readout layer. 
Following \cite{kofinas2024graph}, we evaluate our method on the MNIST dataset and train our model to dilate the encoded MNIST digits. Further implementation details can be found in \cref{seq:inr-editing-ap}. As shown in \cref{tab:mnist-editing}, our bidirectional 
\acronym-B achieves an MSE test loss ($10^{-2}$) equal to $1.891$, \textit{surpassing all permutation equivariant baselines}. Notably, our method also outperforms the NG-GNN \cite{kofinas2024graph} baseline that uses 64 probe features. Additionally, our forward variant, \acronym, performs on par with the previous permutation equivariant baselines with an MSE loss of $2.56$. Note that the performance gap between the forward and the bidirectional model is probably expected for equivariant tasks: here we are required to compute representations for every node of the graph, yet in the forward variant, the earlier the layer of the node, the smaller the amount of information it receives. This observation corroborates the design choices of the baselines, which utilize either bidirectional mechanisms (NG-GNN \cite{kofinas2024graph}) or non-local operations (NFN \cite{zhou2024permutation}).

\noindent\textbf{Ablation study: Scaling data augmentations.}
We baseline our method with permutation equivariant methods trained with scaling augmentations: For every training datapoint, at each training iteration, we sample a diagonal scaling matrix for every hidden layer of the input NN and multiply it with the weights/bias matrices as per \cref{eq:all_symmetry} (omitting the permutation matrices). 
We sample the elements of the matrices independently as follows: \textit{Sign symmetry:} Bernoulli distribution with probability 0.5. \textit{Positive scaling}: Exponential distribution where the coefficient $\lambda$ is a hyperparameter that we tune on the validation set. Observe here that designing augmentations in the latter case is a particularly challenging task since we have to sample from a \textit{continuous} and \textit{unbounded} distribution. Our choice of the exponential was done by consulting the norm plots where in some cases the histogram resembles an exponential \cref{fig:cifar_relu_scale_distr}. Nevertheless, regardless of the distribution choice we cannot guarantee that the augmentations will be sufficient to achieve (approximate) scaling equivariance, due to the lack of upper bound.
We evaluate on the F-MNIST dataset for the sign symmetry and on the CIFAR-10-GS-ReLU for the positive scale.
As shown in \cref{tab:ablation}, regarding the former, augmenting the training set leads consistently to better results when compared to the original baselines. \textit{None of these methods however achieved results on par with ScaleGMN and ScaleGMN-B.} \textit{On the other hand, we were unable to even surpass the original baselines regarding the latter task}. This indicates that designing an effective positive scaling augmentation might be a non-trivial task.

\noindent\textbf{Limitations.}
\label{sec:limitations}
A limitation of our work is that it is currently designed for FFNNs and CNNs and does not cover other layers that either modify the computational graph (\eg skip connections) or introduce additional symmetries (\eg softmax and normalisation layers). In both cases, in future work, we plan to characterise scaling symmetries (certain steps were made in \cite{godfrey2022symmetries}) and modify \acronym\ for general computational graphs as in \cite{lim2024graph, zhou2024universal}.
Additionally, a complete characterisation of the functions that can be expressed by our scale/rescale equivariant building blocks is an open question (except for sign \cite{lim2023expressive}). Finally, an important theoretical matter is a complete characterisation of the expressive power of \acronym, similar to all equivariant metanetworks.

\section{Conclusion}\label{sec:conclusion}
In this work, we propose \acronym\, a metanetwork framework that introduces to the field of NN processing a stronger inductive bias: accounting for function-preserving scaling symmetries that arise from activation functions. \acronym\ can be applied to NNs with various activation functions by modifying any graph metanetwork,
is proven to enjoy desirable theoretical guarantees and empirically demonstrates the significance of scaling by improving the state-of-the-art in several datasets.
With our work, we aspire to 
introduce a new research direction,
\ie
incorporating into metanetworks various NN symmetries beyond permutations, aiming to improve their generalisation capabilities
and broaden their applicability in various NN processing domains.

\section*{Acknowledgements}
IK, GB and YP were partially supported by project MIS 5154714 of the National Recovery and Resilience Plan Greece 2.0 funded by the European Union under the NextGenerationEU Program. This work was partially supported by a computational resources grant from The Cyprus Institute (HPC system ``Cyclone'') as well as from an AWS credits grant provided by GRNET – National Infrastructures for Research and Technology.
{
\bibliographystyle{plainnat}
\bibliography{references}
}

\newpage
\appendix

\section{Appendix / supplemental material}

\subsection{Additional practical considerations of \acronym}\label{sec:app_practical_consider}
In the following section, we describe the additional practical considerations that were omitted from \cref{sec:method}. For the purposes of this section, we introduce the functions $\posl{i}$ and $\posi{i}$ which return the layer of a vertex $i$ and its position within this layer respectively.

\subsubsection{Initialisation}\label{sec:app_init}
As previously discussed, all vertex/edge representations throughout the network should respect the symmetries given by Eqs. \eqref{eq:vertex_symm}, \eqref{eq:edge_symm}. This is straightforward to achieve at initialisation, by using the NN biases and weights (similarly to \cite{kofinas2024graph}). It is possible to use a scale equivariant layer here as well, \eg a linear layer, which is what was performed in our empirical evaluation. For notation convenience, denote the biases and weights of \cref{eq:ffnn}, with $\tilde{\bb}, \tilde{\bW}$, such that 
$\tilde{\bb}(i) = \bb_{\posl{i}}\left(\posi{i}\right)$,  
$\tilde{\bW}(i,j) = \bW_{\posl{i}}\left(\posi{i}, \posi{j}\right)$. Now we can define the initialisation:
\begin{align}\label{eq:init_sceq_v}
    \colorinit{
    \text{INIT}_V
    }
    \left(
    \xv\left(i\right)
    \right) = \bGamma_V\xv(i), \quad  \xv(i)=
    \begin{cases} 
     \tilde{\bb}\left(i\right), \text{ if } \posl{i} \in \{1, \dots, L\},\\
     1, \text{ if } \posl{i} = 0.
    \end{cases}
\end{align}
\begin{align}\label{eq:init_sceq_e}
    \colorinit{
    \text{INIT}_E
    }
    \left(
    \xe\left(i, j\right)
    \right) 
    = \bGamma_E\xe(i, j), \quad \xe(i, j)  = 
     \tilde{\bW}\left(i, j\right),
     \text{ if } \posl{i} \in \{1, \dots, L\}.
\end{align}
Note that the above is different from the initialisation/graph construction used in \cite{lim2024graph}. For example, in this work, an extra vertex is created for each bias, which is connected to its corresponding neuron vertex with an edge initialised with the bias value. However, this strategy would complicate scaling symmetry preservation and was thus not used in our work.

\subsubsection{Positional Encodings}\label{sec:app_pe}
As discussed in \cref{sec:prelims}, \textit{not all vertices/edges of an FFNN are permutable}. In particular, vertices corresponding to the input and output neurons are fixed, while vertices corresponding to hidden neurons can be permuted only within the same layer. Similarly, edges originating from an input neuron are permutable only with edges originating from the same neuron, and the same holds for edges terminating at an output neuron, while edges of hidden layers can be only permuted within the same layer. 

To adhere to the above, we need to devise a \textit{symmetry-breaking} mechanism, to ensure that the GMN is \textbf{not} equivariant to undesired symmetries (\eg permuting output neurons). This is the role of positional encodings, which are learnable vectors (in the literature fixed vectors have been also used), shared across all input graphs, that take part in message-passing. In particular, for vertex positional encodings the following holds:
\begin{equation}
 \bp_V(i) =   \bp_V(j), \text{ if } \posl{i} = \posl{j} \neq \{0, L\},   
\end{equation}
and for edge positional encodings:
\begin{align}
\bp_E(i,j) =   \bp_E(i',j'), \text{ if }
\begin{cases}
j=j', \posl{j} \in \{0,L\}, \\
\posl{j} = \posl{j'} \neq 0, \posl{i} = \posl{i'} \neq L,\\
i=i', \posl{i} \in \{0,L\},
\end{cases}
\end{align}
while otherwise, the positional encodings are free to take different values. Symmetry-breaking usually happens at initialisation (which is then inherited by the subsequent message-passing layers) and \cref{eq:gnns_general_init} becomes: 
\begin{align}
    &\hv^{0}(i) = 
    \colorinit{\text{INIT}_V}
    \left(
    \xv\left(i\right), \pv(i)
    \right), \quad
    \he^{0}(i)  = 
    \colorinit{\text{INIT}_E}
    \left(
    \xe\left(i, j\right), \pe(i,j)
    \right), \label{eq:gnns_pe_init}
\end{align}
or alternatively, it can be performed at all layers, where Eqs. \eqref{eq:gnns_general_msg} and \eqref{eq:gnns_general_upd} become:
\begin{align}
    &\mv^t(i) = 
    {\bigoplus}_{
    \colornb{j \in \cN(i)}}
    \colormsg{\text{MSG}^{t}_V}
    \Big(
    \hv^{t-1}(i), \hv^{t-1}(j), \he^{t-1}(i, j), \pv(i), \pv(j), \pe(i,j)
    \Big)
  \label{eq:gnns_pe_msg}\\
  &\hv^{t}(i) = 
  \colorupd{\text{UPD}^{t}_V}
   \Big(
   \hv^{t-1}(i), \mv^{t}(i), \pv(i)
    \Big).
   \label{eq:gnns_pe_updv}
\end{align}
\noindent\textbf{Challenge 3 - Incorporating positional encodings into scale equivariant layers.} 
Although the above formulation allows for the preservation of valid permutation symmetries, special treatment is needed to preserve valid scaling symmetries as well. Since the positional encodings are fixed across datapoints, they do not have the same scaling symmetries as vertex/edge representations. \textit{Therefore, message-passing components do not need to be equivariant to scaling of the positional encodings}.
In particular, the following should hold: 
$
\colorinit{
\text{INIT}_V
}
\left(q\bx, \bp \right) = q
\colorinit{
\text{INIT}_V
}
\left(\bx, \bp \right)$,
$\colorinit{
\text{INIT}_E
}
\left(q\be, \bp \right) = 
q
\colorinit{
\text{INIT}_E
}
\left(\be, \bp \right)
$,
$
\colormsg{
\text{MSG}_V
}
\left(q_x\bx, q_y\by,q_xq_y^{-1}\be,\bp_x,\bp_y,\bp_e\right) = q_x
\colormsg{
\text{MSG}_V
}
\left(\bx,\by,\be,\bp_x,\bp_y,\bp_e\right)$ and $
\colorupd{
\text{UPD}_V
}
\left(q_x\bx,q_x\b{m},\bp\right)=q_x
\colorupd{
\text{UPD}_V
}
\left(\bx,\b{m},\bp\right)$.

Therefore, Eqs. \eqref{eq:gnns_pe_init}, \eqref{eq:gnns_pe_msg}, \eqref{eq:gnns_pe_updv} introduce a new challenge (apart from  those discussed in \cref{sec:method}): being scale equivariant w.r.t. one argument and \textbf{not equivariant}. w.r.t. another one. Interestingly,  our ScaleNet formalism can straightforwardly deal with this case, by defining the \textit{canonicalisation} function for the non-symmetric argument to be the \textit{identity} (since in the absence of symmetry equivalent points must be equal). To discern this from the classical scale invariant/equivariant layers, we will refer to it
\textit{augmented scale invariant/equivariant layer} and it is formalised as follows:
\begin{align}
  &\augscinvnet^{k}(\bX, \bp) = \rho^{k}\left(\tilde{\bx}_1, \dots, \tilde{\bx}_n, \bp\right), \quad  \tilde{\bx}_i = \canon(\bx_i) \text{ or } \tilde{\bx}_i = \symm(\bx_i)\\
    &\augsceqnet = \sceqlayer^{K} \circ \dots \circ \sceqlayer^{1},  \  \sceqlayer^{k}(\bX) = \big(\bGamma^{k}_1 \bx_1, \dots, \bGamma^{k}_n \bx_n\big) \odot \augscinvnet^{k}(\bX, \bp).
\end{align}
Concluding, we obtain the new learnable functions, by simply replacing scale equivariant with augmented scale equivariant layers:
\begin{align}
    &
    \colorinit{
    \text{INIT}_V
    }
    \left(\bx, \bp\right) = \augsceqnet\left(\bx, \bp\right), \quad
    \colorinit{
    \text{INIT}_E
    }
    \left(\be, \bp \right) = \augsceqnet\left(\be, \bp\right), \label{eq:sceq_pe_init}\\
    &\colormsg{
    \text{MSG}_V
    }
    \left(\bx, \by, \be, \bp_x, \bp_y, \bp_e\right) =
\augsceqnet
    \left(
    \left[\bx,
    \resceqnet\left(\by,\be\right)
    \right], 
    \left[
    \bp_x,
    \bp_y,
    \bp_e
    \right]
    \right),\\
   &
   \colorupd{
    \text{UPD}_V
    }
   \left(\bx,\b{m},\bp_x\right) =\augsceqnet\left(\left[\bx,\b{m}\right],\bp_x\right).
\end{align}

\subsubsection{Edge Updates.}
To respect the symmetries given by \cref{eq:edge_symm}, for edges $(i,j)$, with $\posl{i} \in \{2, \dots, L-1\}$ the edge update function should have the following property:
$
\colorupd{
    \text{UPD}_E
    }
(q_x\bx, q_y \by, q_x q_y^{-1}\be) = q_x q^{-1}_y 
\colorupd{
    \text{UPD}_E
    }
(\bx, \by, \be)$. Observe, that this is once again reminiscent of the rescaling problems discussed in \cref{sec:method}, but with the difference that \textit{$q_y$ appears both as a multiplier and as a divisor}. The most straightforward solution would be to elementwise invert $\by$ and then proceed as in the message function of \cref{eq:msg_sceq}, \ie :
\begin{equation}\label{eq:upde_sceq1}
   \colorupd{
    \text{UPD}_E
    }
    \left(
    \bx, 
    \by,
    \be
    \right) =
\sceqnet
    \left(
    \left[ \be,
    \resceqnet
    \left(
    \bx,
    \bm{1}\oslash\by
    \right)
    \right]
    \right).
\end{equation}
In our experiments though, we opt for a simpler solution due to the numerical instabilities (high-magnitude gradients) produced by the element-wise division and choose an invariant layer for the arguments $\bx, \by$. Formally: 
\begin{equation}\label{eq:upde_sceq2}
   \colorupd{
    \text{UPD}_E
    }
    \left(
    \bx, 
    \by,
    \be
    \right) =
\augsceqnet
    \left(
     \be,
    \scinvnet
    \left(
    \bx, \by
    \right)
    \right).
\end{equation}
When we include positional encodings into the edge updates, \cref{eq:gnns_general_upd} (RHS) becomes:
\begin{align}
    & \he^{t}(i,j) = 
       \colorupd{
    \text{UPD}_E^t
    }
   \Big(\hv^{t-1}(i), \hv^{t-1}(j),
   \he^{t-1}(i,j),
   \pv(i), \pv(j), \pe(i,j)
    \Big),
\end{align} 
and \cref{eq:upde_sceq2} is further modified into:
\begin{align}
    &
       \colorupd{
    \text{UPD}_E
    }
    \left(
    \bx,
    \by,
    \be,
    \bp_x,
    \bp_y,
    \bp_e
    \right) =
\augsceqnet
    \left(
    \be,
    \left[
    \scinvnet
    \left(
    \bx,
    \by
    \right), 
    \bp_x,
    \bp_y,
    \bp_e
    \right]
    \right). \label{eq:edge_upd_pe}
\end{align}

\subsubsection{Updates on input and output vertices}\label{sec:io_symm}
So far, we have only 
discussed how vertex/edge representations corresponding to hidden neurons are updated, since Eqs. \eqref{eq:vertex_symm}, \eqref{eq:edge_symm} do not hold for input and output neurons. In particular, there exist no valid scaling or permutation symmetries for these neurons, while for the corresponding edges, permutations and scalings are unilateral - see \cref{eq:all_symmetry}. Therefore, we require the following: 
\begin{align}
\bh'_V(i)&=
    \bh_V\left(i\right),  \quad \ell =\posl{i} \in \{0, L\}, \quad \text{(No equivariance)}
\label{eq:vertex_symm_io}\\
\bh'_E(i,j)&= 
    q_\ell\left(\permind{\ell}{i}\right)
    \bh_E\left(
    \permind{\ell}{i},j\right),  \quad \ell =\posl{i}=1, \label{eq:edge_symm_io_1}\\
\bh'_E(i,j)&= 
    \bh_E\left(i, \permind{\ell-1}{j}\right) q_{\ell-1}\left(\permind{\ell-1}{j}\right),  \quad \ell =\posl{i}=L. \label{eq:edge_symm_io_2}
\end{align}
\textit{This implies that the conditions for the learnable functions should change for i/o neurons.}
In particular, the following should be \textbf{non-scale-equivariant}:
$
\colorinit{
\text{INIT}_{V,0}
}
$, 
$
\colormsg{
\text{MSG}_{V,0}
}
$\footnote{This is redundant in the forward \acronym\ since input neurons do not have any forward neighbours. In the backward case - see \cref{sec:bidirectional}, we use a layer similar to $
\colormsg{
\text{MSG}_{V,L}
}$.},
$
\colorupd{
\text{UPD}_{V,0}
}
$,
$
\colorinit{
\text{INIT}_{V,L}
}
$,
$
\colorupd{
\text{UPD}_{V,L}
}
$. The same should hold for ${\color{Blue}\text{MSG}_{V,L}}$, but with the additional requirement of invariance to rescaling:
$
\colormsg{
\text{MSG}_{V,L}
}
(\bx, q_y \by, q_y^{-1}\be, \bp_x, \bp_y, \bp_e) = \colormsg{
\text{MSG}_{V,L}
}
(\bx, \by, \be, \bp_x, \bp_y, \bp_e)$

In traditional GNNs, where only permutation symmetries are taken into account, symmetry-breaking can be efficiently handled by positional encodings as discussed in \cref{sec:app_pe}. However, this is not the case here, since positional encodings only break permutation symmetries, but not scaling symmetries. Instead, we resort to a simpler solution (albeit incurring an increased number of parameters):\textbf{ we use different initialisation, message and update functions for input/output neurons.} $
\colorinit{
\text{INIT}_{V,0}
}
$, 
$
\colormsg{
\text{MSG}_{V,0}
}
$,
$
\colorupd{
\text{UPD}_{V,0}
}
$,
$
\colorinit{
\text{INIT}_{V,L}
}
$,
$
\colorupd{
\text{UPD}_{V,L}
}
$
are general function approximators (MLPs), while
$
\colormsg{
\text{MSG}_{V,L}
}
(\bx, \by, \be, \bp_x, \bp_y, \bp_e)=\text{MLP}\left(\bx,\resceqnet\left(\by, \be\right), \bp_x, \bp_y, \bp_e\right)$.

As for the i/o edge representations, the following should hold at initialisation:
$
\colorinit{
\text{INIT}_{E,1}
}
(q_x\be, \bp) = q_x
\colorinit{
\text{INIT}_{E,1}
}
(\be, \bp),
\colorinit{
\text{INIT}_{E,L}
}
(q_y^{-1}\be, \bp) = q_y^{-1}
\colorinit{
\text{INIT}_{E,L}
}
(\be, \bp)$,
and for the edge updates:
$
\colorupd{
\text{UPD}_{E,1}
}
(q_x\bx, \by, q_x\be, \bp_x, \bp_y, \bp_e) = q_x
\colorupd{
\text{UPD}_{E,1}
}
(\bx, \by, \be, \bp_x, \bp_y, \bp_e)$,
$
\colorupd{
\text{UPD}_{E,L}
}
(\bx, q_y\by, q_y^{-1}\be, \bp_x, \bp_y, \bp_e) = q_y^{-1}\colorupd{
\text{UPD}_{E,L}
}
(\bx, \by, \be, \bp_x, \bp_y, \bp_e)$. Therefore, edge initialisation should be equivariant to the edge scale and therefore can remain as is - \cref{eq:sceq_pe_init}. The same holds for the edge updates - \cref{eq:edge_upd_pe}, which allows us to retain the same architecture in our experiments, although the interested reader may observe that one can use more powerful architectures since there is no need for rescaling here.

\subsubsection{Readout}
Regarding the readout, we devise a permutation and scale invariant aggregator as follows:
\begin{align}
\colorread{
\text{READ}_V}
    \left(
    \bX
    \right) &=
    \text{MLP}\left([\mathsf{DeepSets}\left(
    \tilde{\bx}_1, \dots, \tilde{\bx}_n\right),\bX_{0}, \bX_{L}]\right), \tilde{\bx}_i = \canon(\bx_i) \text{ or } \tilde{\bx}_i = \symm(\bx_i).
\end{align}
This means that 
we aggregate all \textit{scale canonicalised/symmetrised} representations of hidden neurons with a permutation invariant aggregator (\textit{Deepsets} \cite{zaheer2017deep}), concatenate the result with all i/o neuron representations (that cannot be permuted or scaled) and transform the result with a universal function approximator. A simpler alternative is to only concatenate the output neuron representations (or also the input for the bidirectional version) since after sufficient message-passing iterations they are expected to have collected information from the entire graph. We include and evaluate both choices in our hyperparameter grid.

\subsection{Bidirectional \acronym}\label{sec:bidirectional}
Undoubtedly, using only forward neighbourhoods, although convenient and aligned with the computational graph of the input datapoint, may restrict the expressive power of \acronym. This is self-explanatory since vertices receive information only from parts of the graph that belong to earlier layers. This might be detrimental, especially for equivariant tasks (operators). Additionally, it is contrary to the first works on weight space networks \cite{navon2023equivariant, zhou2023neural}, where vertices/edges collect information from the entire graph, while backward neighbourhoods are crucial for GMNs to express NP-NFN \cite{zhou2023neural} as shown in Proposition 3 in \citet{lim2024graph}. 

However, making \textit{bidirectional message passing} scale equivariant is challenging since backward messages have different symmetries than forward ones. In particular, preserving the same assumptions for the symmetries of vertex/edge representations, the input triplet to a message function $\hv(i),  \hv(j), \he(j,i)$, with $j \in \cN_B(i)$ and $\ell = \posl{i}$ is equivalent to $q_\ell(i)\hv(i),  q_{\ell + 1}(j)\hv(j), q_{\ell + 1}(j)\he(j,i) q_{\ell}^{-1}(i)$ (we have omitted permutations here for simplicity). Therefore, doing the same rescaling as before is inappropriate since it does not preserve the desired scaling coefficient $q_\ell(i)$.

There are several options one may follow to circumvent this discrepancy. However, most involve an elementwise division of a certain representation, which we experimentally observed to face numerical instabilities. The simplest one is to \textit{add backward edges and initialise them with inverted features}:
\begin{align}\label{eq:init_sceq_e_bw}
    \colorinit{
    \text{INIT}_E
    }
    \left(
    \xe\left(j,i\right)
    \right) 
    = \bGamma_E \b{1}\oslash{\xe(j,i)}, \quad \xe(j,i)  = 
     \tilde{\bW}\left(j,i\right),
     \text{ if } 0\leq\posl{j}< \posl{i}\leq L.
\end{align}
In this case, we have that 
$\colorinit{
    \text{INIT}_E
    }
    \left(q_{\ell + 1}(j)
    q_{\ell}^{-1}(i)
    \xe\left(j,i\right)
    \right) =
q_{\ell}(i)   q_{\ell + 1}^{-1}(j)
\colorinit{
    \text{INIT}_E
    }
    \left(
    \xe\left(j,i\right)
    \right)
    $,
and thus $(j,i)$ are initialised with the same symmetries as $(i,j)$. Now one can proceed as before with the message and update functions. In particular, we introduce a backward message function to discern the two message directions (alternatively this can be done with edge positional encodings to reduce the number of parameters):
\begin{align*}
    &\b{m}_{V, \text{BW}}^{t}(i) = 
\bigoplus\limits_{
\colornb{
j \in
\cN_B(i)}
}
\colormsg{
\text{MSG}^{t}_{V,\text{BW}}
}
(\hv^{t-1}(i),  
\hv^{t-1}(j), 
    \he^{t-1}(j,i),
    \pv(i),
    \pv(j),
    \pe(j,i))\\
& 
\colormsg{
\text{MSG}_{V,\text{BW}}
}
(
\bx, 
\by, \be, \bp_x, \bp_y, \bp_e) = \augsceqnet
    \left(
    \left[
    \bx,
    \resceqnet\left(
    \by,\be\right)
    \right], 
    \left[
    \bp_x,
    \bp_y,
    \bp_e
    \right]
    \right).\notag
\end{align*}
It is easy to see that: 
\begin{align*}
\colormsg{
\text{MSG}_{V,\text{BW}}
}
(
q_x\bx, 
q_y\by,
q_y^{-1} q_x\be, \bp_x, \bp_y, \bp_e) = {q_x}
\colormsg{
\text{MSG}_{V,\text{BW}}
}
(\bx,\by,\be, \bp_x, \bp_y, \bp_e).    
\end{align*}
Now to incorporate backward messages into the update function we simply concatenate the outputs of the two message functions and apply a scale equivariant layer as usual:
\begin{equation}
\begin{split}
&\hv^{t}(i) = 
\colorupd{
\text{UPD}^{t}_V
}
   \Big(
   \hv^{t-1}(i), 
   \b{m}_{V, \text{FW}}^{t}(i), 
    \b{m}_{V, \text{BW}}^{t}(i),
   \pv(i)
    \Big),\\
& \colorupd{
\text{UPD}_V
}\left(\bx,\b{m}_{\text{FW}}, \b{m}_{\text{BW}},\bp_x\right) =\augsceqnet\left(\left[\bx, \b{m}_{\text{FW}}, \b{m}_{\text{BW}}\right],\bp_x\right).
\end{split}
\end{equation}
Once, again the desired symmetries are preserved: 
\begin{equation*}
 \colorupd{
\text{UPD}_V
}
(
q_x\bx,
q_x\b{m}_\text{FW},
q_x\b{m}_\text{BW},
\bp_x) = q_x
 \colorupd{
\text{UPD}_V
}
(
\bx,
\b{m}_\text{FW},
\b{m}_\text{BW},
\bp_x).   
\end{equation*}

\noindent\textbf{Bidirectional \acronym\ for sign symmetries.} Importantly, the above edge weight inversion of \cref{eq:init_sceq_e_bw} is not necessary when dealing with sign symmetries. In particular, for any $q \in \{-1,1\}$ it holds that $\frac{1}{q} = q$. Therefore, one can proceed with the above formulation but initialise the edge weights as before -\cref{eq:init_sceq_e}. We found this crucial in our experimental evaluation since weight inversion led to an unusual distribution of input features: since typically the weight distribution was nearly Gaussian, the distribution of the inverses was close to \textit{reciprocal normal distribution}, a bimodal distribution with undefined mean and higher-order moments. This frequently led to numerical instabilities as well (albeit less so compared to computing reciprocals of neuron/edge representations), which explains why it was easier to train bidirectional models for sign symmetric NNs compared to positive-scale symmetric ones.

\subsection{Extension to CNNs}\label{sec:CNNs}
A similar methodology to \cref{sec:method} can be applied to Convolutional Neural Networks with only a few changes. As observed by prior works on metanetworks (\citet{zhou2024permutation, kofinas2024graph}), the permutation symmetries in a CNN arise similarly to MLPs; permuting the filters (output channels) of a hidden convolutional layer, while also permuting the input channels of each filter in the subsequent layer does not alter the underlying function of the network. This is easy to see since convolution is nothing else but a linear layer with weight sharing. Weight sharing constrains the valid permutations as far as pixels/input coordinates are concerned (\eg one cannot permute the representations corresponding to different pixels), but allows for permutations of input/output channels, except for the input/output layers as always. 

Similar rules apply to scaling: one may scale the output channels of a hidden convolutional layer, while also scaling the input channels of each filter in the subsequent layer as per \cref{prop:symmetries}. Note here that, again, all weights corresponding to the same input channel can only be scaled with the same multiplier for all pixels/input coordinates, due to weight sharing. Below, we describe our implementation for all CNN layers, which closely follows \cite{kofinas2024graph}.

\noindent\textbf{Convolution.}
Being the main building block of CNNs, convolutional layers consist of kernels and biases. For a hidden layer $i$ we can write the former as $\bW_i \in \mathbb{R}^{d_{\ell} \times d_{\ell-1} \times k_\ell}$ and the latter as $\bb_i \in \mathbb{R}^{d_\ell}$ following the FFNN notation. Aligning with the procedure of \cite{kofinas2024graph}, we construct one vertex for each i/o channel and the input graph has the following node and edge features: $\bx_V(i) = \tilde{\bb}(i) \in \mathbb{R}$, $\bx_E(i, j) = \tilde{\bW}(i,j) \in \mathbb{R}^{w_{\text{max}} \cdot h_{\text{max}}}$ respectively, where $w_{\text{max}}, h_{\text{max}}$ are the maximum width and height of the kernels across layers (zero-padding is used to allow for fixed-size edge representations across all vertices in the graph). Again here, the positional encodings are responsible for the \textit{permutation symmetry breaking mechanisms} within the CNN. 
As with FFNNs, the CNN input and output neurons are not permutable, while the hidden neurons are permutable only within the layer they belong. 
Given that filter positions are not permutable and are associated with a single neuron, \textit{they are represented as vectors on the edge features}.

\noindent\textbf{Average Pooling.}
Average pooling is commonly placed after the last convolutional layer (typical in modern CNNs instead of flattening + linear layer to allow for variable-size images) and is responsible for pooling the output feature map in a single feature vector. First off, since it is a linear operation, it is amenable to scaling symmetries. It is easy to see this by perceiving average pooling as a $KxK$ convolution with no learnable parameters (equal to $\frac{1}{K^2}$, where $K$ the dimension of the image, and a single output channel. Hence, no additional modifications are required, similarly to \cite{kofinas2024graph}.

\noindent\textbf{Linear layer.}
Commonly used as the last layer of CNNs to extract a vector representation of an image, linear layers require a different approach within CNNs compared to the one in MLPs, as in the former case the edge features are vectors, while in the latter scalars. Aligning with the method proposed by \cite{kofinas2024graph}, the simplest solution is to zero-pad the linear layers to the maximum kernel size and consequently flatten them to meet the dimension of the rest of the edge features. By handling linear layers this way, no additional measures are needed to ensure permutation and scale equivariance.

\subsection{Implementation Details} \label{sec:impl_details}
\subsubsection{INR Classification} \label{sec:inr_cls_impl}
\noindent\textbf{Datasets:} We evaluated our method on three INR datasets. Provided as open-source by \citet{navon2023equivariant}, the datasets MNIST and FashionMNIST contain a single INR for each image of the datasets MNIST \cite{lecun1998gradient} and FashionMNIST \cite{xiao2017fashion} respectively. The selection of these datasets was encouraged by the fact that they were also selected by prior works, establishing them as a useful first benchmark on INR metanetworks. As a third INR dataset, we use CIFAR-10, publicly available by \citet{zhou2024permutation}, which contains one INR per image from CIFAR10 \cite{krizhevsky2009learning}. Regarding the last dataset, the authors also provide an augmented training dataset, consisting of 20 additional copies of each INR network, but with different initializations. Training on such a bigger dataset allows them to achieve better results, probably because it allows them to counteract overfitting. To demonstrate the capabilities of our method we train \acronym{} both on the "original" dataset as well as on the augmented one, achieving better results in both cases. In the latter case we carefully follow the training and evaluation procedure used in \cite{zhou2024permutation}, i.e. we train for 20000 steps and evaluate every 3000 steps. In all the above datasets, we use the same splits as in prior works, we train for 150 epochs and report the test accuracy based on the validation split, following the same procedure as \citet{navon2023equivariant}.

\noindent\textbf{Hyperparameters:} The GNN used for \acronym{} is a traditional Message Passing Neural Network \cite{GilmerSRVD17}, with carefully designed update and message functions that ensure sign equivariance. We optimise the following hyperparameters: batch size in $\{64, 128, 256\}$, hidden dimension for node/edge features in $\{64, 128, 256\}$. We also search learning rates in $\{0.001, 0.0005, 0.0001\}$, weight decay in $\{0.01, 0.001\}$, dropout in $\{0, 0.1, 0.2\}$ and number of GNN layers in $\{2, 3, 4, 5\}$. Moreover, we experiment with using only vertex positional encodings or also employing edge positional encodings. We apply layer normalization within each MLP and use skip connections between each GNN layer. The last two steps were proven valuable to stabilise training. Finally, for each MLP within the architecture we use SiLU activation function, one hidden layer and no activation function after the last layer. All the experiments were conducted on NVIDIA GeForce RTX 4090.

As mentioned above, the core of all building blocks, \ie Scale Invariant Net is designed to be invariant to sign flips.  In our experiments, we explore two means of designing sign equivariance, either by elementwise absolute value $|\bx|$ (canonicalization only in 1-dimension), or symmetrisation, $\symm(\bx)  = \text{MLP}(\bx) + \text{MLP}(-\bx)$. Although the former leads to reduced expressivity, in some cases we observed better performance. A possible explanation could point to the extra training parameters that are added because of the additional MLP.

\noindent\textbf{Results:} As shown in table \cref{tab:inr-mnist-cls}, our method is able to
push the state-of-the-art in all the datasets considered. Notably, many additional techniques that were employed by previous works in order to achieve good results were not required in our method. Specifically, probe features when inserted into the node features as in \cite{kofinas2024graph} were proven to boost the performance much higher, and could potentially enhance the performance of \acronym{} too. In the reported results we compare with variants of \cite{kofinas2024graph} that do not use probe features and rely solely on the parameters of the input network, although our method outperforms even the probe features variants. Moreover, \cite{zhou2024permutation} and \cite{kofinas2024graph} also apply random Fourier features during the initialization of the weight and bias features, which in our experiments showcased better performance when used in a traditional MPNN framework, but were not employed for our method. In our framework, we also do not include any normalization of the input as conducted by previous works, where the normalization is applied either on parameter level (\cite{zhou2024permutation}, \cite{navon2023equivariant}) or on layer level \cite{kofinas2024graph}, computing the mean and standard deviation. Although this step typically leads to better performance, it could not be applied in our scale and permutation equivariant framework. Nevertheless, even without all the above techniques, \acronym{} achieves remarkable results. Finally, opting for a Transformer, as in \cite{kofinas2024graph}, instead of a GNN layer is also an orthogonal to our work addition that could enhance the performance of \acronym{}.

\subsubsection{Predicting CNN Generalization from weights} \label{sec:cnn-gen-impl}
For evaluating our method on the CNN Generalization task, we select CIFAR-10-GS and SVHN-GS datasets from \cite{unterthiner2020predicting}, following the evaluation of prior works (\citet{zhou2024permutation}, \citet{zhou2023neural}, \citet{lim2024graph}, \citet{kofinas2024graph}). The above datasets contain CNNs trained with ReLU and Tanh activation functions. In order to assess our method separately for each of these activation functions, as they introduce different symmetries, we distinguish two different types of experiments. In the first case we deviate from the path of previous evaluations and split each of these datasets into two subsets. The first one contains CNNs trained with ReLU activation function and the second networks with Tanh. In order to compare with previous methods, we train them on each subset and report Kendall’s correlation $\tau$ metric \cite{kendall1938new} as originally proposed by \citet{zhou2024permutation}.

\noindent\textbf{Heterogeneous activation functions:}
In the second case, we evaluate \acronym{} on the whole CIFAR-10-GS and SVHN-GS datasets. We are able to do so, as our method does not impose any limitations regarding the homogeneity of the activation functions of the input neural networks. The sole necessary modification lies in the construction of the invariant layer. Specifically, following the notation of \cref{eq:scaleeqnet}, each equivariant layer $\sceqlayer^{k}$ is now equipped with two invariant nets, $\scinvnet^{k}_{relu}$ and $\scinvnet^{k}_{tanh}$, to be applied on the datapoints with ReLU and Tanh activation functions respectively. The rest of the architecture remains the same.

\noindent\textbf{Hyperparameters:} For the ReLU datasets we implement scale equivariant update and message functions, while for the Tanh sign equivariant. In all of our experiments we use MSE training loss. Regarding the hyperparameters we search: batch size in $\{32, 64, 128\}$, hidden dimension for node/edge features in $\{64, 128, 256\}$. We also search learning rates in $\{0.001, 0.0005, 0.0001\}$, weight decay in $\{0.01, 0.001\}$, dropout in $\{0, 0.1, 0.2\}$ and number of GNN layers in $\{2, 3, 4, 5\}$. Again, we experiment with using only node positional encodings or also employing edge positional encodings and apply layer normalization within each MLP. For the Tanh datasets, we also experiment with the applying canonicalization or symmetrization of the sign symmetry. Finally, for the experiments with heterogeneous activation functions, we apply the same hyperparameter search as above.

\subsubsection{INR editing}\label{seq:inr-editing-ap}
To evaluate the performance of ScaleGMN on equivariant tasks we opted for the task of INR editing. Aligning with the setup of \cite{kofinas2024graph}, we utilize the MNIST INR dataset provided by \citet{navon2023equivariant} and evaluate our method on dilating the encoded MNIST digits. Specifically, for each image $i$ of the dataset, where $i \in [N]$, we compute the ground truth dilated image using the OpenCV image processing library and denote it as $f_i$, where $f_i: \mathbb{R}^2 \rightarrow \mathbb{R}$. Let $\bth_i$ denote the INR parameters of the $i$-th image and $f_{\text{SIREN}}$ the encoded function. Then, $f_{\text{SIREN}}(x, y; \bth_i)$ is the output of the INR at the $(x,y)$ coordinates, when parameterized by $\bth_i$. The updated INR weights are computed as: $\bth_i^\prime = \bth_i + \gamma \cdot \text{\acronym}(\bth_i)$, where $\gamma$ a learned scalar initialized to $0.01$. Finally, we minimize the mean squared error on the function space, that is:
\begin{equation}\label{eq:inr-editing}
    \mathcal{L} = \frac{1}{N\cdot d^2} \sum_{i=1}^N \sum_{x,y}^d \lVert f_{\text{SIREN}}(x,y;\bth_i^\prime) - f_i(x,y) \rVert_2^2
\end{equation}

Simply put, we compare the reconstructed new INR with the dilated ground truth image. Again, we do not apply any normalizations nor feed the model samples of the encoded input image (probe features).

\noindent\textbf{Hyperparameters:}
The GNN used for \acronym{} is a traditional Message Passing Neural Network \cite{GilmerSRVD17}, with carefully designed update and message functions that ensure sign equivariance. In contrast to the above experiments we do not add a final readout layer, as we do not compute any final graph embedding. We optimise the following hyperparameters: batch size in $\{64, 128, 256\}$, hidden dimension for node/edge features in ${64, 128, 256}$ and number of GNN layers in $\{8, 9, 10, 11\}$. We also search learning rates in $\{0.001, 0.0005, 0.0001\}$, weight decay in $\{0.01, 0.001\}$ and dropout in $\{0, 0.1, 0.2\}$. Moreover, we experiment with using only vertex positional encodings or also employing edge positional encodings. We apply layer normalization within each MLP and use skip connections between each GNN layer. The last two steps were proven valuable to stabilise training. Finally, for each MLP within the architecture we use SiLU activation function, one hidden layer and no activation function after the last layer. The learned scalar $\gamma$, also used in previous work \citet{zhou2024permutation}, was always initialized at the same value $0.001$ and consistently led to better results. All the experiments were conducted on NVIDIA GeForce RTX 4090.

\noindent\textbf{Results:}
As shown in \cref{tab:mnist-editing}, the bidirectional variant achieves better results than all the previous works on metanetworks, without incorporating any of the additional techniques described above. Notably, ScaleGMN-B surpasses the performance of the graph-based permutation equivariant baseline NG-GNN  equipped with 64 probe features \cite{kofinas2024graph}. Interestingly, our method demonstrated much better results as we added more GNN layers. This experimental observation possibly corroborates with the findings of \cref{thrm:fw-bw-sim}. Regarding our forward variant, we can see that although it is able to perform on par with the previous baselines, it struggles to match the results achieved by the bidirectional variant. This behaviour, observed exclusively in this task, is theoretically anticipated for equivariant tasks. Specifically, in the forward case, neurons of layer $\ell$ only receive information from layers $\ell^\prime < \ell$, which hinders the computation of meaningful embeddings for nodes of the first layers. Taking into account that the INR models are rather shallow (only 2 hidden layers of 32 neurons each), then a big proportion of the whole set of nodes is being updated with less meaningfull information.

\subsubsection{Extra symmetries of the sine activation function}

\begin{algorithm}[h!]
\caption{Bias shift (elimination of symmetries due to periodicity)}
\label{algo:bias_shift}
\begin{algorithmic}[1]
\STATE \textbf{Input:} $b$:  bias $\bb_\ell(i)$, $\bw$:  weight row $\bW_\ell(i,:)$
\IF{$b < 0$}
    \STATE $b_1 \leftarrow -b$; $\bw_1 \leftarrow -\bw$;
    $q_1 \leftarrow -1$ \tcp{$\sin(\omega_0\bw_1^\top \bx_{\ell-1} + b_1) = - \sin(\omega_0\bw^\top \bx_{\ell-1} + b)$}
\ELSE
    \STATE $b_1 \leftarrow b$; $\bw_1 \leftarrow \bw$; $q_1 \leftarrow 1$
\ENDIF
\tcp{Now $b_1\geq0$.}
\IF{$b_1 > 2\pi$}
    \STATE $b_2 \leftarrow b_1\mod2\pi$; $\bw_2 \leftarrow \bw_1$;
    $q_2 \leftarrow 1$ \tcp{$b_1 = 2\pi k + b_2 \Rightarrow \sin(\omega_0 \bw_2^\top \bx_{\ell-1} + b_2) = \sin (\omega_0 \bw_1^\top \bx_{\ell-1} + b_1)$}
\ELSE
    \STATE $b_2\leftarrow b_1$; $\bw_2 \leftarrow \bw_1$;  $q_2 \leftarrow 1$
\ENDIF
\tcp{Now $0\leq b_2\leq 2\pi$.}
\IF{$\pi < b_2 \leq 2\pi$}
    \STATE $b_3 \leftarrow b_2 - \pi$; $\bw_3 \leftarrow \bw_2$;
    $q_3 \leftarrow -1$ \tcp{$\sin(\omega_0 \bw_3^\top \bx_{\ell-1} + b_3) = -\sin (\omega_0 \bw_2^\top \bx_{\ell-1} + b_2)$}
\ELSE
    \STATE $b_3 \leftarrow b_2$; $\bw_3 \leftarrow \bw_2$;$q_3 \leftarrow 1$
\ENDIF
\tcp{Now $0\leq b_3\leq \pi$.}
\IF{$0 \leq b_3 \leq \pi/2$}
    \STATE $b_4 \leftarrow b_3$; $\bw_4 \leftarrow \bw_3$; $q_4 \leftarrow 1$
    \STATE \textbf{Return:} $(b_4, \bw_4, q_4)$
\ELSE
    \STATE $b_4 \leftarrow b_3 - \pi$; $\bw_4 \leftarrow \bw_3$; $q_4 \leftarrow -1$ \tcp{$\sin(\omega_0 \bw_4^\top \bx_{\ell-1} + b_4) = -\sin (\omega_0 \bw_3^\top \bx_{\ell-1} + b_3)$}
    \STATE \textbf{Return:} $(b_4, \bw_4, q_4)$
\ENDIF
\tcp{Now $-\pi/2 < b_4\leq \pi/2$.}
\STATE \textbf{Output:}  New bias $\tilde{\bb}_\ell(i) \leftarrow \prod q_i * b_4$ and new weight row $\tilde{\bW}_\ell(i,:) \leftarrow \prod q_i * w_4$. 
\tcp{$\sin(\prod q_i \omega_0 \bw_4^\top \bx_{\ell-1} + \prod q_i b_4) = \prod q_i q_4\sin (\omega_0 \bw_3^\top \bx_{\ell-1} + b_3) = \dots = \sin (\omega_0 \bw^\top \bx_{\ell-1} + b)$}
\end{algorithmic}
\end{algorithm}

Regarding the \texttt{sine} activation function, one can easily show the presence of an additional symmetry due to the periodicity of harmonic functions. To see this, recall from \cref{eq:ffnn} the formula for a feedforward $L$-layer SIREN:
\begin{equation*}
\begin{split}
    \bx_0 = \bx, \quad
    \bx_\ell(i) = \sin{\big(\omega_0 \bW_\ell(i,:) \bx_{\ell-1} + \bb_\ell(i)\big)}.
\end{split}
\end{equation*}

We will prove that the following guarantees function preservation:
\begin{align}\label{eq:all_symmetry_phase}
\bW_{\ell}^{\prime} = \bQ_{\ell}
\bW_{\ell}, 
\  
\bb_\ell^{\prime} = \bQ_{\ell} \bb_\ell + \bO_\ell \pi
\ \Rightarrow (\bW'_\ell, \bb'_\ell)_{\ell=1}^L 
\simeq
(\bW_\ell, \bb_\ell)_{\ell=1}^L,
\end{align}
with $\bQ_{\ell}$ defined as diagonal sign matrices,  $\bO_\ell = 2\mathbf{M}_\ell + \text{inv-sign}(\bQ_\ell)$, where $\mathbf{M}_\ell = \text{diag}\big(m_\ell(1), \dots m_\ell(d_\ell)\big)$, $m_\ell(i) \in \mathbf{Z}$ and we defined $\text{inv-sign}(q) = \begin{cases}
    0, \ q\geq0\\
    1, \  q<0.
\end{cases}$ for compactness (we have omitted permutation matrices here for brevity). Observe here that the matrix transformations are ``single-sided'', contrary to \cref{eq:all_symmetry}. We show by induction that, when \cref{eq:all_symmetry_phase} holds, then $\bx_\ell' = \bx_\ell$.

\textit{Base case:} $\bx_0' = \bx_0$ by defintion.
\textit{Induction step:}
Assume $\bx_{\ell-1}' = \bx_{\ell-1}$. In order to have $\bx_\ell' = \bx_\ell, \ \forall \bx \in \cX$,  it suffices that $\exists m_\ell(i) \in \mathbb{Z}, q_\ell(i) \in \{-1, 1\}$:
\begin{align*}
& 
\omega_0 \bW'_\ell(i,:) \bx_{\ell-1} + \bb_\ell'\left(i\right) = 
\left(\omega_0 \bW_\ell(i,:) \bx_{\ell-1} + \bb_\ell\left(i\right)\right)q_\ell(i) +  \big(2m_\ell +  \text{inv-sign}(q_\ell(i))\big)\pi  \\
\Leftarrow &
\bW'_\ell(i,:) = q_\ell(i) \bW_\ell(i,:), \quad  
\bb'_\ell\left(i\right) = q_\ell(i)\bb_1\left(i\right) +  \big(2m_\ell(i) +  \text{inv-sign}(q_\ell(i))\big)\pi.
\end{align*}
This can be straightforwardly extended to include permutations.

To account for those symmetries, in this paper, we opted for a simple solution, based on the observation \textit{that they cannot exist if the biases are bounded}. In particular,
\textit{if $\bb_\ell'(i), \bb_\ell(i) \in (-\pi/2, \pi/2]$, then $m$ must be 0 and $q$ must be 1 in all cases}: For $q_\ell(i)=1$, since $ 0\leq |\bb_\ell(i) - \bb_\ell(i)| < \pi$, it must be that $0\leq|2m_\ell(i)|< 1 \Leftrightarrow 0\leq|m_\ell(i)|<1/2 \Rightarrow m = 0$. For $q_\ell(i)=-1$, since $-\pi < \bb_\ell(i) + \bb_\ell(i) \leq \pi$,  it must be that $-1< 2m_\ell(i)+1 \leq 1 \Leftrightarrow -1< m_\ell(i)\leq0 \Rightarrow m_\ell(i) =0$. Further, since the equality holds only when both biases are equal to $\pi/2$, then the case $q_\ell(i) = -1, m=0$ reduces to $q_\ell(i) = 1, m=0$.

\textbf{Overall, by constraining the biases in the $(-\pi/2, \pi/2]$ interval, these symmetries are eliminated.} In all datasets considered, this constraint was already satisfied. For completeness, we provide 
\cref{algo:bias_shift} that illustrates a method to shift biases in the desired interval.

\subsection{Additional ablation studies} \label{sec:ablations}

\subsubsection{Distribution of Scaling coefficients}
Empirically, accounting for a given symmetry might not prove fruitful if the given dataset is by construction canonicalised, \eg if all parameters were already positive or negative in the presence of sign symmetries. To ensure the occurrence of the studied symmetries within the used datasets, we explore their statistics. Regarding the positive-scaling symmetry, the \textit{distribution of norms of weights and biases within each network layer} in the CIFAR-10-GS-ReLU dataset is shown in \cref{fig:cifar_relu_scale_distr}. Concerning sign symmetries, we similarly plot the \textit{distribution of the sign values} for the MNIST-INR dataset in \cref{fig:mnist_sign_distr} and the CIFAR-10-GS-tanh dataset in \cref{fig:cifar_tanh_distr}. Observe that in the former case, the distribution has relatively high variance, showcasing that the weights/biases are not by training scale-normalised. Even more prominently, in the latter case, the probability of each sign value is close to $1/2$ which corroborates our claims on the need for sign equivariance/invariance.

\begin{figure}[h!]
\centering
\includegraphics[width=0.73\textwidth]{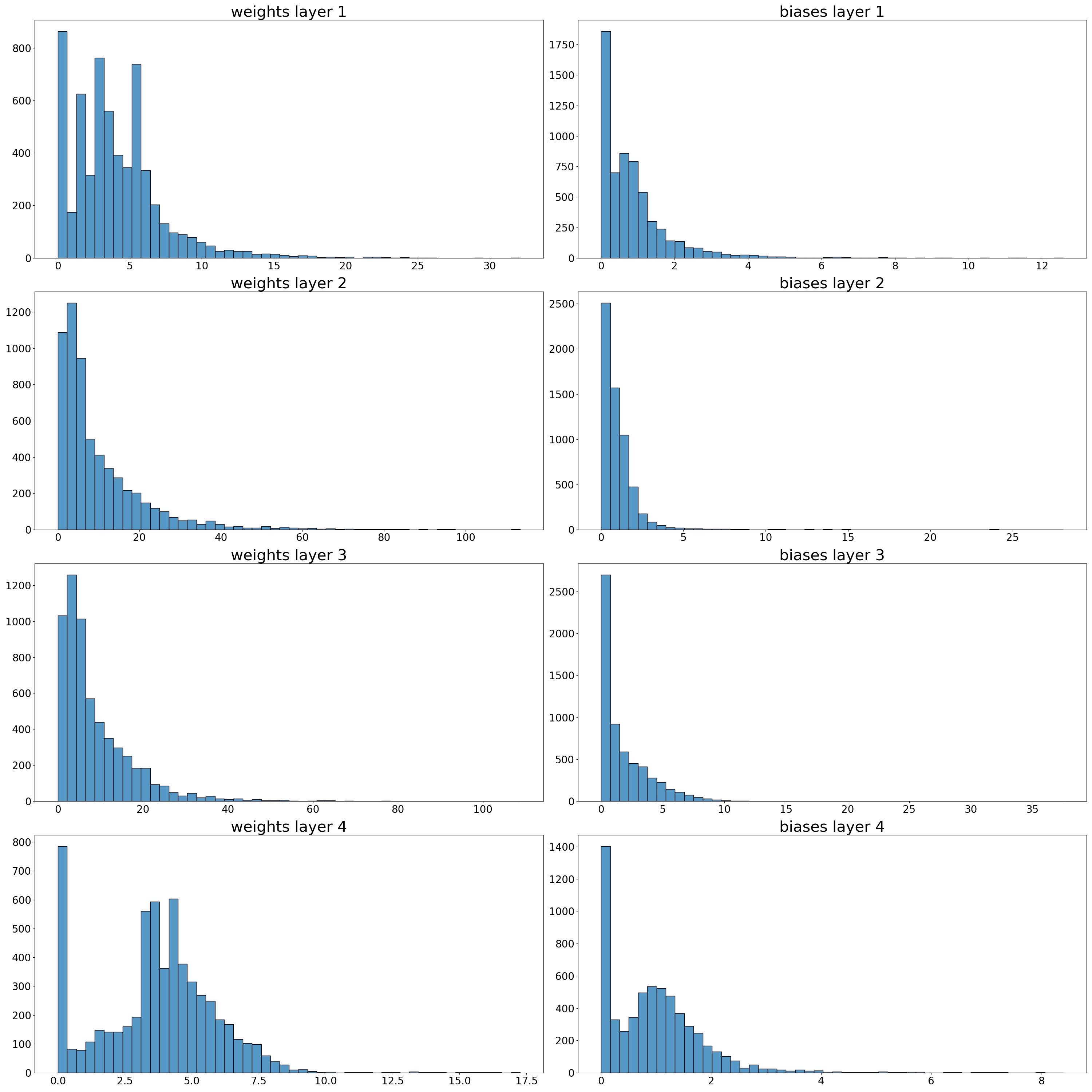}
\caption{Norm distribution on the weights and biases in the CIFAR-10-GS-ReLU dataset.}
\label{fig:cifar_relu_scale_distr}
\end{figure}

\begin{figure}[h!]
\centering
\includegraphics[width=0.73\textwidth]{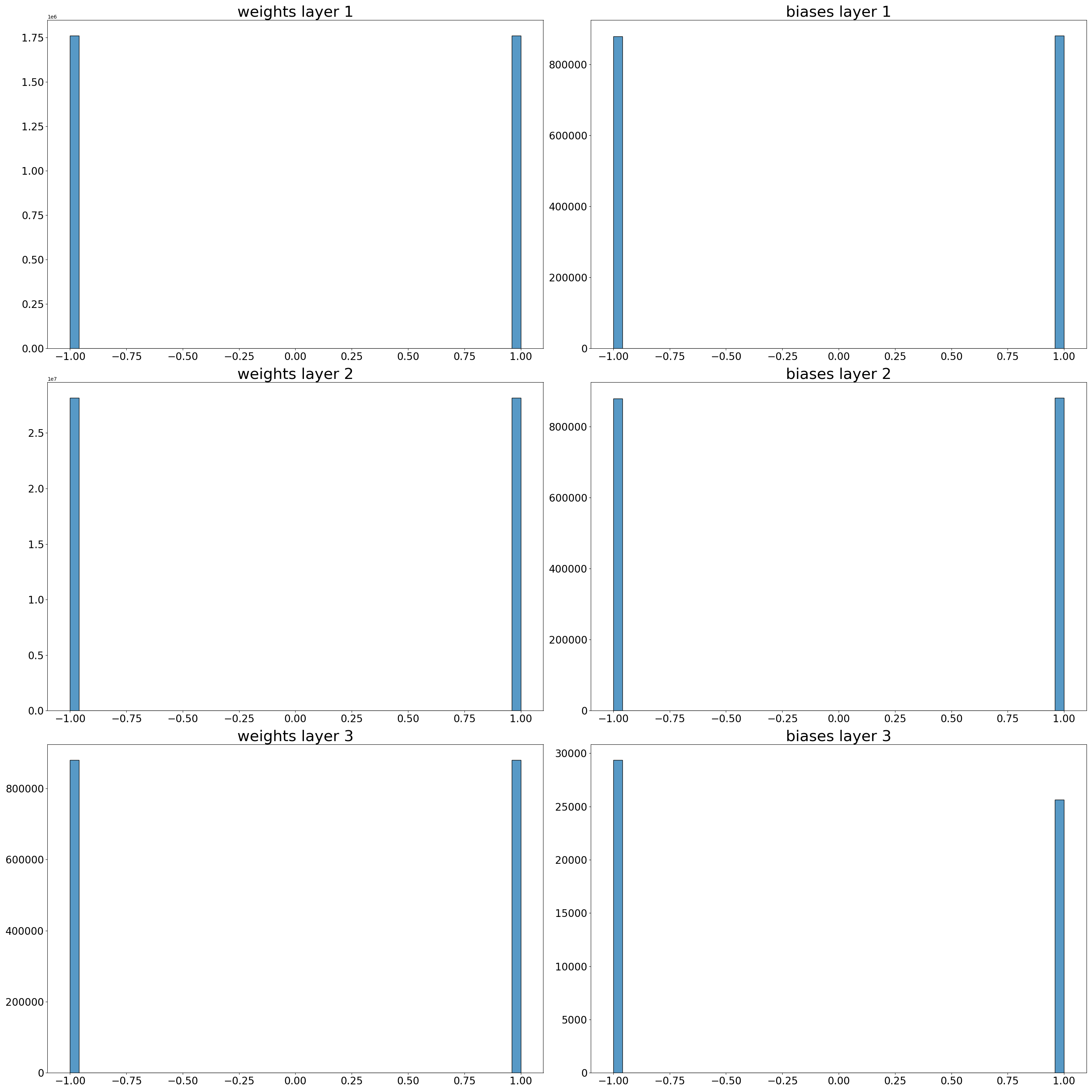}
\caption{Sign distribution of the weights and biases in the MNIST INR dataset.}
\label{fig:mnist_sign_distr}
\end{figure}

\begin{figure}[h!]
\centering
\includegraphics[width=0.7\textwidth]{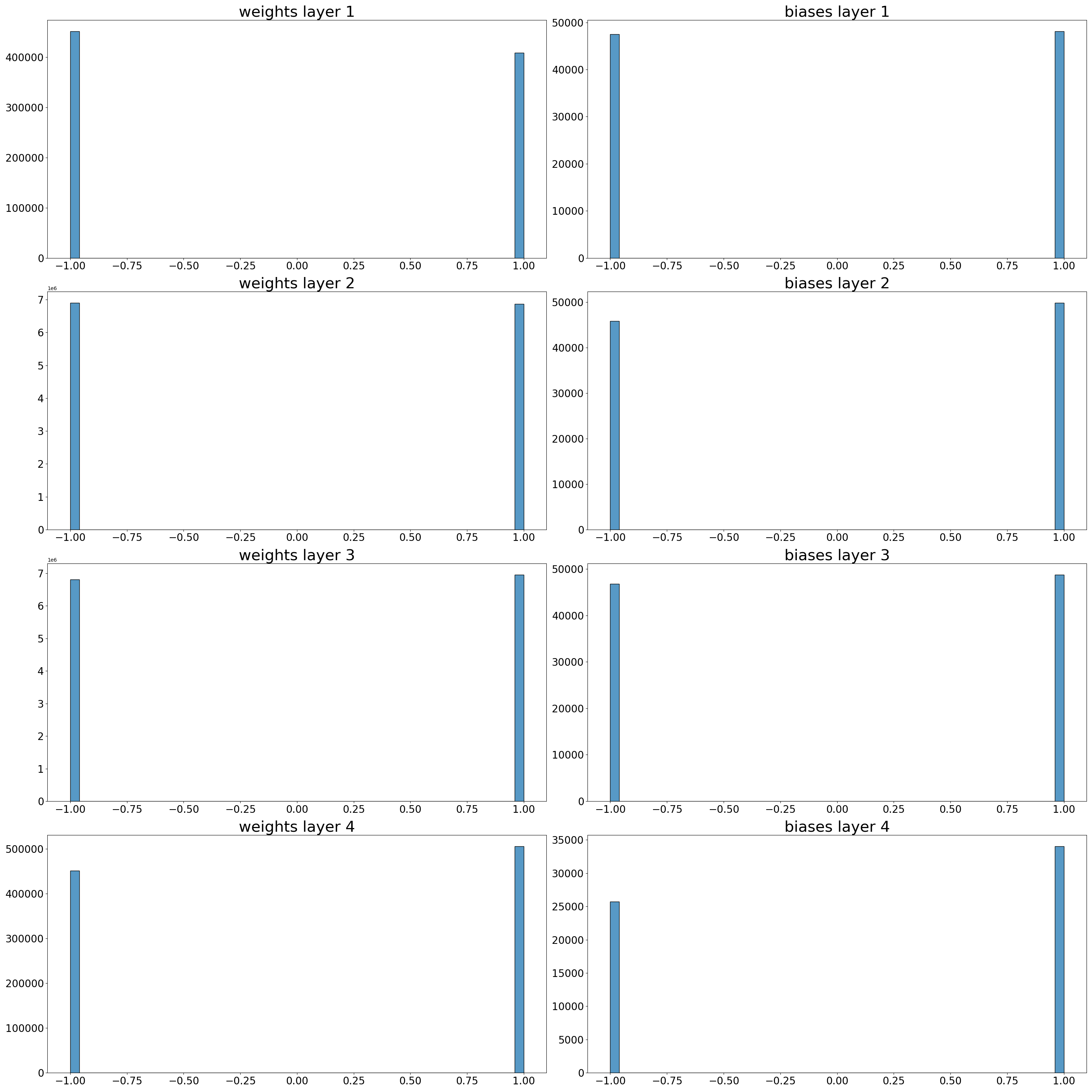}
\caption{Sign distribution of the weights and biases in the CIFAR-10-GS-tanh dataset.}
\label{fig:cifar_tanh_distr}
\end{figure}

\subsection{Additional Considerations}
In the following section, we explore key questions raised by the reviewers during the evaluation process. These questions touch upon interesting aspects of our methodology that merit in-depth exploration.

\noindent \textbf{Complexity and performance (runtime) degradation between \acronym{} and \acronym{}-B}
In the bidirectional case, we add backward edges to our graph and introduce a backward message function to discern the two message directions. Subsequently, we concatenate the outputs of the two message functions and apply the UPD function. Consequently, the additional complexity of ScaleGMN-B is introduced solely by the extra message function, with a complexity of $O(E)$. Given that ScaleGMN has the complexity of a standard GNN model $O(V+E)$, the final complexity of ScaleGMN-B is $O(V+2E)$.

\noindent \textbf{Trainable parameters: GMN vs ScaleGMN(-B)?}
Going from a GMN model to ScaleGMN requires adapting the MSG and UPD functions to be scale equivariant, leading to more learnable parameters, as opposed to using plain MLPs. Another design choice of ScaleGMN that introduces more parameters is using different MLPs for the I/O nodes (\ref{sec:io_symm}).

\noindent \textbf{Are there any limitations on the choice of activations of the ScaleGMN network?}
Importantly, our method does \textit{not} impose any limitations on the choice of the activation functions. We are able to select any activation function, because these are only applied within the MLPs of the invariant modules. As discussed in \cref{sec:method}, the MLPs (equipped with non-linearities) are only applied after the canonicalization/symmetrization function. In case one chose to place activations in a different computational part of \acronym{}, this would indeed limit their options so as not to compromise scale equivariance. However, this is not the case in our method.

\noindent \textbf{Canon/symm mappings for all activation functions.} In general, we cannot guarantee that canonicalization/symmetrization mappings are easy to construct for any activation function, since it is currently unknown if we can provide a general characterisation of the possible symmetries that may arise. Our results extend to all positively homogeneous activations, $\sigma(\lambda x) = \lambda \sigma(x), \lambda>0$, and all odd ones, $\sigma(-x) = - \sigma(x)$. We refer the reviewer to Table 1 of \citet{godfrey2022symmetries}, where we can see that LeakyReLU also falls in the first category. Regarding polynomial activations, which demonstrate non-zero scaling symmetries, one option would be: (1) norm division to canonicalise scale, and (2) sign symm/canon as discussed in this paper. The above, cover a fairly broad spectrum of common activation functions.

\subsection{Deferred Proofs}

\subsubsection{Characterisation of symmetries}\label{sec:characterisation_symmetries}
First we restate \cref{prop:symmetries} and provided the detailed conditions for the activation function $\sigma$.

\begin{proposition}[\cref{prop:symmetries} restated]
Consider an activation function $\sigma: \mathbb{R} \to \mathbb{R}$. 

\begin{itemize}
    \item (Lemma 3.1. \cite{godfrey2022symmetries}.) If the matrix $\sigma(\bI_d)$ is invertible, then, for any $d \in \{1, 2, \dots\}$, there exists a (non-empty) group of matrices, called the \textbf{intertwiner group of $\sigma_d$} defined as follows:
\begin{equation*}
\intert_{\sigma,d} = \{\bA \in \mathbb{R}^{d \times d}: \textnormal{invertible} \mid \exists \ \bB \in \mathbb{R}^{d \times d} \textnormal{ invertible, such that: } \sigma( \bA \bx) = \bB \sigma (\bx)\},
\end{equation*}
and a group homomorphism $\phi_{\sigma,d}: \intert_{\sigma,d} \to \textnormal{GL}_d(\mathbb{R})$, such that $\bB = \phi_{\sigma,d}(\bA) = \sigma(\bA) \sigma(\bI_d)^{-1}$.
    \item (Theorem E.14. \cite{godfrey2022symmetries}.) Additionally, if $\sigma$ is non-constant, non-linear and  differentiable on a dense open set with
finite complement, then every $\bA \in \intert_{\sigma,d}$ is of the form $\bP \bQ$, where $\bP$: permutation matrix and $\bQ = \text{diag}\big(q_1, \dots q_d\big)$ diagonal matrix, with $\phi_{\sigma,d}(\bA) = \bP \text{diag}\big(\phi_{\sigma,1}(q_1), \dots \phi_{\sigma,1}(q_d)\big)$ 
\end{itemize}
    
\end{proposition}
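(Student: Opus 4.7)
The statement has two parts, and the plan is to address them in sequence.

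For the first part (existence of the intertwiner group), the approach is to verify that $\intert_{\sigma,d}$ is a group and that $\phi_{\sigma,d}$ is a well-defined homomorphism. The crucial observation is that the defining equation $\sigma(\bA\bx) = \bB\sigma(\bx)$ must hold for all $\bx \in \mathbb{R}^d$; in particular, it must hold when we plug in the standard basis vectors $\be_1,\dots,\be_d$ arranged as columns of the identity matrix $\bI_d$. Substituting (column-by-column) yields $\sigma(\bA) = \bB\sigma(\bI_d)$, so invertibility of $\sigma(\bI_d)$ forces $\bB = \sigma(\bA)\sigma(\bI_d)^{-1}$ to be uniquely determined by $\bA$. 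This gives the formula for $\phi_{\sigma,d}$. Group closure follows from $\sigma(\bA_1\bA_2\bx) = \bB_1\sigma(\bA_2\bx) = \bB_1\bB_2\sigma(\bx)$, which also shows $\phi_{\sigma,d}$ is multiplicative. For inverses, one uses $\sigma(\bx) = \sigma(\bA\bA^{-1}\bx) = \bB\sigma(\bA^{-1}\bx)$, and invertibility of $\bB$ (which follows since $\bA$ and $\bB$ are linked via the formula and $\sigma(\bI_d)$ is invertible in a suitable neighborhood argument) gives $\sigma(\bA^{-1}\bx) = \bB^{-1}\sigma(\bx)$.

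For the second part (structural characterization as $\bP\bQ$), the plan is to differentiate the defining relation. Writing $\sigma$ componentwise, the $i$-th component reads $\sigma\bigl(\sum_k \bA(i,k)\bx(k)\bigr) = \sum_m \bB(i,m)\sigma(\bx(m))$. Differentiating both sides with respect to $\bx(j)$ on the dense open set of differentiability yields
\begin{equation*}
    \bA(i,j)\,\sigma'\!\Bigl(\sum_k \bA(i,k)\bx(k)\Bigr) = \bB(i,j)\,\sigma'(\bx(j)).
\end{equation*}
The key manipulation is to fix a row $i$ and compare the equation for two column indices $j \neq j'$ for which (hypothetically) both $\bA(i,j)\neq 0$ and $\bA(i,j')\neq 0$. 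Taking ratios eliminates the left-hand side and produces
\begin{equation*}
    \frac{\sigma'(\bx(j))}{\sigma'(\bx(j'))} = \frac{\bB(i,j)\,\bA(i,j')}{\bB(i,j')\,\bA(i,j)},
\end{equation*}
where the right-hand side is independent of $\bx$. Since the left-hand side depends only on $\bx(j)$ and $\bx(j')$, varying these two coordinates independently forces $\sigma'$ to be constant on a dense set, contradicting the assumption that $\sigma$ is non-linear. Hence each row of $\bA$ has at most one non-zero entry; invertibility of $\bA$ then upgrades ``at most one'' to ``exactly one,'' so $\bA = \bP\bQ$ with $\bP$ a permutation and $\bQ$ diagonal invertible. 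Substituting this form back into $\phi_{\sigma,d}(\bA) = \sigma(\bA)\sigma(\bI_d)^{-1}$ and using that $\sigma$ is applied componentwise identifies $\phi_{\sigma,d}(\bA) = \bP\,\text{diag}(\phi_{\sigma,1}(q_1),\dots,\phi_{\sigma,1}(q_d))$.

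The main obstacle is the contradiction step in the second part: one must rule out the possibility that $\sigma'$ is constant along the $(\bx(j),\bx(j'))$-slice without being globally constant. The cleanest way is to invoke density of the differentiability domain (so the slice meets it on a set with interior) together with the non-linearity of $\sigma$ to conclude $\sigma'$ cannot be constant on any open interval. A secondary technical point is handling the boundary cases: verifying that the conclusion $\bA = \bP\bQ$ extends from the dense open set to all of $\intert_{\sigma,d}$ (closure argument), and checking that $\phi_{\sigma,1}(q)$ is well-defined for every $q$ appearing as a diagonal entry, i.e.\ that the one-dimensional group $D_\sigma$ is non-trivial enough to contain these scalars.
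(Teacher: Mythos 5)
The paper does not actually prove either bullet of this proposition: both are imported verbatim from \cite{godfrey2022symmetries} (Lemma 3.1 and Theorem E.14), and the only original content in the paper's discussion is the derivation of the explicit form $\phi_{\sigma,d}(\bP\bQ) = \bP\,\mathrm{diag}\big(\phi_{\sigma,1}(q_1),\dots,\phi_{\sigma,1}(q_d)\big)$, which it obtains in one line from the fact that $\phi_{\sigma,d}$ is a group homomorphism together with $\phi_{\sigma,d}(\bP)=\bP$ (Lemma E.10 of the same reference). Your attempt is therefore a genuinely different and more ambitious route: a self-contained proof. Your Part 1 is correct and is essentially the standard argument behind Lemma 3.1 --- evaluating at the standard basis vectors pins down $\bB=\sigma(\bA)\sigma(\bI_d)^{-1}$, after which closure, multiplicativity of $\phi_{\sigma,d}$ and inverses are immediate; note that invertibility of $\bB$ needs no ``neighborhood argument,'' since it is built into the definition of $\intert_{\sigma,d}$.

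For Part 2 the differentiate-and-take-ratios idea is the right one, but as written it has concrete gaps. First, it silently assumes $\sigma'$ is not identically zero on the differentiability domain: a step function (Heaviside with $\sigma(0)=0$, say) is non-constant, non-linear, differentiable off a finite set, and has $\sigma(\bI_d)=\bI_d$ invertible, yet every derivative identity you write reduces to $0=0$ and the argument yields nothing, so your proof does not recover the statement in the stated generality. Relatedly, the division step needs $\bB(i,j')\neq 0$ and $\sigma'(\bx(j'))\neq 0$ at some admissible point, which you should justify (the former follows from the latter via the same identity once $\bA(i,j')\neq 0$). Second, the closing step --- ``substituting back into $\sigma(\bA)\sigma(\bI_d)^{-1}$'' --- is not the one-liner you suggest when $\sigma(0)\neq 0$: then $\sigma(\bP\bQ)$ carries $\sigma(0)$ in every position where $\bP\bQ$ vanishes and $\sigma(\bI_d)^{-1}$ is a dense matrix, so the product requires a genuine computation. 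The clean finish is the paper's: use multiplicativity of $\phi_{\sigma,d}$, the fact that $\phi_{\sigma,d}(\bP)=\bP$, and a short separate argument that each diagonal entry $q_i$ lies in $\intert_{\sigma,1}$ so that $\phi_{\sigma,d}(\bQ)=\mathrm{diag}\big(\phi_{\sigma,1}(q_1),\dots,\phi_{\sigma,1}(q_d)\big)$ --- precisely the point you flag at the end but leave open.
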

Note that $\b{I}_d$ is the identity matrix $\textnormal{GL}_d(\mathbb{R})$ is the general linear group.
The above two statements are proven in \cite{godfrey2022symmetries}. The exact form of $\phi_{\sigma,d}(\bA)$ is not precisely given, but it is easy to derive: $\phi_{\sigma,d}$ is a group homomorphism and therefore 
$\phi_{\sigma,d}(\bP \bQ) = \phi_{\sigma,d}(\bP)  \phi_{\sigma,d}(\bQ) = \bP \text{diag}\big(\phi_{\sigma,1}(q_1), \dots \phi_{\sigma,1}(q_d)\big)$, where we also used Lemma E.10. from \cite{godfrey2022symmetries} which states that $\phi_{\sigma,d}(\bP) = \bP$ for permutation matrices.

\begin{proposition}[restated]
    Hyperbolic tangent activation function $\sigma(x) = \tanh(x)$ and SIREN activation function, \ie $\sigma(x) = \sin(\omega x)$,  satisfy the conditions of \cref{prop:symmetries}, when (for the latter) $\omega \neq k \pi, k \in \mathbb{Z}$. Additionally, $\intert_{\sigma,d}$ contains \textbf{signed permutation matrices} of the form $\bP \bQ$, with $\bQ = \textnormal{diag}(q_1, \dots, q_d)$, $q_i = \pm 1$ and $\phi_{\sigma,d}(\bP \bQ) = \bP \bQ$.
\end{proposition}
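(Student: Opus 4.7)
The plan is to verify the two activations meet the hypotheses of Proposition~\ref{prop:symmetries} and then identify the one-dimensional group $D_\sigma$, from which the full characterization of $\intert_{\sigma,d}$ follows.

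First I would verify the hypotheses. Since $\sigma$ is applied elementwise to $\bI_d$, the off-diagonal entries of $\sigma(\bI_d)$ are $\sigma(0)$ while the diagonal entries are $\sigma(1)$. Both $\tanh$ and $\sin(\omega\cdot)$ satisfy $\sigma(0) = 0$, so $\sigma(\bI_d) = \sigma(1)\bI_d$ is a scalar multiple of the identity. This matrix is invertible iff $\sigma(1) \neq 0$: for $\tanh$ this is automatic since $\tanh(1) > 0$; for $\sin(\omega\cdot)$ the condition $\sin(\omega) \neq 0$ is exactly $\omega \neq k\pi$ for $k\in\mathbb{Z}$. Both functions are clearly non-constant, non-linear, and smooth on $\mathbb{R}$, so the remaining hypotheses for Theorem~E.14 of~\cite{godfrey2022symmetries} hold.

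Having invoked Proposition~\ref{prop:symmetries}, it remains to identify $D_\sigma = \{a \in \mathbb{R}\setminus\{0\} \mid \sigma(ax) = \phi_{\sigma,1}(a)\sigma(x)\}$ and compute $\phi_{\sigma,1}$ on it. Since both $\tanh$ and $\sin$ are odd, we immediately get $-1 \in D_\sigma$ with $\phi_{\sigma,1}(-1) = -1$, and trivially $1 \in D_\sigma$ with $\phi_{\sigma,1}(1) = 1$. For the reverse inclusion, suppose $a \neq 0$ lies in $D_\sigma$ and write $\sigma(ax) = c\,\sigma(x)$ for some $c$ and all $x \in \mathbb{R}$. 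Differentiating both sides at $x = 0$ gives $a\sigma'(0) = c\sigma'(0)$; since $\sigma'(0) = 1$ for $\tanh$ and $\sigma'(0) = \omega \neq 0$ for $\sin(\omega\cdot)$, this forces $c = a$, so the equation reduces to $\sigma(ax) = a\sigma(x)$.

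For $\tanh$, taking $x \to +\infty$ yields $\mathrm{sign}(a) = a$, forcing $a \in \{-1,1\}$. For $\sin(\omega x)$, substituting $y = \omega x$ transforms the equation into $\sin(ay) = a \sin(y)$ for all $y$; evaluating at $y = \pi$ gives $\sin(a\pi) = 0$, so $a \in \mathbb{Z}$, and evaluating at $y = \pi/2$ gives $|a| = |\sin(a\pi/2)| \le 1$, hence $a \in \{-1,1\}$. Thus $D_\sigma = \{-1,+1\}$ with $\phi_{\sigma,1}$ the identity on it, so by Proposition~\ref{prop:symmetries} every element of $\intert_{\sigma,d}$ is a signed permutation $\bP\bQ$ with $\phi_{\sigma,d}(\bP\bQ) = \bP\,\mathrm{diag}(\phi_{\sigma,1}(q_1),\dots,\phi_{\sigma,1}(q_d)) = \bP\bQ$.

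The main obstacle is the characterization of $D_\sigma$ for $\sin(\omega\cdot)$: unlike $\tanh$, there is no asymptotic limit to exploit, and one has to combine the derivative-at-zero argument (which pins down $c = a$) with the zero-structure argument (which pins down $a \in \mathbb{Z}$) and a bound-at-$\pi/2$ to rule out $|a| \ge 2$. All other steps are straightforward applications of Proposition~\ref{prop:symmetries}.
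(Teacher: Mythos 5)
Your proof is correct, and the key computation is carried out by a genuinely different (and arguably more elementary) route than the paper's. For the invertibility of $\sigma(\bI_d)$ you observe directly that $\sigma(0)=0$ makes $\sigma(\bI_d)=\sigma(1)\bI_d$, whereas the paper invokes Lemma~E.7 of \cite{godfrey2022symmetries}; both give the same condition $\omega \neq k\pi$. For the one-dimensional characterisation, the paper manipulates the functional equation $\sigma(ax)=b\sigma(x)$ globally: it squares it, differentiates it as an identity in $x$, and combines the two resulting identities via the Pythagorean (resp.\ hyperbolic) relation to force $a^2=b^2=1$, with a separate small argument to rule out $a=-b$. You instead pin down $b=a$ in one stroke by differentiating at $x=0$ (using $\sigma'(0)\neq 0$), and then constrain $a$ by pointwise evaluation: the $x\to+\infty$ limit for $\tanh$, and the evaluations at $y=\pi$ (forcing $a\in\mathbb{Z}$) and $y=\pi/2$ (forcing $|a|\le 1$) for the sine. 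Your version avoids the "otherwise $\cos^2(\omega a x)$ would be constant" step, which the paper leaves slightly informal, at the cost of being more tailored to these two specific activations; the paper's identity-based manipulation is closer in spirit to a template one could reuse for other smooth activations. Both establish $D_\sigma=\{-1,1\}$ with $\phi_{\sigma,1}=\mathrm{id}$, and the lift to $\intert_{\sigma,d}$ via Proposition~\ref{prop:symmetries} is identical.
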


\begin{proof}
\noindent \textbf{Example 1: $\sigma(x) = \sin (\omega x)$, with $\omega >0$ constant}: 

This is the most common activation function for INRs \cite{sitzmann2020implicit}. We can easily find the conditions under which $\sigma(\bI_d)$ is invertible invoking Lemma E.7. from \citet{godfrey2022symmetries}. In particular, the latter states that if $\sigma(1) \neq \sigma(0)$, $\sigma(1) \neq -(d-1) \sigma(0)$, then $\sigma(\bI_d)$ is invertible.

In our case, $\sigma(0) = \sin(\omega 0) = 0, \sigma(1) = \sin(\omega)$. Therefore, it should hold that $\omega \neq k \pi, k \in \mathbb{Z}$, otherwise $\sigma(\bI_d)$ is an all-zeros matrix. Additionally, $\sigma$ is non-constant, non-linear and differentiable everywhere. Thus, as dictated by \cref{prop:symmetries}, we can characterise the intertwiner group, by identifying the function $\phi_{\sigma,1}(a)$. In particular, $b = \phi_{\sigma,1}(a)$ if the following holds:
\begin{align*}
    \sin(\omega ax) = b \sin( \omega x) \Rightarrow \sin^2(\omega a x) = b^2 \sin^2(\omega x)
    \Rightarrow
    1- \cos^2(\omega a x) &= b^2 \sin^2(\omega x)\\
    \omega a \cos( \omega a x) = b \omega \cos(\omega x)  \Rightarrow a^2 \cos^2(\omega a x) &= b^2  \cos^2(\omega x),
\end{align*}
where in the second line we took the derivative of each side since the equation holds for all $x\in \mathbb{R}$ and divided by $\omega \neq 0$. Now summing up the two equations we obtain:
\begin{align*}
1 + \cos^2(\omega a x)\big(a^2 - 1)  = b^2 \Rightarrow \cos^2(\omega a x)\big(a^2 - 1) = b^2 - 1 \Rightarrow a^2 - 1 = b^2 - 1 = 0.
\end{align*}
The above is true because if $a^2 \neq 1$, then one would conclude that $\cos^2(\omega a x) = \text{constant}$. Thus, $a= \pm 1$ and  $b= \pm 1$. Additionally $a$ cannot be different than $b$, since in that case we would have \eg $\sin(\omega x) = -\sin(\omega x) \Rightarrow \sin(\omega x) = 0$. Therefore:
\begin{equation}
    \phi_{\sigma,1}(a) = a, \quad a = \{-1,1\}
    \ \text{ and } 
    \phi_{\sigma,d}(\bP \bQ) = \bP \bQ, \quad \bP\bQ: \text{ signed permutation matrices}.
\end{equation}

\noindent \textbf{Example 2: $\sigma(x) = \tanh (x)$}: 

We can easily verify that $\sigma(\bI_d)$ is invertible invoking the same Lemma E.7. from \citet{godfrey2022symmetries} as before. 
We have, $\sigma(0) = \tanh(0) = 0, \sigma(1) = \tanh(1) = \frac{e^2 - 1}{e^2 + 1} \neq 0$. Additionally, $\sigma$ is non-constant, non-linear and differentiable everywhere. Thus, again as dictated by \cref{prop:symmetries}, we will characterise the intertwiner group, by identifying the function $\phi_{\sigma,1}(a)$. In particular, $b = \phi_{\sigma,1}(a)$ if the following holds:
\begin{align*}
    \tanh(ax) = b \tanh( x) \Rightarrow \tanh^2( a x) &= b^2 \tanh^2( x)
    \\
     a\big(1 -\tanh^2( a x)\big) &= b \big(1- \tanh^2(x), 
\end{align*}
where in the second line we took the derivative of each side since the equation holds for all $x\in \mathbb{R}$. Now summing up the two equations we obtain:
\begin{align*}
&a + \tanh^2(a x)\big(1 - a)  = b + \tanh^2(x)\big(b^2 - b) \Rightarrow 
\tanh^2(x)\big(1 - \alpha - b^2 + b) = b - a\\
&\Rightarrow b=a \text{ and } b^2-b+a-1 = 0 \Rightarrow b=a \text{ and } b^2=1 \Rightarrow b=a \text{ and } b=\pm1.
\end{align*}
The above is true because if $b^2-b+a-1 \neq 0$, then one would conclude that $\tanh^2(a x) = \text{constant}$.
Therefore:
\begin{equation}
    \phi_{\sigma,1}(a) = a, \quad a = \{-1,1\}
    \ \text{ and } 
    \phi_{\sigma,d}(\bP \bQ) = \bP \bQ, \quad \bP \bQ: \text{ signed permutation matrices}.
\end{equation}
\end{proof}

\subsubsection{Theoretical analysis of \acronym}\label{sec:proofs}

\begin{proposition}[\cref{prop:equivariance} restated]
     \acronym\ is permutation/scale equivariant or permutation/scale invariant when using a readout with the same symmetries.
\end{proposition}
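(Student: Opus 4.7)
The plan is to proceed by induction on the number of message-passing iterations $t$, separately tracking the transformation laws \eqref{eq:vertex_symm}--\eqref{eq:edge_symm} for hidden neurons and the analogous laws \eqref{eq:vertex_symm_io}--\eqref{eq:edge_symm_io_2} for input/output neurons. For the base case, I would verify that the initialisation rules \eqref{eq:init_sceq_v}--\eqref{eq:init_sceq_e} produce representations with the correct scaling behaviour: since $\tilde{\bb}(i)$ and $\tilde{\bW}(i,j)$ transform exactly as the biases and weights of the FFNN under \eqref{eq:all_symmetry}, and $\bGamma_V,\bGamma_E$ are linear, the resulting $\hv^0,\he^0$ inherit the correct vertex and edge scaling laws. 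Permutation equivariance of the initialisation is immediate because it is applied node-/edge-wise.

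For the inductive step, assuming $\hv^{t-1},\he^{t-1}$ satisfy the symmetry laws, I would verify that $\colormsg{\text{MSG}_V},\colorupd{\text{UPD}_V}$ and (in the bidirectional case) $\colormsg{\text{MSG}_{V,\text{BW}}}$, $\colorupd{\text{UPD}_E}$ preserve them, invoking three properties of our building blocks: (i) $\sceqnet$ satisfies $\sceqnet(q_1\bx_1,\dots,q_n\bx_n) = (q_1 \b\cdot,\dots,q_n\b\cdot)\odot \sceqnet(\bx_1,\dots,\bx_n)$ by construction, yielding the desired homogeneity in the central vertex multiplier; (ii) $\resceqnet$ satisfies $\resceqnet(q_1\bx_1,\dots,q_n\bx_n) = \big(\prod_i q_i\big)\resceqnet(\bx_1,\dots,\bx_n)$, which is the key tool for cancelling the neighbour multiplier $q_y$ against the edge multiplier $q_y^{-1}$ inside $\colormsg{\text{MSG}_V}$; (iii) $\augsceqnet$ retains (i) while remaining constant under transformations of its positional-encoding argument (by taking the canonicalisation on that coordinate to be the identity). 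Applying these in the expressions of \cref{sec:app_practical_consider} gives directly the desired transformation of the messages, whose sum over a permutation-invariant aggregator $\bigoplus$ preserves the symmetry; the $\colorupd{\text{UPD}_V}$ step then combines two arguments with identical scaling behaviour, so $\sceqnet$ returns the correct $q_\ell(i)$-equivariant output.

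Permutation equivariance is handled more succinctly: \acronym\ is a subcase of GMNs with the same aggregation structure, and permutation equivariance of GMNs (as established in \cite{lim2024graph}) depends only on aggregating with a permutation-invariant $\bigoplus$ and applying shared, vertex/edge-local transformations, which our building blocks satisfy. The only novelty is the presence of positional encodings shared across permutable vertices/edges, which break only the undesired symmetries (i/o neurons, cross-layer permutations) while preserving the symmetries of \eqref{eq:perm_symmetry}; this follows from the definitions in \cref{sec:app_pe}. The special-case MLP blocks used for i/o neurons (\cref{sec:io_symm}) trivially respect the \emph{absence} of scale equivariance at those vertices and the unilateral scaling of i/o edges, since $\resceqnet$ is still used wherever an edge-weight/neighbour cancellation is required.

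Finally, for invariance, given equivariance of $\hv^T$ and $\he^T$, the readout is scale-invariant because $\tilde{\bx}_i=\canon(\bx_i)$ or $\tilde{\bx}_i=\symm(\bx_i)$ is by definition invariant to the 1-dimensional scaling group of layer $\posl{i}$, and permutation-invariant because $\mathsf{DeepSets}$ is. The concatenated i/o representations carry no scale or permutation symmetry, so a final MLP suffices. The main bookkeeping obstacle I anticipate is ensuring all corner cases (forward vs.\ bidirectional messages, hidden vs.\ i/o vertices and edges, and the inverted-weight initialisation \eqref{eq:init_sceq_e_bw} for bidirectional scaling) consistently yield the same transformation law claimed in \eqref{eq:vertex_symm}--\eqref{eq:edge_symm}; this is routine but requires checking each component of \cref{sec:app_practical_consider,sec:bidirectional} against \cref{prop:symmetries}.
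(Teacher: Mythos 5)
Your plan matches the paper's proof essentially step for step: an induction over message-passing iterations with the base case at initialisation, the inductive step relying on the scale-/rescale-equivariance properties of the building blocks, separate treatment of hidden versus input/output vertices and edges, and invariance obtained from the canonicalised/symmetrised DeepSets readout. The only cosmetic difference is that the paper carries the permutation $\permind{\ell}{\cdot}$ explicitly through the same induction rather than delegating permutation equivariance to the GMN result, but this does not change the substance of the argument.
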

\begin{proof}
We will prove this by induction. In particular, consider two equivalent parameterisations $\bth', \bth$ as in \cref{eq:all_symmetry}. This implies that vertex/edge features will have the same symmetries, \ie starting with the hidden neurons:
\begin{align*}
\xv'(i)&=
    q_\ell\left(\permind{\ell}{i}\right)\xv\left(\permind{\ell}{i}\right),  \quad \ell =\posl{i} \in \{1, \dots, L-1\} \\
\xe(i,j)&=  q_\ell\left(\permind{\ell}{i}\right)
    \xe\left(
    \permind{\ell}{i},\permind{\ell-1}{j}\right)
    q_{\ell-1}^{-1}\left(\permind{\ell-1}{j}\right) ,  \ \ell =\posl{i} \in \{2, \dots, L-1\},
\end{align*}
We will show that these symmetries are propagated to representations for every $t\geq 0$, \ie that \cref{eq:vertex_symm} and \cref{eq:edge_symm}. Let us start with the base $t=0$.
\begin{align*}
    \hv^{'0}(i) &= 
    \colorinit{\text{INIT}_V}
    \left(
    \xv'\left(i\right), \pv(i)
    \right)=
    \colorinit{\text{INIT}_V}
    \Big(
    q_\ell\left(\permind{\ell}{i}\right)\xv\left(\permind{\ell}{i}\right), \pv(i)
    \Big) \\
    &= q_\ell\left(\permind{\ell}{i}\right)\colorinit{\text{INIT}_V}
    \Big(\xv\left(\permind{\ell}{i}\right), \pv(i)
    \Big) = q_\ell\left(\permind{\ell}{i}\right)\hv^{0}\left(\permind{\ell}{i}\right),\\
    \he^{'0}(i)  &= \colorinit{\text{INIT}_E}
    \left(
    \xe'\left(i, j\right), \pe(i,j)
    \right) \\
    &= \colorinit{\text{INIT}_E}
    \left(q_\ell\left(\permind{\ell}{i}\right)
    \xe\left(
    \permind{\ell}{i},
    \permind{\ell-1}{j}
    \right)
    q^{-1}_{\ell-1}\left(\permind{\ell-1}{j}\right),
    \pe(i,j)
    \right)\\
    &= q_\ell\left(\permind{\ell}{i}\right)
    q^{-1}_{\ell-1}\left(\permind{\ell-1}{j}\right)
    \colorinit{\text{INIT}_E}
    \Big(\xe\left(
    \permind{\ell}{i},
    \permind{\ell-1}{j}
    \right), \pe(i,j)
    \Big) \\
    &=q_\ell\left(\permind{\ell}{i}\right)
    q^{-1}_{\ell-1}\left(\permind{\ell-1}{j}\right)
    \he^{0}\left(
    \permind{\ell}{i},
    \permind{\ell-1}{j}
    \right).
\end{align*}
This is due to the properties of the initialisation functions - see \cref{eq:sceq_pe_init}. Now we proceed with the induction step. Suppose \cref{eq:vertex_symm} and \cref{eq:edge_symm} hold for $t-1$. We will show this implies they hold for $t$:
\begin{align*}
\mv^{'t}(i) &= 
{\bigoplus}_{\colornb{j \in \cN_{\text{FW}}(i)}}
    \colormsg{\text{MSG}^{t}_V}
    \Big(
    \hv^{'t-1}(i), \hv^{'t-1}(j), \he^{'t-1}(i, j)
    \Big)\\
&= {\bigoplus}_{\colornb{j \in \cN_{\text{FW}}(i)}}
    \colormsg{\text{MSG}^{t}_V}
    \Big(
    q_\ell\left(\permind{\ell}{i}\right)
    \bh_V^{t-1}\left(\permind{\ell}{i}\right), 
    q_{\ell-1}\left(\permind{\ell -1}{j}\right)
    \bh_V^{t-1}\left(\permind{\ell -1}{j}\right),\\
    &\quad \quad \quad \quad \quad \quad \quad \quad \quad q_\ell\left(\permind{\ell}{i}\right)
    \bh_E^{t-1}\left(
    \permind{\ell}{i},\permind{\ell-1}{j}\right)
    q_{\ell-1}^{-1}\left(\permind{\ell-1}{j}\right)
    \Big)\\
    & =  {\bigoplus}_{\colornb{j \in \cN_{\text{FW}}(i)}}
    q_\ell\left(\permind{\ell}{i}\right)
    \colormsg{\text{MSG}^{t}_V} 
    \Big(
    \bh_V^{t-1}\left(\permind{\ell}{i}\right),
    \bh_V^{t-1}\left(\permind{\ell -1}{j}\right),
    \bh_E^{t-1}\left(
    \permind{\ell}{i},\permind{\ell-1}{j}\right)
    \Big)\\
    & = q_\ell\left(\permind{\ell}{i}\right) {\bigoplus}_{\colornb{j \in \cN_{\text{FW}}(i)}}
    \colormsg{\text{MSG}^{t}_V}
    \Big(
    \bh_V^{t-1}\left(\permind{\ell}{i}\right),
    \bh_V^{t-1}\left(j\right),
    \bh_E^{t-1}\left(
    \permind{\ell}{i},j\right)
    \Big)\\
    & = q_\ell\left(\permind{\ell}{i}\right) \mv^t(\permind{\ell}{i}).\\
\hv^{'t}(i) &= \colorupd{\text{UPD}^{t}_V}
   \big(
   \hv^{'t-1}(i), \mv^{'t}(i) 
    \big) 
    = \colorupd{\text{UPD}^{t}_V}
   \big(
   q_\ell\left(\permind{\ell}{i}\right) \hv^{t-1}(\permind{\ell}{i}), q_\ell\left(\permind{\ell}{i}\right) \mv^t(\permind{\ell}{i})
    \big)\\
    & = q_\ell\left(\permind{\ell}{i}\right)
    \colorupd{\text{UPD}^{t}_V}
   \big(
    \hv^{t-1}(\permind{\ell}{i}), \mv^t(\permind{\ell}{i})
    \big) = q_\ell\left(\permind{\ell}{i}\right)\hv^{t}(i)\\
    \he^{'t}(i,j) & = \colorupd{\text{UPD}^{t}_E}
   \big(\hv^{'t-1}(i), \hv^{'t-1}(j), \he^{'t-1}(i,j)\big) \\
   &= \colorupd{\text{UPD}^{t}_E}
   \big(
   q_\ell\left(\permind{\ell}{i}\right) \hv^{t-1}(\permind{\ell}{i}),
   q_{\ell-1}\left(\permind{\ell-1}{j}\right) \hv^{t-1}(\permind{\ell-1}{j}),\\
   &\quad \quad \quad \quad \quad 
   q_\ell\left(\permind{\ell}{i}\right)
    \bh_E^{t-1}\left(
    \permind{\ell}{i},\permind{\ell-1}{j}\right)
    q_{\ell-1}^{-1}\left(\permind{\ell-1}{j}\right)
    \big) \\
    & =  q_\ell\left(\permind{\ell}{i}\right)q_{\ell-1}^{-1}\left(\permind{\ell-1}{j}\right)
    \colorupd{\text{UPD}^{t}_E}
   \big(\hv^{t-1}(\permind{\ell}{i}),
   \hv^{t-1}(\permind{\ell-1}{j}),
   \he^{t-1}\left(
    \permind{\ell}{i},\permind{\ell-1}{j}\right)
    \big)\\
    & = q_\ell\left(\permind{\ell}{i}\right)q_{\ell-1}^{-1}\left(\permind{\ell-1}{j}\right)
    \he^{t}\left(
    \permind{\ell}{i},\permind{\ell-1}{j}\right).
\end{align*}
Therefore we have shown the desideratum for hidden neurons, where again we used the properties of the message and update functions (we omitted positional encodings here for brevity). Now for the input neurons, we have:
\begin{align*}
    \xv'(i)&=\xv(i),  \quad \ell =\posl{i}=1 \\
\xe(i,j)&=  q_\ell\left(\permind{\ell}{i}\right)
    \xe\left(
    \permind{\ell}{i},j\right),  \ \ell =\posl{i}=1,
\end{align*}
Further, we perform the induction again for \cref{eq:vertex_symm_io} and \cref{eq:edge_symm_io_1} hold. Base case $t=0$.
\begin{align*}
    \hv^{'0}(i) &= \colorinit{\text{INIT}_{V,0}}
    \left(
    \xv'\left(i\right), \pv(i)
    \right)= \colorinit{\text{INIT}_{V,0}}
    \Big(
    \xv(i), \pv(i)
    \Big) \\
    &= \colorinit{\text{INIT}_{V,0}}
    \Big(\xv(i), \pv(i)
    \Big) = \hv^{0}\left(i\right).
\end{align*}
Induction step:
\begin{align*}
\mv^{'t}(i) &= \bm{0}
=\mv^t(i).\\
\hv^{'t}(i) &= \colorupd{\text{UPD}^{t}_V}
   \big(
   \hv^{'t-1}(i), \mv^{'t}(i) 
    \big) 
    = \colorupd{\text{UPD}^{t}_V}
   \big(
 \hv^{t-1}(i),  \mv^t(i)
    \big) = \hv^{t}(i),
\end{align*}
since input vertices do not have any incoming edge. Finally, for the output neurons, we have:
\begin{align*}
    \xv'(i)&=\xv(i),  \quad \ell =\posl{i}=L \\
\xe(i,j)&=  q_{\ell-1}^{-1}\left(\permind{\ell-1}{j}\right)
    \xe\left(i,\permind{\ell-1}{j}\right),  \ \ell =\posl{i}=L,
\end{align*}
Induction for \cref{eq:vertex_symm_io} and \cref{eq:edge_symm_io_2}. Base case $t=0$.

\begin{align*}
    \hv^{'0}(i) &= \colorinit{\text{INIT}_{V,L}}
    \left(
    \xv'\left(i\right), \pv(i)
    \right)= \colorinit{\text{INIT}_{V,L}}
    \Big(
    \xv(i), \pv(i)
    \Big) \\
    &= \colorinit{\text{INIT}_{V,L}}
    \Big(\xv(i), \pv(i)
    \Big) = \hv^{0}\left(i\right),\\
    \he^{'0}(i)  &= \colorinit{\text{INIT}_{E,L}}
    \left(
    \xe'\left(i, j\right), \pe(i,j)
    \right) \\
    &= \colorinit{\text{INIT}_{E,L}}
    \left(q_{\ell-1}^{-1}\left(\permind{\ell-1}{j}\right)
    \xe\left(
    i,
    \permind{\ell-1}{j}
    \right),
    \pe(i,j)
    \right)\\
    &= q_\ell^{-1}\left(\permind{\ell-1}{j}\right)
    \colorinit{\text{INIT}_{E,L}}
    \Big(\xe\left(
    i,
    \permind{\ell-1}{j}
    \right), \pe(i,j)
    \Big) \\
    &=q_\ell^{-1}\left(\permind{\ell-1}{j}\right)
    \he^{0}\left(
    i,
    \permind{\ell-1}{j}
    \right).
\end{align*}
And the induction step:
\begin{align*}
\mv^{'t}(i) &= 
{\bigoplus}_{\colornb{j \in \cN_{\text{FW}}(i)}}
    \colormsg{\text{MSG}^{t}_V}
    \Big(
    \hv^{'t-1}(i), \hv^{'t-1}(j), \he^{'t-1}(i, j)
    \Big)\\
&= {\bigoplus}_{\colornb{j \in \cN_{\text{FW}}(i)}}
    \colormsg{\text{MSG}^{t}_V}
    \Big(
    \bh_V^{t-1}\left(i\right), 
    q_{\ell-1}\left(\permind{\ell -1}{j}\right)
    \bh_V^{t-1}\left(\permind{\ell -1}{j}\right),\\
    &\quad \quad \quad \quad \quad \quad \quad \quad \quad
    \bh_E^{t-1}\left(i,\permind{\ell-1}{j}\right)
    q_{\ell-1}^{-1}\left(\permind{\ell-1}{j}\right)
    \Big)\\
    & =  {\bigoplus}_{\colornb{j \in \cN_{\text{FW}}(i)}}
    \colormsg{\text{MSG}^{t}_V} 
    \Big(
    \bh_V^{t-1}\left(i\right),
    \bh_V^{t-1}\left(\permind{\ell -1}{j}\right),
    \bh_E^{t-1}\left(i,\permind{\ell-1}{j}\right)
    \Big)\\
    & = {\bigoplus}_{\colornb{j \in \cN_{\text{FW}}(i)}}
    \colormsg{\text{MSG}^{t}_V}
    \Big(
    \bh_V^{t-1}\left(i\right),
    \bh_V^{t-1}\left(j\right),
    \bh_E^{t-1}\left(i,j\right)
    \Big)\\
    & = \mv^t(i).\\
\hv^{'t}(i) &= \colorupd{\text{UPD}^{t}_V}
   \big(
   \hv^{'t-1}(i), \mv^{'t}(i) 
    \big) 
    = \colorupd{\text{UPD}^{t}_V}
   \big(
   \hv^{t-1}(i), 
    \mv^t(i)
    \big) = \hv^{t}(i)\\
    \he^{'t}(i,j) & = \colorupd{\text{UPD}^{t}_E}
   \big(\hv^{'t-1}(i), \hv^{'t-1}(j), \he^{'t-1}(i,j)\big) \\
   &= \colorupd{\text{UPD}^{t}_E}
   \big(
   \hv^{t-1}(i),
   q_{\ell-1}\left(\permind{\ell-1}{j}\right) \hv^{t-1}(\permind{\ell-1}{j}),\\
   &\quad \quad \quad \quad \quad 
    \bh_E^{t-1}\left(i,\permind{\ell-1}{j}\right)
    q_{\ell-1}^{-1}\left(\permind{\ell-1}{j}\right)
    \big) \\
    & =  q_{\ell-1}^{-1}\left(\permind{\ell-1}{j}\right)
    \colorupd{\text{UPD}^{t}_E}
   \big(\hv^{t-1}(i),
   \hv^{t-1}(\permind{\ell-1}{j}),
   \he^{t-1}\left(
    i,\permind{\ell-1}{j}\right)
    \big)\\
    & = q_{\ell-1}^{-1}\left(\permind{\ell-1}{j}\right)
    \he^{t}\left(
    i,\permind{\ell-1}{j}\right).
\end{align*}
Again we used the properties of the message and update functions we defined in \cref{sec:io_symm}. And with that, we conclude the proof.
\end{proof}

\begin{theorem}[restated]\label{prop:forward}
Consider a FFNN as per \cref{eq:ffnn} with activation functions respecting the conditions of \Cref{prop:symmetries}. Assume a Bidirectional-\acronym\ with vertex update functions that can express the activation functions $\sigma_\ell$ and their derivatives $\sigma'_\ell$. Further, assume that \acronym\ has access to the inputs $\bx_0$ and the gradients of an (optional) loss function $\loss$ w.r.t. to the output $\nabla_{\bx_L}\loss$, via its positional encodings. Then, \acronym\ can simulate both a forward and a backward pass of the FFNN, by storing pre-activations, post-activations and their gradients at the vertex representations. In particular, to compute the forward pass at the FFNN layer $\ell$, $t\geq\ell$ \acronym\ layers are required, while for the corresponding backward, the requirement is $t\geq 2L - \ell$.
\end{theorem}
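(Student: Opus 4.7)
The plan is to establish the two simulations by induction on the MPNN iteration count $t$, exploiting the iteration-indexed message and update functions: iterations $1,\dots,L$ are configured for forward propagation and iterations $L+1,\dots,2L$ for backward propagation. Positional encodings identify each vertex's layer, letting the update functions act non-trivially only on the layer whose activation/gradient is due at that iteration, while the expressivity hypothesis lets them realise both $\sigma_\ell$ and $\sigma'_\ell$.

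For the forward direction, the inductive invariant I would maintain is that after iteration $t$, every vertex $i$ at layer $\ell = \posl{i} \leq t$ has $\hv^{t}(i)$ encoding the pre-activation $\bz_\ell(i) = \bW_\ell(i,:)\,\bx_{\ell-1} + \bb_\ell(i)$ and the post-activation $\bx_\ell(i) = \sigma_\ell(\bz_\ell(i))$. The base case follows from loading biases at initialisation through \cref{eq:init_sceq_v} and the positional-encoding access to $\bx_0$. For the step at $t = \ell$, the rescale-equivariant message $\resceqnet(\hv^{t-1}(j),\he^{t-1}(i,j))$ can be designed to decode $\bW_\ell(i,j)\,\bx_{\ell-1}(j)$; summing over $j \in \cN_{\text{FW}}(i)$ gives $\bW_\ell(i,:)\,\bx_{\ell-1}$, to which the update function adds the stored bias and applies $\sigma_\ell$. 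A short induction using $\sigma_\ell(\bQ\bz) = \bQ\sigma_\ell(\bz)$ shows $\bx'_\ell = \bP_\ell \bQ_\ell \bx_\ell$, so $\bx_\ell(i)$ transforms exactly like a vertex representation and the storage is compatible with \acronym's scale equivariance.

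For the backward direction, I continue for $L$ more iterations with the invariant that after iteration $L+k$ every vertex $i$ at layer $\ell \geq L-k$ encodes $\nabla_{\bx_\ell(i)}\loss$. The base case $k=0$ holds since output vertices already have $\bx_L$ (from the forward phase) and $\nabla_{\bx_L}\loss$ (from positional encodings). In the step, the backward message from $j \in \cN_{\text{BW}}(i)$ at layer $\ell+1$ combines the stored $\nabla_{\bx_{\ell+1}(j)}\loss$, the factor $\sigma'_{\ell+1}(\bz_{\ell+1}(j))$ computed from the stored pre-activation, and the edge-carried weight $\bW_{\ell+1}(j,i)$; summation over $j$ realises the chain rule $\nabla_{\bx_\ell(i)}\loss = \sum_{j} \bW_{\ell+1}(j,i)\,\sigma'_{\ell+1}(\bz_{\ell+1}(j))\,\nabla_{\bx_{\ell+1}(j)}\loss$.

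The main obstacle in my view is a scaling mismatch for the backward quantities: under \cref{eq:all_symmetry} one computes $\nabla_{\bx_\ell}\loss \mapsto \bP_\ell \bQ_\ell^{-1}\nabla_{\bx_\ell}\loss$, while vertex representations transform with $\bQ_\ell$ rather than $\bQ_\ell^{-1}$. For sign symmetries this is trivial ($\bQ_\ell = \bQ_\ell^{-1}$), but in the positive-scale case one must store the gradient in a compensated form, e.g.\ $\bx_\ell^2 \odot \nabla_{\bx_\ell}\loss$, which scales as $q_\ell$ and still determines the gradient. This is precisely why the \emph{bidirectional} \acronym\ is needed: the inverted backward edges of \cref{eq:init_sceq_e_bw} scale as $q_\ell(i)\,q_{\ell+1}^{-1}(j)$, and their $\resceqnet$ product with the neighbour gradient produces a vector of the correct vertex-scaling, so that the transpose $\bW_{\ell+1}^\top$ demanded by the chain rule can be implemented while every intermediate stays equivariant. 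Carefully tracking these scale factors through the message and update layers is the most delicate part of the argument.
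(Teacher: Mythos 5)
Your overall strategy coincides with the paper's: a two-phase induction (forward invariant ``$\posl{i}\le t$ implies $\hv^t(i)$ stores $\bz_\ell(i),\bx_\ell(i)$'', backward invariant with threshold $2L-\ell$), vertex representations that carry bias, pre-/post-activations and gradient surrogates, positional encodings injecting $\bx_0$ and $\nabla_{\bx_L}\loss$, edges relaying weights, and inverted backward edges. You also correctly isolate the crux: under \cref{eq:all_symmetry} the gradients transform as $\bP_\ell\bQ_\ell^{-1}\nabla_{\bz_\ell}\loss$ while vertex representations must transform as $\bP_\ell\bQ_\ell$, so the raw gradient cannot be stored. The forward half of your argument is essentially the paper's and is fine.

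The gap is in the backward step. The paper resolves the mismatch by storing the \emph{elementwise reciprocals} $(\nabla_{\bz_\ell(i)}\loss)^{-1}$ and $(\nabla_{\bx_\ell(i)}\loss)^{-1}$ (which scale as $q_\ell(i)$), so that each backward message $(\nabla_{\bz_{\ell+1}(j)}\loss)^{-1}\cdot\bW_{\ell+1}^{-1}(j,i)$ is exactly the reciprocal of the chain-rule summand $s_j=\bW_{\ell+1}(j,i)\nabla_{\bz_{\ell+1}(j)}\loss$; it then invokes an explicit expressivity assumption that a composite $g_U\big(\sum_j g_m(s_j^{-1})\big)$ of scale-equivariant maps can realise $\big(\sum_j s_j\big)^{-1}$, which is checked to carry the multiplier $q_\ell(i)$. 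Your compensated form $\bx_\ell^2\odot\nabla_{\bx_\ell}\loss$ has the right scaling, but it does not compose with the neighbourhood aggregation: the per-neighbour product you describe equals $\bx_{\ell+1}^2(j)\,\nabla_{\bx_{\ell+1}(j)}\loss\,/\,\bW_{\ell+1}(j,i)$, which is not an invertible function of $s_j$, and after summing over $j\in\cN_{\text{BW}}(i)$ the chain-rule sum $\sum_j s_j$ can no longer be recovered --- the extraneous $\bx_{\ell+1}^2(j)$ and $\bW_{\ell+1}^{-2}(j,i)$ factors vary per term and cannot be divided out post hoc. You would either need to switch to the paper's reciprocal encoding (or some other per-term-invertible surrogate) together with an explicit assumption on what $g_U\circ\sum\circ\, g_m$ can express, or redesign the message so that each summand is a decodable function of $s_j$ before aggregation. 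A secondary issue: $\bx_\ell^2\odot\nabla_{\bx_\ell}\loss$ destroys the gradient wherever $\bx_\ell(i)=0$, which for ReLU networks happens on a set of positive measure, so the claim that it ``still determines the gradient'' fails precisely in one of the main cases of interest; the paper's reciprocal encoding has its own degeneracy at zero gradients but that is a weaker assumption. Finally, note that the required equivariance of the backward update is guaranteed only because the aggregated backward message already carries the multiplier $q_\ell(i)$; your plan to fix the scaling ``at the update step'' by multiplying with $\bx_\ell^2(i)$ would feed the update function arguments with mismatched multipliers, which the scale-equivariant update of \cref{eq:updv_sceq} is not defined to accept.
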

\begin{proof}
Consider a loss function $\loss:\mathbb{R}^{d_\text{out}} \times \mathbb{R}^{d_\text{out}} \to \mathbb{R}$, which is computed on the output of the FFNN $\loss(\bx_L, \cdot)$. The second argument is optional and is used when we have labelled data - we will omit it in our derivations for brevity.  Our proof strategy is based on two observations: 

\textit{First}, the pre-activations of each layer (denoted here as $\bz_\ell = \bW_\ell \bx_{\ell - 1} + \bb_\ell$) \textbf{have the same symmetries as the biases}. This is straightforward to see by induction:
\begin{align*}
\bz^\prime_0 &=  \bx_0 = \bz_0, \quad \bx^\prime_0 = \bx_0 \tag{base case 1}\\
\bz^\prime_1 &=  \bA_{1}\bW_{1} \bx_0 + \bA_{1}\bb_1 = \bA_{1}\bz_1, \quad \bx^\prime_1 = \sigma_1(\bA_1\bz_1) 
= \phi_{\sigma_1}(\bA_1) \bx_1\tag{base case 2}\\
    \bz^\prime_\ell &=  \bA_{\ell}
\bW_{\ell} \phi_{\sigma_{\ell-1}}\big(\bA_{\ell-1}^{-1}\big)\bx^\prime_{\ell-1} + \bA_{\ell}\bb_\ell 
= \bA_{\ell}
\bW_{\ell} \phi_{\sigma_{\ell-1}}\big(\bA_{\ell-1}^{-1}\big)\phi_{\sigma_{\ell-1}}(\bA_{\ell-1})\bx_{\ell-1} + \bA_{\ell}\bb_\ell\\ 
&= \bA_{\ell}\bz_\ell, \quad  \bx^\prime_\ell = \phi_{\sigma_\ell}(\bA_\ell) \bx_\ell\tag{induction step}
\end{align*}

\textit{Second}, the gradients of $\loss$ w.r.t. the pre-activations of each layer (denoted here as $\nabla_{\bz_\ell}\loss(\bx_L) =  \big(\frac{\partial{\bz_{\ell+1}}}{\partial{\bz_\ell}}\big)^\top \nabla_{\bz_{\ell+1}}\loss(\bx_L)$, where the dependence of $\bx_L$ on $\bz_\ell$ is implied 
- sometimes we will omit the argument $\bx_L$ for brevity
) \textbf{have the inverted + transpose symmetries of the biases}. This is due to the invariance of the output to the symmetries considered:
\begin{align*}
    \bx_L\left(\bz^\prime_\ell\right)  = \bx_L\left(\bz_\ell\right) &\Longleftrightarrow    \loss\left(\bx_L\left(\bz^\prime_\ell\right)\right) = \loss\left(\bx_L\left(\bz_\ell\right)\right)\\
&\Longleftrightarrow    \nabla_{\bz_\ell}\loss\left(\bx_L\left(\bz^\prime_\ell\right)\right) = \nabla_{\bz_\ell}\loss\left(\bx_L\left(\bz_\ell\right)\right)\\
&\Longleftrightarrow  \big(\frac{\partial{\bz^\prime_{\ell}}}{\partial{\bz_\ell}}\big)^\top 
\nabla_{\bz^\prime_\ell}\loss\left(\bx_L\left(\bz^\prime_\ell\right)\right) 
= 
\nabla_{\bz_\ell}\loss\left(\bx_L\left(\bz_\ell\right)\right)\\
&\Longleftrightarrow  \bA_\ell^\top
\nabla_{\bz^\prime_\ell}\loss\left(\bx_L\left(\bz^\prime_\ell\right)\right) 
= 
\nabla_{\bz_\ell}\loss\left(\bx_L\left(\bz_\ell\right)\right)
\Longleftrightarrow  
\nabla_{\bz^\prime_\ell}\loss
= \big(\bA_\ell^\top\big)^{-1}
\nabla_{\bz_\ell}\loss,
\end{align*}
and similarly 
$\nabla_{\bx^\prime_\ell}\loss
= \big(\phi_{\sigma_\ell}(\bA_\ell) ^\top\big)^{-1}
\nabla_{\bx_\ell}\loss$.

Now recall that in the cases we consider in this work, $\phi_{\sigma_\ell}(\bA) = \bA$ and $\bA = \bP \bQ$. Therefore, given that $(\bP^{\top})^{-1} = \bP$ and $(\bQ^{\top})^{-1} = \bQ^{-1}$, we have:
\begin{align}
\bz^\prime_\ell = \bP_\ell \bQ_\ell \bz_\ell,
\quad
\bx^\prime_\ell = \bP_\ell \bQ_\ell\bx_\ell,
\quad
\nabla_{\bz^\prime_\ell}\loss
=\bP_\ell\bQ_\ell^{-1}
\nabla_{\bz_\ell}\loss,
\quad
\nabla_{\bx^\prime_\ell}\loss
= \bP_\ell\bQ_\ell^{-1}
\nabla_{\bx_\ell}\loss.
\end{align}
The latter is what motivates us to enforce vertex representations to have the same symmetries with the biases: \textbf{Assuming sufficient expressive power of \acronym, we can anticipate that vertex representations will be able to reconstruct (1) the pre- or post-activations and (2) the elementwise-inverse of the pre- or post-activation gradients.}

Now recall the formulas that we want to recover:
\begin{align*}
\bz_\ell(i) &= \sum_{j \in \cN_F(i)}\bW_\ell(i,j) \bx_{\ell - 1}(j) + 
\bb_\ell(i) = \sum_{j \in \cN_F(i)}\bW_\ell(i,j) \sigma_{\ell-1}\big(\bz_{\ell - 1}(j)\big) + 
\bb_\ell(i)\\
\bx_\ell(i) &= \sigma_\ell\left(\bz_\ell\left(i\right)\right) = \sigma_\ell\left(\sum_{j \in \cN_F(i)}\bW_\ell(i,j) \bx_{\ell - 1}(j) + 
\bb_\ell(i)\right)\\
\nabla_{\bz_\ell(i)}\loss &=  \Big(\big(\frac{\partial{\bz_{\ell+1}}}{\partial{\bz_\ell}}\big)^\top \nabla_{\bz_{\ell+1}}\loss\Big)(i) 
= \sigma'_{\ell}\big(\bz_\ell(i)\big)\sum_{j \in \cN_B(i)}\bW_\ell(j,i)\nabla_{\bz_{\ell+1}(j)}\loss\\
\nabla_{\bx_\ell(i)}\loss &=  \Big(\big(\frac{\partial{\bz_{\ell+1}}}{\partial{\bx_\ell}}\big)^\top \nabla_{\bz_{\ell+1}}\loss\Big)(i) 
= \sum_{j \in \cN_B(i)}\bW_\ell(j,i)\nabla_{\bz_{\ell+1}(j)}\loss
\end{align*}

Now let us proceed to the proof. We will show that there exists a parameterization of \acronym, such that after $t=\ell$ layers of message-passing, all vertices $i$ with $\posl{i} \leq \ell$ will have stored in a part of their representation the pre-/post-activations for one or more inputs to the datapoint NN.
Additionally, we will show that after $t=2L - \ell$ layers of message-passing, all vertices $i$ with $\posl{i} \geq \ell$ will have stored in a part of their representation the pre-/post-activation gradients for one or more inputs to the datapoint NN.
Therefore, $L$ layers of (forward) message-passing are needed to compute the output of the datapoint NN, and $2L$ layers of (bidirectional) message passing to calculate the gradients of the output/loss w.r.t. the input.

For reasons that will become clearer as we proceed with the proof, we construct the below parameterization of \acronym. First off, recall the input vertex and edge representations are given by:
\begin{align*}
\xv\left(i\right) =
    \begin{cases} 
    1, \posl{i} = 0,\\
     \bb_{\posl{i}}(\posi{i}), \posl{i} \in [L],
    \end{cases}
\end{align*}
\begin{align*}
    \xe(i, j)  = 
    \bW_{\posl{i}}(\posi{i}, \posi{j}), i>j \quad 
    \xe(j,i)  = 
    \b{1}\oslash\bW_{\posl{j}}(\posi{j}, \posi{i}),
    i>j
\end{align*}
\textit{Then, we use the vertex positional encodings to store the information of the input values and the gradients w.r.t the outputs}:
\begin{align*} 
 \pv(i) = \begin{cases}
    \left[\bz_0\left(\posi{i}\right), \bx_0\left(\posi{i}\right)\right], \posl{i} = 0 \\
    [0,0], \posl{i} \in [L-1]\\
    [(\nabla_{\bz_L(\posi{i})}\loss)^{-1}, (\nabla_{\bx_L(\posi{i})}\loss])^{-1}, \posl{i}=L\\
\end{cases}
\end{align*}
\textit{Throughout the network, the vertex representations will be 5-dimensional: we will use one to relay its neuron's bias across \acronym's layers, two for the pre-and post-activations and two for their respective gradients. One can repeat the process for $m$ inputs, in which case the dimension will be $m+1$. The edge representations will be 1-dimensional and will be simply used to relay the NN's weights throughout the network.}

Now, let us define the vertex initialisation functions. 
\begin{align*}
\colorinit{
\text{INIT}_{V,0}
}
\left(\bx, \bp_x\right) &= \text{MLP}\left(\bx, \bp\right) = \left[\bx[1], \bp_x[1], \bp_x[2], 0, 0\right], \\
\colorinit{
\text{INIT}_{V,L}
}
    \left(\bx, \bp_x\right) &= \text{MLP}\left(\bx, \bp_x\right) = \left[\bx[1], 0, 0,\bp_x[1], \bp_x[2]\right]\\
\colorinit{
\text{INIT}_{V}
}
    \left(\bx, \bp\right) &= \augsceqnet\left(\bx, \bp\right) = \left[\bx[1], 0, 0, 0, 0\right].
\end{align*}
Regarding the first two functions, it is obvious that can be expressed by an MLP. Regarding the third one, it can be expressed by $\augsceqnet$ as follows:
$\augsceqnet(\bx, \bp) = \bGamma_x \bx \odot \augscinvnet(\bx, \bp) = [1,0,0,0,0]^\top \bx[1] \odot [1,1,1,1,1]^\top$, where the the constant vector can be expressed by $\augscinvnet$ since it is invariant to the first argument. Further, for all edge initialisation and update functions (forward and backward):
\begin{align*}
&\colorinit{
\text{INIT}_{E,*}
}
    \left(\be, \bp_e\right) = 
    \be, 
    \quad  \colorupd{\text{UPD}^{t}_{E,*}}(\bx, \by, \be)  
    = \be
\end{align*}
which can both be expressed by $\augsceqnet$ using the identity as linear transform and an $\augscinvnet$ with constant output. 

We will not use positional encodings in the message and update functions, therefore we use their definitions of \cref{eq:msg_sceq}, \cref{eq:updv_sceq}. In particular, the forward/backward message functions:
\begin{align*}
\colormsg{\text{MSG}^{t}_{V, \text{FW}}}\Big(\bx, \by, \be\Big) &= \sceqnet
    \left(
    \left[\bx,
    \resceqnet\left(\by,\be\right)
    \right]
    \right) =
    \by[3] \cdot \be[1]\\
\colormsg{\text{MSG}^{t}_{V, \text{BW}}}\Big(\bx, \by, \be\Big) &= g_m(\by[4] \cdot \be[1]),
\end{align*}
where $g(\cdot) = \sceqnet
\left(\cdot\right)$ an arbitrarily expressive scale equivariant function.
In the first case, this can be expressed as follows:
zero linear transform on $\bx$ (since it's unnecessary), identity transform on the output of $\resceqnet$ and a $\scinvnet$ with constant output (ones). Then what remains is  
$\resceqnet\left(\by,\be\right) = \bGamma_y \by \odot \bGamma_e \be = [0, 0, 1, 0, 0]^\top\by \cdot \be[1] = \by[3] \cdot \be[1]$. 

Finally, the vertex update functions:
\begin{align*}
\colorupd{\text{UPD}^{t}_{V,0}}
(\bx, 
\b{m}_{\text{FW}}, 
\b{m}_{\text{BW}})
&= \text{MLP}
(\bx, 
\b{m}_{\text{FW}}, 
\b{m}_{\text{BW}})\\
&= \left[\bx[1], \bx[2], \bx[3], 
(\sigma'_0(\bx[2])^{-1} \cdot 
g_U(\b{m}_{\text{BW}}[1])
,
g_U(\b{m}_{\text{BW}}[1])
\right]\\
\colorupd{\text{UPD}^{t}_{V,L}}
(\bx, 
\b{m}_{\text{FW}}, 
\b{m}_{\text{BW}})
&= \text{MLP}
(\bx, 
\b{m}_{\text{FW}}, 
\b{m}_{\text{BW}}) \\
&= 
\left[\bx[1], 
    \bx[1] + \b{m}_{\text{FW}}[1],
    \sigma_L\left(
    \bx[1] + \b{m}_{\text{FW}}[1]\right),
    \bx[4], \bx[5]\right]\\
\colorupd{\text{UPD}^{t}_{V}}
(\bx, 
\b{m}_{\text{FW}}, 
\b{m}_{\text{BW}})
&= \sceqnet([\bx, \b{m}_{\text{FW}}, \b{m}_{\text{BW}}]) \\
&= \big[\bx[1], 
    \bx[1] + \b{m}_{\text{FW}}[1],
    \sigma\left(
    \bx[1] + \b{m}_{\text{FW}}[1]\right),\\
    &\quad \ (\sigma'(\bx[2])^{-1}\cdot g_U(\b{m}_{\text{BW}}[1]),
    g_U(\b{m}_{\text{BW}}[1])
    \big],
\end{align*}
where $g_U(\cdot) = \sceqnet(\cdot)$ an arbitrarily expressive scale equivariant function.
The first two can obviously be expressed by an MLP (note also that usually $\sigma_0, \sigma_L$ are the identity). Regarding the third, first, we assumed a common activation function $\sigma$ across hidden layers. In case the activation is not shared, a different update function should be used per layer. This is because, as we will see below, the activation and its derivative need to be learned for \acronym\ to simulate the forward and the backward passes. Now, let's see how its element can be expressed by $\sceqnet$. Recall that $\sceqnet(\ba) = \bGamma \ba 
\odot \scinvnet(\ba)$. Here, $\bGamma\in \mathbb{R}^{5 \times 7}$ (7 inputs, 5 outputs). The \textit{first element (which will be the bias}), the \textit{second (pre-activation)} can be expressed by a $\scinvnet$ with constant output (ones) and $\bGamma(1)$ an all-zero vector except for the \textit{1st}  or the \textit{1st} and \textit{6th} 
coordinates respectively.

The \textit{third (post-activation)}, \textit{fourth (pre-activation gradient)} and the \textit{fifth (post-activation gradient)}
 elements are not straightforward to simulate with elementary operations. However, one can observe that all are scale equivariant. In particular, the third:
$\sigma(qx + qy) = q \sigma(x + y)$ (by assumption about the symmetries of the activation function). Additionally, the fourth (assuming a scale equivariant $g_U$):
$\frac{g_U(qy)}{\sigma'(qx)} = \frac{q \cdot g_U(y)}{\sigma'(x)}$ - this is because the derivative of a \textit{scale equivariant} activation function is \textit{scale invariant}.\footnote{$\sigma(qx) = q\sigma(x) \Longrightarrow q\sigma'(qx) = q \sigma'(x) \Longrightarrow \sigma'(qx) =  \sigma'(x)$ for non-zero scalar multipliers.} The fifth is by definition scale equivariant.
\textit{Therefore, if $\sceqnet$ is sufficiently expressive, so as to contain all scale equivariant functions}, then it can express the third to fifth elements.

It remains to establish a few last statements that will be necessary in our induction. \textit{First, the edge representations are constant throughout \acronym\ and equal to the weights of the input NN}:
\begin{align*}
    \he^{0}(i,j) &= \colorinit{\text{INIT}^{t}_{E,*}}
    \left(
    \xe\left(i, j\right), \pe(i,j)
    \right) = \xe\left(i, j\right) = \bW_{\posl{i}}(\posi{i}, \posi{j})\\
    \he^{t}(i,j) &= \colorupd{\text{UPD}^{t}_{E,*}}
   \big(\hv^{
   t-1}(i), \hv^{t-1}(j), \he^{t-1}(i,j)
    \big)  = \he^{t-1}(i,j) = \dots = \he^{0}(i,j).
\end{align*}
\textit{Second, the first element of the vertex representations is constant throughout \acronym\ and equal to the biases of the input NN}:
\begin{align*}
 \hv^{0}(i)[1] &= \colorinit{\text{INIT}^{t}_{V,*}}
    \left(
    \xv\left(i\right), \pv(i)
    \right)[1] = \xv(i)[1] =
    \begin{cases} 
    1, \posl{i} = 0,\\
     \bb_{\posl{i}}(\posi{i}), \posl{i} \in [L].
    \end{cases}\\
\hv^{t}(i)[1] &= \colorupd{\text{UPD}^{t}_{V,*}}
   \big(
   \hv^{t-1}(i), \b{m}_{V, \text{FW}}^{t}(i), \b{m}_{V, \text{BW}}^{t}(i)) = \hv^{t-1}(i)[1] = \dots = \hv^{0}(i)[1]
\end{align*}

Now, our \textit{first induction hypothesis is the following: If for all vertices with $\posl{i} = \ell$ and $t\geq \ell$ we have that  $\hv^{t}(i)[2:3] = \left[\bz_{\ell}(\posi{i}), \bx_{\ell}(\posi{i})\right]$,
then the same should hold for vertices with $\posl{i} = \ell + 1$ when $t\geq \ell + 1$.}
\begin{itemize}
    \item (\textit{Base case}:) If $\posl{i} = 0$:
\begin{align*}
\hv^{t}(i)[2:3] &= \colorupd{\text{UPD}^{t}_{V,0}}
   \big(
   \hv^{t-1}(i), \mv^{t}(i)
    \big)[2:3] \\
    &=  \hv^{t-1}(i)[2:3] = \dots = \hv^{0}(i)[2:3] \\
    & = \colorinit{\text{INIT}_{V,0}}
    \left(
    \xv\left(i\right), \pv(i)
    \right)[2:3]\\
    &= \left[
    \bz_0\left(\posi{i}\right), 
    \bx_0\left(\posi{i}\right)
    \right]
\end{align*}
Therefore, the hypothesis holds for $t\geq 0 = \posl{i}$ for the base case.
\item (\textit{Induction step}:) Suppose the hypothesis holds for vertices with $\posl{i}=\ell-1<L$. Now, if $\posl{i} = \ell$, for $t \geq \ell$ we have:
\begin{align*}
\b{m}_{V, \text{FW}}^{t}(i) &= 
{\bigoplus}_{\colornb{ j \in \cN_F(i)}}
    \colormsg{\text{MSG}^{t}_{V, \text{FW}}}
    \Big(
    \hv^{t-1}(i), \hv^{t-1}(j), \he^{t-1}(i, j)
    \Big) \\
    &= {\sum}_{\colornb{ j \in \cN_F(i)}} \hv^{t-1}(j)[3] \cdot \he^{t-1}(i, j)[1]  \quad (\posl{j}=\ell - 1, t-1\geq \ell-1)\\
    &= 
    {\sum}_{\colornb{ j \in \cN_F(i)}} \bx_{\ell-1}\left(\posi{j}\right) \cdot \bW_\ell(\posi{i}, \posi{j}) \\
\hv^{t}(i)[2:3] &= \colorupd{\text{UPD}^{t}_V}
   \big(
   \hv^{t-1}(i), \mv^{t}(i)
    \big)[2:3] \\
    &= \left[
    \hv^{t-1}(i)[1] + \b{m}_{V, \text{FW}}^{t}(i),
    \sigma\left(
    \hv^{t-1}(i)[1] + \b{m}_{V, \text{FW}}^{t}(i)\right)
    \right]\\
    &=\big[\bb_{\ell}(\posi{i}) + {\sum}_{\colornb{ j \in \cN_F(i)}} \bx_{\ell-1}\left(\posi{j}\right)  \cdot \bW_\ell(\posi{i}, \posi{j}),\\
    &\quad \sigma_\ell\left(
    \hv^{t-1}(i)[1] + \b{m}_{V, \text{FW}}^{t}(i)\right)
    \big]\\
 &=\left[\bz_{\ell}\left(
 \posi{i}
 \right), 
 \sigma_\ell\left(
 \bz_{\ell}\left(
 \posi{i}
 \right)\right)
 \right]
 =[\bz_{\ell}\left(
 \posi{i}
 \right), 
 \bx_{\ell}\left(\posi{i}\right)].
\end{align*}
Similarly for the vertices with $\posl{i} = L$
\end{itemize}
Our \textit{second induction hypothesis is the following: If for all vertices with $\posl{i} = \ell$ and $t\geq 2L-\ell$ we have that  $\hv^{t}(i)[4:5] = \left[(\gradz{\ell}{i})^{-1},
(\gradx{\ell}{i})^{-1}
\right]$,
then the same should hold for vertices with $\posl{i} = \ell - 1$ when $t\geq 2L - \ell + 1$ .}
\begin{itemize}
    \item (\textit{Base case}:) If $\posl{i} = L$:
\begin{align*}
    \hv^{t}(i)[4:5] &= \colorupd{\text{UPD}^{t}_{V,L}}
   \big(
   \hv^{t-1}(i), \mv^{t}(i)
    \big)[4:5] \\
    &=  \hv^{t-1}(i)[4:5] = \dots = \hv^{0}(i)[4:5] \\
    & = \colorinit{\text{INIT}_{V,L}}
    \left(
    \xv\left(i\right), \pv(i)
    \right)[4:5]\\
    &=[(\gradz{L}{i})^{-1}, (\gradx{L}{i})^{-1}]
\end{align*}
Therefore, the hypothesis holds for $t\geq L = \posl{i}$ for the base case.
    \item (\textit{Induction step}:) Suppose the hypothesis holds for vertices with $\posl{i}=\ell+1$. Now, if $\posl{i} = \ell$, for $t \geq 2L-\ell$ we have:
\begin{align*}
\b{m}_{V, \text{BW}}^{t}(i)  &= 
{\bigoplus}_{\colornb{j \in \cN_B(i)}}
    \colormsg{\text{MSG}^{t}_{V, \text{BW}}}
    \Big(
    \hv^{t-1}(i), \hv^{t-1}(j), \he^{t-1}(i, j)
    \Big) \\
    &= {\sum}_{\colornb{j \in \cN_B(i)}} 
g_m(\hv^{t-1}(j)[4] \cdot \he^{t-1}(i, j)[1])
    \ (
    t-1\geq 2L-(\ell+1))\\
    &= 
    {\sum}_{\colornb{j \in \cN_B(i)}} g_m\big((\gradz{\ell+1}{j})^{-1} \cdot \bW^{-1}_\ell(\posi{j}, \posi{i})\big).\\
\hv^{t}(i)[4:5] &= \colorupd{\text{UPD}^{t}_V}
   \big(
   \hv^{t-1}(i), \mv^{t}(i)
    \big)[4:5] \\
    &= \left[ 
\frac{g_U\big(\b{m}_{V,\text{BW}}^{t}(i)\big)}{\sigma'_\ell\left(
    \hv^{t-1}(i)[2]\right)},
    {g_U\big(\b{m}_{V,\text{BW}}^{t}(i)\big)} 
    \right]
   \\
    &=\left[
    \frac{1}{\sigma'_\ell(\bz_{\ell}(\posi{i})
    \gradx{\ell}{i}},
    (\gradx{\ell}{i})^{-1}
    \right]\\
 &=\left[
 (\gradz{\ell}{i})^{-1},
 (\gradx{\ell}{i})^{-1}\right].
\end{align*}
In the last step, we assumed the following: $g_U\big(\sum g_m(x)\big)$ is a sufficiently expressive scale equivariant and permutation invariant function. This is in order to express the following: $(\gradx{\ell}{i})^{-1} = \Big({{\sum}_{\colornb{j \in \cN_B(i)}} 
\gradz{\ell+1}{j} \cdot \bW_\ell(\posi{j}, \posi{i})}\Big)^{-1}$ 
with: 
\begin{align*}
  (\nabla_{q_\ell(i)\bx_\ell(\posi{i})}\mathcal{L})^{-1} &=\Big({\sum
q_{\ell+1}(j)^{-1}\gradz{\ell+1}{j} \cdot \frac{q_{\ell+1}(j)}{q_\ell(i)}\bW_\ell(\posi{j}, \posi{i})}\Big)^{-1} \\
&= \frac{q_\ell(i)}{{\sum}
\gradz{\ell+1}{j} \cdot \bW_\ell(\posi{j}, \posi{i})\big)} \\
&= q_\ell(i)(\nabla_{\bx_\ell(\posi{i})}\mathcal{L})^{-1}.  
\end{align*}
Therefore if all $g_U\big(\sum g_m(qx)\big) = q \cdot g_U\big(\sum g_m(x)\big)$ can be expressed by $g_U, g_m$, then so is the inverse of the gradient.
 
Following a similar rationale we can prove the case for the vertices with $\posl{i} = 0$. 
\end{itemize}
\end{proof}

\end{document}